\documentclass[a4paper,11pt]{article}

\usepackage{a4wide}


\usepackage[utf8]{inputenc}
\usepackage{makeidx}  
\usepackage{amssymb}
\usepackage{amsmath}
\usepackage{amsthm}
\usepackage{mathtools}
\usepackage{stmaryrd}
\usepackage{comment}
\usepackage{float}
\usepackage{etoolbox}
\usepackage{xargs}
\usepackage{pbox}
\usepackage{stmaryrd}
\usepackage{tabularx}
\usepackage{substr}
\usepackage{tikz}
\usetikzlibrary{fit,backgrounds,shapes,positioning,calc,decorations.pathmorphing}
\makeatletter

\pgfarrowsdeclare{ke}{ke}
{
  \@tempdima=0.45pt%
  \advance\@tempdima by.3\pgflinewidth%
  \pgfarrowsleftextend{-3\@tempdima}
  \pgfarrowsrightextend{5\@tempdima}
}
{
  \@tempdima=0.45pt%
  \advance\@tempdima by.3\pgflinewidth%
  \pgfpathmoveto{\pgfpoint{5\@tempdima}{0pt}}
  \pgfpathlineto{\pgfpoint{-3\@tempdima}{4\@tempdima}}
  \pgfpathlineto{\pgfpointorigin}
  \pgfpathlineto{\pgfpoint{-3\@tempdima}{-4\@tempdima}}
  \pgfusepathqfill
}
\makeatother

\tikzset{world/.style={circle, draw, thick, fill=none, minimum size = .65cm, inner sep = 0.2pt, outer sep = 0cm}}
\tikzset{event/.style={rectangle, draw, thick, fill=gray!40, minimum size = .65cm, inner sep=0pt, outer sep=0cm}}
\tikzset{link/.style={draw, -ke}}
\tikzset{lbl/.style={draw=none, inner sep = 0cm, outer sep = 0 cm}}
\tikzset{node distance = 3.5pt} 

\def\wrappad{.9ex}
\tikzset{%
    wrapper/.style 2 args={%
      local bounding box=localbb,
      execute at end scope={ 
      \begin{scope}[on background layer] 
      \node[inner sep = \wrappad,
            draw,
            line width=.15pt,
            fit=(localbb),
            label = {left:#2}
            ] (#1) {}; 
  \end{scope}}}}


\theoremstyle{plain}
\newtheorem{theorem}{Theorem}
\newtheorem{lemma}{Lemma}
\newtheorem{proposition}{Proposition}
\newtheorem{corollary}{Corollary}
\theoremstyle{definition}
\newtheorem{definition}{Definition}
\newtheorem{example}{Example}



\def\props{P} 
\def\lang{L}

\def\m{M} 
\newcommand{\dom}[1]{D(#1)} %

\def\comments{1}
\if\comments 1
  \newcommand{\mba}[1]{{\bfseries\color{purple} [Mikkel: #1]}}
  \newcommand{\tb}[1]{{\bfseries\color{cyan}[Thomas: #1]}}
  \newcommand{\mhj}[1]{{\bfseries\color{orange}[Martin: #1]}}
  \newcommand{\hvd}[1]{{\bfseries\color{teal}[Hans: #1]}}
\else
  \newcommand{\mba}[1]{}
  \newcommand{\tb}[1]{}
  \newcommand{\mhj}[1]{}
  \newcommand{\hvd}[1]{}
\fi


\newcommand{\imp}{\rightarrow}

\newcommand{\T}{\top}

\newcommand{\Dia}{\Diamond}

\renewcommand{\phi}{\varphi}
\newcommand{\union}{\cup}

\newcommand{\inter}{\cap}

\newcommand{\powerset}{{\mathcal P}}

\newcommand{\bisim}{{\raisebox{.3ex}[0mm][0mm]{\ensuremath{\medspace \underline{\! \leftrightarrow\!}\medspace}}}}

\newcommand{\bisrel}{\ensuremath{\mathfrak{R}}}
\DeclareMathOperator{\Min}{\ensuremath{\mathit{Min}}}
\newcommand{\II}[1]{[\![ #1 ]\!]} 

\def\agents{A}

\newcommand{\equivclass}[2]{\ensuremath{[#1]_{#2}}}


\newcommand{\Domain}{D}
\newcommand{\Nat}{\mathbb N}
\newcommand{\Naturals}{\Nat}
\usepackage{url}


\renewcommand{\@}{{\color{red} @}}

\newcommand{\edge}[3]{edge[#3] node[#2] {$#1$}}

\renewcommand{\bisrel}{R}

\begin{document}

\title{Bisimulation and expressivity for conditional belief, degrees of belief, and safe belief}

\author{Mikkel Birkegaard Andersen\thanks{DTU Compute, Technical University of Denmark, \tt{\{mibi,tobo,mhje\}@dtu.dk}},
Thomas Bolander,\\
Hans van Ditmarsch\thanks{LORIA, CNRS / Universit\'{e} de Lorraine, \tt{hans.van-ditmarsch@loria.fr}}, and 
Martin Holm Jensen
}

\date{\today}

\maketitle

	\begin{abstract}
Plausibility models are Kripke models that agents use to reason about knowledge and belief, both of themselves and of each other. Such models are used to interpret the notions of conditional belief, degrees of belief, and safe belief. The logic  of conditional belief contains that modality and also the knowledge modality, and similarly for the logic of degrees of belief and the logic of safe belief. With respect to these logics, plausibility models may contain too much information. A proper notion of bisimulation is required that characterises them. We define that notion of bisimulation and prove the required characterisations: on the class of image-finite and preimage-finite models (with respect to the plausibility relation), two pointed Kripke models are modally equivalent in either of the three logics, if and only if they are bisimilar. As a result, the information content of such a model can be similarly expressed in the logic of conditional belief, or the logic of degrees of belief, or that of safe belief. This, we found a surprising result. Still, that does not mean that the logics are equally expressive: the logics of conditional and degrees of belief are incomparable, the logics of degrees of belief and safe belief are incomparable, while the logic of safe belief is more expressive than the logic of conditional belief. In view of the result on bisimulation characterisation, this is an equally surprising result. We hope our insights may contribute to the growing community of formal epistemology and on the relation between qualitative and quantitative modelling.
  \end{abstract}


\setcounter{footnote}{0}

\section{Introduction}\label{sect:motivation}

A typical approach in belief revision involves preferential orders to express degrees of belief and knowledge \cite{klm:1990,meyeretal:2000}. This goes back to the `systems of spheres' in \cite{lewis:1973,grove:1988}. Dynamic doxastic logic was proposed and investigated in \cite{segerberg:1998} in order to provide a link between the (non-modal logical) belief revision and modal logics with explicit knowledge and belief operators. A similar approach was pursued in belief revision in dynamic epistemic logic \cite{aucher:2005a,hvd.prolegomena:2005,jfak.jancl:2007,balt.ea:qual,hvdetal.hw:2007}, that continues to develop strongly \cite{BritzVarzinczak2013,jfak.book:2011}. We focus on the proper notion of structural equivalence on models encoding knowledge and belief simultaneously. A prior investigation into that is \cite{demey:2011}, which we relate our results to at the end of the paper. Our motivation is to find suitable structural notions to reduce the complexity of solving planning problems. Solutions to planning problems are sequences of actions, such as iterated belief revision. It is the dynamics of knowledge and belief that, after all, motivates our research.

	The semantics of belief depends on the structural properties of models. To relate the structural properties of models to a logical language we need a notion of structural similarity, known as bisimulation. A bisimulation relation relates a modal operator to an accessibility relation. Plausibility models do not have an accessibility relation as such but a plausibility relation. This induces a set of accessibility relations: the {\em most plausible} worlds are the {\em accessible} worlds for the modal belief operator; and the {\em plausible} worlds are the {\em accessible} worlds for the modal knowledge operator. But it contains much more information: to each modal operator of conditional belief (or of degree of belief) one can associate a possibly distinct accessibility relation.This raises the question of how to represent the bisimulation conditions succinctly. Can this be done by reference to the plausibility relation directly, instead of by reference to these, possibly many, induced accessibility relations? It is now rather interesting to observe that relative to the modalities of knowledge and belief, the plausibility relation is already in some way too rich.
	
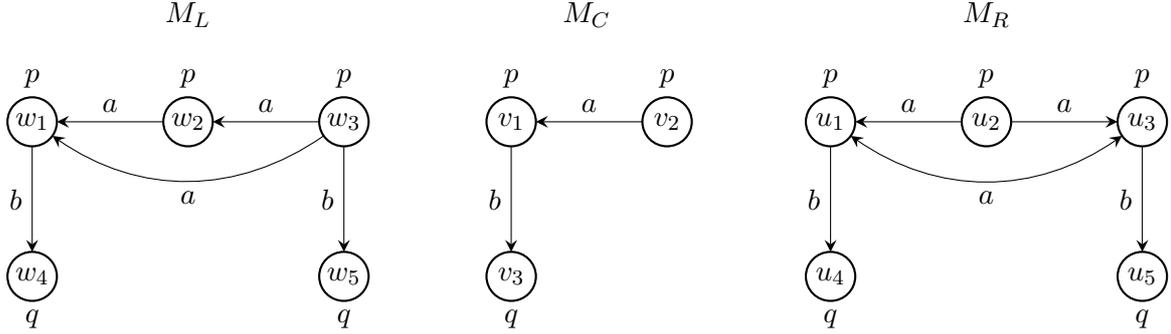
\begin{figure}
		\centering
		\def\worldX{1.4cm}
	  	\begin{tikzpicture}
	  		\begin{scope}[xshift = 0cm]
		  		\node[world] (w1) {$w_1$}
		  			node[lbl, above = of w1] {$p$};
		  		\node[world, right = \worldX of w1] (w2) {$w_2$}
		  			node[lbl, above = of w2] {$p$};
		  		\node[world, right = \worldX of w2] (w3) {$w_3$}
		  			node[lbl, above = of w3] {$p$};
		  		\node[world, below = \worldX of w1] (w4) {$w_4$}
		  			node[lbl, below = of w4] {$q$};
		  		\node[world, below = \worldX of w3] (w5) {$w_5$}
		  			node[lbl, below = of w5] {$q$};
				\path[link] (w3) \edge{a}{above}{} (w2);
				\path[link] (w2) \edge{a}{above}{} (w1);
				\path[link] (w3) \edge{a}{below}{bend left=35} (w1);
				\path[link] (w1) \edge{b}{left}{} (w4);
				\path[link] (w3) \edge{b}{left}{} (w5);

			\end{scope}
	  		\begin{scope}[xshift = 6.3cm]
		  		\node[world] (v1) {$v_1$}
		  			node[lbl, above = of v1] {$p$};
		  		\node[world, right = \worldX of v1] (v2) {$v_2$}
		  			node[lbl, above = of v2] {$p$};
		  		\node[world, below = \worldX of v1] (v3) {$v_3$}
		  			node[lbl, below = of v3] {$q$};
				\path[link] (v2) \edge{a}{above}{} (v1);
				\path[link] (v1) \edge{b}{left}{} (v3);
			\end{scope}
			\begin{scope}[xshift = 10.5cm]
		  		\node[world] (u1) {$u_1$}
		  			node[lbl, above = of u1] {$p$};
		  		\node[world, right = \worldX of u1] (u2) {$u_2$}
		  			node[lbl, above = of u2] {$p$};
		  		\node[world, right = \worldX of u2] (u3) {$u_3$}
		  			node[lbl, above = of u3] {$p$};
		  		\node[world, below = \worldX of u1] (u4) {$u_4$}
		  			node[lbl, below = of u4] {$q$};
		  		\node[world, below = \worldX of u3] (u5) {$u_5$}
		  			node[lbl, below = of u5] {$q$};
				\path[link] (u2) \edge{a}{above}{} (u3);
				\path[link] (u2) \edge{a}{above}{} (u1);
				\path[link] (u1) \edge{b}{left}{} (u4);
				\path[link] (u3) \edge{b}{left}{} (u5);
				\path[link,ke-ke] (u1) \edge{a}{below}{bend right = 35} (u3);
			\end{scope}
			
			\node[above = 0.8cm of w2] (lblw) {$M_L$}; 			
			\node[above = 0.8cm of u2] (lblu) {$M_R$}; 
			\path (lblw) -- (lblu) node[midway] (lblv) {$M_C$};

		\end{tikzpicture}
		\caption{An arrow $x \rightarrow y$ labelled by $a$ means $x \geq_a y$; agent $a$ considers $y$ at least as plausible as $x$. We use $x >_a y$ to mean $x \geq_a y$ and $y \not\geq_a x$. 
		 Here $w_2 >_a w_1$, so $w_1$ is strictly more plausible than $w_2$. Reflexive edges are omitted. Unlisted propositions are false.}
		\label{fig:intro:models}
	\end{figure}

The plausibility model $M_L$ on the left in Figure \ref{fig:intro:models} consists of five worlds. The proposition $p$ is true in the top ones and false in the bottom ones. The reverse holds for $q$: true at the bottom and false at the top. The $a$ relations in the model correspond to the plausibility order $w_3 >_a w_2 >_a w_1$, interpreted such that the smaller of two elements in the order is the most plausible of the two. Further, everything that is comparable with the plausibility order is considered epistemically possible. Hence, the epistemic equivalence classes for agent $a$ in $M_L$ are $\{w_1,w_2,w_3\}$, $\{w_4\}$ and $\{w_5\}$. We can then view the model as a standard multi-agent $S5$ model plus an ordering on the epistemic possibilities. As $w_1$ is the most plausible world for $a$ in the equivalence class $\{ w_1, w_2, w_3 \}$, she will in $w_3$ believe $p$ and that $b$ believes $\neg p \land q$. This works differently from the usual doxastic modal logic, where belief corresponds to the accessibility relation. In the logics of belief that we study, belief is what holds in the most plausible world(s) in an epistemic equivalence class. For $a$, the most plausible world in her equivalence class $\{w_1,w_2,w_3\}$ is $w_1$, so $a$ believes the same formulas in all of them. 

In $w_2$ agent $b$ knows $p$. If $a$ is given the information that $b$ does not consider $q$ possible (that is, the information that neither $w_1$ nor $w_3$ is the actual world), then $a$ believes that $b$ knows $p$ -- or conditional on $K_b \neg q$, $a$ believes $K_b p$. Such a statement is an example of the logic of conditional belief $\lang^C$ defined in Section \ref{section:semantics}. In $\lang^C$ we write this statement as $B^{K_b \neg q}_a K_b p$.

Now examine $w_3$. We will show that $w_1$ and $w_3$ are modally equivalent for $\lang^C$: they agree on all formulas of that language---no information expressible in $L^C$ distinguishes the two worlds. This leads to the observation that no matter where we move $w_3$ in the plausibility ordering for $a$, modal equivalence is preserved. Similarly, we can move $w_2$ anywhere we like \emph{except} making it more plausible than $w_1$. If we did, then $a$ would believe $K_b p$ unconditionally, and the formulas true in the model would have been changed.

It turns out that moving worlds about in the plausibility order can be done for all models, as long as we obey one (conceptually) simple rule: Grouping worlds into ``modal equivalence classes'' of worlds modally equivalent to each other, we are only required to preserve the ordering between the \emph{most} plausible worlds in each modal equivalence class. \emph{Only the most plausible world in each class matters}.

Another crucial observation is that standard bisimulation in terms of $\geq_a$ does not give correspondence between bisimulation and modal equivalence. For instance, while $w_1$ and $w_3$ are modally equivalent, they are not ``standardly'' bisimilar with respect to $\geq_a$: $w_3$ has a $\geq_a$-edge to a $K_b p$ world ($w_2$), whereas $w_1$ does not. Thus, the straightforward, standard definition of bisimulation does not work, because no modality in the language corresponds to the plausibility relation. Instead we have an infinite set of modalities corresponding to relations derived from the plausibility relation. One of the major contributions of this paper is a solution to exactly this problem.

Making $w_3$ as plausible as $w_1$ and appropriately renaming worlds gets us $\m_R$ of Figure \ref{fig:intro:models}. Here the modally equivalent worlds $u_1$ and $u_3$ are equally plausible, modally equivalent \emph{and} standardly bisimilar. This third observation gives a sense of how we solve the problem generally. Rather than using $\geq_a$ directly, our definition of bisimulation checks accessibility with respect to a relation $\geq_a^R$ derived from $\geq_a$ and the bisimulation relation $R$ itself. Postponing details for later we just note that in the present example the derived relation for $M_L$ is exactly the plausibility relation for $M_R$. This indicates what we later prove: This new derived relation reestablishes the correspondence between bisimilarity and modal equivalence.

The model $\m_C$ of Figure \ref{fig:intro:models} is the bisimulation contraction of the right model using standard bisimilarity. It is the bisimulation contraction of both models with the bisimulation notion informally defined in the previous paragraph. In previous work on planning with single-agent plausibility models \cite{dontplanfortheunexpected}, finding contractions of plausibility models is needed for decidability and complexity results. In this paper we do this for the first time for multi-agent plausibility models, opening new vistas in applications of modal logic to automated planning.

\paragraph*{Overview of content}
In Section \ref{section:plaus} we introduce plausibility models and the proper and novel notion of bisimulation on these models, and prove various properties of bisimulation. In Section \ref{section:semantics} we define the three logics of conditional belief, degrees of belief, and safe belief, and provide some further historical background on these logics. In Section \ref{section:corr} we demonstrate that bisimilarity corresponds to logical equivalence (on image-finite and preimage-finite models) for all three core logics, so that, somewhat remarkably, one could say that the content of a given model can equally well be described in any of these logics. Then, in Section \ref{section:expressivity} we determine the relative expressivity of the three logics, including more expressive combinations of their primitive modalities. The main result here is that the logics of conditional and degrees of belief are incomparable, and that the logics of degrees of belief and safe belief are incomparable, but that the logic of safe belief is (strictly) more expressive than the logic of conditional belief. In Section \ref{section:comparison}, we put our result in the perspective of other recent investigations, mainly the study by Lorenz Demey \cite{demey:2011}, and in the perspective of possible applications: decidable planning.

\section{Plausibility models and bisimulation} \label{section:plaus}

A \emph{well-preorder} on a set $X$ is a reflexive and transitive binary relation $\trianglerighteq$ on $X$ such that every non-empty subset has $\trianglerighteq$-minimal elements. The set of \emph{minimal elements} (for $\trianglerighteq$) of some $Y \subseteq X$ is the set $\Min_{\trianglerighteq} Y$ defined as $\{ y \in Y \mid y' \trianglerighteq y \text{ for all } y' \in Y \}$.\footnote{This notion of minimality is non-standard and taken from \cite{balt.ea:qual}. Usually a minimal element of a set is an element that is not greater than any other element.} As any two-element subset $Y = \{x,y\}$ of $X$ also has minimal elements, we have that $x \trianglerighteq y$ or $y \trianglerighteq x$. Thus all elements in $X$ are $\trianglerighteq$-comparable. 

Given any binary relation $R$ on $X$, we use $R^=$ to denote the reflexive, symmetric, and transitive closure of $R$ (the equivalence closure of $R$). For any equivalence relation $R$ on $X$, we write $[x]_R$ for $\{x' \in X \mid (x,x') \in R\}$. A binary relation $R$ on $X$ is \emph{image-finite} if and only if for every $x \in X$, $\{ x' \in X \mid (x,x') \in R\}$ is finite. A relation is \emph{preimage-finite} if and only if for every $x \in X$, $\{ x' \in X \mid (x',x) \in R\}$ is finite. We say $R$ is \emph{(pre)image-finite} if it is both image-finite and preimage-finite. We often write $xRy$ instead of $(x,y) \in R$. Given subsets $Y,Z \subseteq X$, we define $Y R Z$ if and only if $y R z$ for all $y \in Y$ and all $z \in Z$. 

\begin{definition}[Plausibility model]
	A \emph{plausibility model} for a countably infinite set of propositional symbols $\props$ and a finite set of agents $\agents$ is a tuple $\m = (W, \geq, V)$, where
\begin{itemize}
	\item $W$ is a set of \emph{worlds} called the \emph{domain}, denoted $D(\m)$;
	\item $\geq: \agents \imp \powerset(W \times W)$ is a function mapping each $a \in \agents$ into a \emph{plausibility relation} $\geq\!\!(a)$, usually abbreviated $\geq_a$.  For each $a \in \agents$ and $w \in W$, $\geq_a$ is a well-preorder on the set $\{ w' \in W \mid w \geq_a w' \text{ or } w' \geq_a w \}$. Each $\geq_a$ is required to be (pre)image-finite;
\item $V: W \to 2^\props$ is a \emph{valuation}.
\end{itemize}
For $w \in W$, $(\m, w)$ is a \emph{pointed plausibility model}. 
\end{definition}
If $w \geq_a v$ then $v$ is {\em at least as plausible} as $w$ (for agent $a$), and the $\geq_a$-minimal elements are the {\em most plausible} worlds. For the symmetric closure of $\geq_a$ we write $\sim_a$: this is an equivalence relation on $W$ called the \emph{epistemic relation} (for agent $a$). If $w \geq_a v$ but $v \not\geq_a w$ we write $w >_a v$ ($v$ is {\em more plausible} than $w$), and for $w \geq_a v$ and $v \geq_a w$ we write $w \simeq_a v$ ($w$ and $v$ are {\em equiplausible}). Instead of $w \geq_a v$ ($w >_a v$) we may write $v \leq_a w$ ($v <_a w$).

Note that we have required each relation $\geq_a$ to be (pre)image-finite. This amounts to requiring that all equivalence classes of $\sim_a$ are finite, while still allowing infinite domains. This requirement is not part of the definition of plausibility models provided in~\cite{balt.ea:qual}. We require it here, since it leads to simplifications without any significant reduction in generality: 
\begin{enumerate}
  \item We will show full correspondence between bisimilarity and modal equivalence for three different logics over plausibility models. As is the case in standard modal logic, this correspondence can only be achieved for (pre)image-finite models (the direction from modal equivalence to bisimilarity only hold for such models, see e.g.\ \cite{blac.ea:moda}). Simply assuming (pre)image-finiteness from the outset simplifies the presentation, as we do not have to repeat this restriction in a large number of places.
  \item Some of our later results are going to rely on the existence of a largest autobisimulation (see below). Usually it is quite trivial to show the existence of a largest bisimulation, since the union of any set of bisimulations is a bisimulation. However, we need a non-standard notion of bisimulation for our purposes, and for such bisimulations closure under union is far from a trivial result.\footnote{Without the restriction to (pre)image-finite models, we were unable to prove the existence of a largest bisimulation. We leave this challenge open to future research(ers).} Given our first correspondence result between bisimilarity and modal equivalence, we are however going to get this result for free (see Section~\ref{section:ccond}). 
\end{enumerate}  
    
We now proceed to define a notion of autobisimulation on a plausibility model. This notion is non-standard, because there is no one-to-one relation between the plausibility relation for an agent and a modality for that agent in the logics defined later. In the definition below (and from now on), we allow ourselves some further notational abbreviations. Let $\m = (W,\geq,V)$ denote a plausibility model. Let $a\in\agents$ and $w\in W$, then we write $[w]_a$ instead of $[w]_{\sim_a}$. Now let $Z\subseteq [w]_a$, then we write $\Min_a Z$ instead of $\Min_{\geq_a} Z$. For any binary relation $R$ on $W$, we write $w \geq_a^R v$ for $\Min_a ([w]_{R^=} \cap [w]_a) \geq_a \Min_a ([v]_{R^=} \cap [v]_a)$. When $w \geq_a^R v$ and $v \geq_a^R w$, we write $w \simeq_a^R v$.
\begin{definition}[Autobisimulation]\label{def:autobisim}
	Let $\m = (W,\geq,V)$ be a plausibility model. An \emph{autobisimulation} on $\m$ is a non-empty relation $\bisrel \subseteq W \times W$ such that for all $(w, w') \in \bisrel$ and for all $a \in \agents$:
\begin{description}
	\item[{[atoms]}] $V(w) = V(w')$;
	\item[{[forth$_\geq$]}] If $v \in W$ and $w \geq_a^\bisrel v$, there is a $v' \in W$ such that $w' \geq_a^\bisrel v'$ and $(v,v') \in \bisrel$;
	\item[{[back$_\geq$]}] If $v' \in W$ and $w' \geq_a^\bisrel v'$, there is a $v \in W$ such that $w \geq_a^\bisrel v$ and $(v,v') \in \bisrel$;
	\item[{[forth$_\leq$]}] If $v \in W$ and $w \leq_a^\bisrel v$, there is a $v' \in W$ such that $w' \leq_a^\bisrel v'$ and $(v,v') \in \bisrel$;
	\item[{[back$_\leq$]}] If $v' \in W$ and $w' \leq_a^\bisrel v'$, there is a $v \in W$ such that $w \leq_a^\bisrel v$ and $(v,v') \in \bisrel$.
\end{description}
A \emph{total autobisimulation} on $\m$ is an autobisimulation with $W$ as both domain and codomain.
\end{definition}
Our bisimulation relation is non-standard in the [back] and [forth] clauses.  A standard [forth] condition based on an accessibility relation $\geq_a$ would be \begin{quote} If $v \in W$ and $w \geq_a v$, there is a $v' \in W'$ such that $w' \geq_a v'$ and $(v,v') \in R$. \end{quote} Here, $R$ only appears in the part `$(v,v') \in R$'. But in the definition of autobisimulation for plausibility models, in [forth$_\geq$], the relation $R$ also features in the condition for applying [forth$_\geq$] and in its consequent, namely as the upper index in $w \geq_a^R v$ and $w' \geq_a^R v'$.  This means that $R$ also determines which $v$ are accessible from $w$, and which $v'$ are accessible from $w'$. This explains why we define an autobisimulation on a single model before a bisimulation between distinct models: We need the bisimulation relation $R$ to determine the plausibility relation $\geq_a^R$ from the plausibility relation $\geq_a$ on any given model first, before structurally comparing distinct models. 

	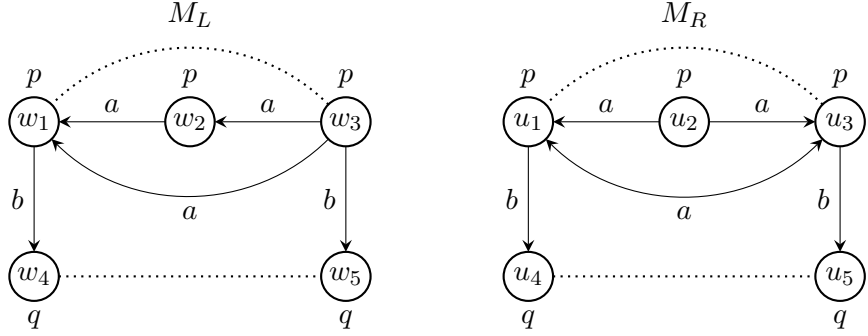
\begin{figure}[h]
		\centering
		\def\worldX{1.4cm}
	  	\begin{tikzpicture}
			\node[above = 0.8cm of w2] (lblw) {$M_L$}; 	
	  		\begin{scope}[xshift = 0cm]
		  		\node[world] (w1) {$w_1$}
		  			node[lbl, above = of w1] {$p$};
		  		\node[world, right = \worldX of w1] (w2) {$w_2$}
		  			node[lbl, above = of w2] {$p$};
		  		\node[world, right = \worldX of w2] (w3) {$w_3$}
		  			node[lbl, above = of w3] {$p$};
		  		\node[world, below = \worldX of w1] (w4) {$w_4$}
		  			node[lbl, below = of w4] {$q$};
		  		\node[world, below = \worldX of w3] (w5) {$w_5$}
		  			node[lbl, below = of w5] {$q$};
				\path[link] (w3) \edge{a}{above}{} (w2);
				\path[link] (w2) \edge{a}{above}{} (w1);
				\path[link] (w3) \edge{a}{below}{bend left=45} (w1);
				\path[link] (w1) \edge{b}{left}{} (w4);
				\path[link] (w3) \edge{b}{left}{} (w5);
				
				\draw[thick, dotted] (w3)  to [bend left=-45] (w1);
				\draw[thick, dotted] (w4) to (w5);
			\end{scope}
				\begin{scope}[xshift = 6.5cm]
				\node[world] (u1) {$u_1$}
		  			node[lbl, above = of u1] {$p$};
		  		\node[world, right = \worldX of u1] (u2) {$u_2$}
		  			node[lbl, above = of u2] {$p$};
		  		\node[world, right = \worldX of u2] (u3) {$u_3$}
		  			node[lbl, above = of u3] {$p$};
		  		\node[world, below = \worldX of u1] (u4) {$u_4$}
		  			node[lbl, below = of u4] {$q$};
		  		\node[world, below = \worldX of u3] (u5) {$u_5$}
		  			node[lbl, below = of u5] {$q$};
				\path[link] (u2) \edge{a}{above}{} (u3);
				\path[link] (u2) \edge{a}{above}{} (u1);
				\path[link] (u1) \edge{b}{left}{} (u4);
				\path[link] (u3) \edge{b}{left}{} (u5);
				\path[link,ke-ke] (u1) \edge{a}{below}{bend right = 45} (u3);
				
				\draw[thick, dotted] (u3)  to [bend left=-45] (u1);
				\draw[thick, dotted] (u4) to (u5);
			\end{scope}
			\node[above = 0.8cm of u2] (lblw) {$M_R$}; 	
		\end{tikzpicture}
		\caption{The left and right models of Figure \ref{fig:intro:models}, with the dotted lines showing the largest autobisimulations (modulo reflexivity).}
		\label{fig:bisim:autobisim}
	\end{figure}
\begin{example}
The models $\m_L$ and $\m_R$ of Figure~\ref{fig:intro:models} are reproduced in Figure~\ref{fig:bisim:autobisim}. Consider the relation $R = R_{id} \cup \{(w_1,w_3),(w_3,w_1), (w_4,w_5),(w_5,w_4)\}$, where $R_{id}$ is the identity relation on $W$.
With this $R$, we get that $w_1$ and $w_3$ are equiplausible for $\geq_a^R$:
\begin{align*}
	w_1 &\simeq^R_a w_3 \text{ iff } \\
	\Min_{a} ([w_1]_{\bisrel^=} \cap [w_1]_{a}) &\simeq_a \Min_{a}([w_3]_{\bisrel^=} \cap [w_3]_{a}) \text{ iff } \\
	\Min_{a} \{w_1,w_3\} &\simeq_a \Min_{a} \{w_1,w_3\} \text{ iff } \\
	w_1 &\simeq_a w_1
\end{align*}
We also get that $w_2 \geq_a^R w_3$:
\begin{align*}
	w_2 &\geq_a^R w_3 \text{ iff } \\
	\Min_a ([w_2]_{\bisrel^=} \cap [w_2]_a) &\geq_a \Min_a([w_3]_{\bisrel^=} \cap [w_3]_a) \text{ iff } \\
	\Min_a \{w_2\} &\geq_a \Min_a \{w_1,w_3\} \text{ iff } \\
	w_2 &\geq_a w_1
\end{align*} This gives ${\geq_a^R} = \{ (w_1, w_3), (w_3, w_1), (w_2, w_3), (w_2, w_1) \} \union R_{id}$.  For $b$, we get ${\geq_b^R} = {\geq_b}$. The autobisimulation $R$ on $M_L$ is shown in Figure \ref{fig:bisim:autobisim}. It should be easy to check that $R$ is indeed an autobisimulation. To help, we will justify why $(w_4, w_5)$ is in $R$: For $\geq_b^R$, we have that, as $(w_1,w_3) \in R$ and $w_1 \geq_b^R w_4$, there must be a world $v$ such that $w_3 \geq_b^R v$ and $(w_4,v) \in \bisrel$. This $v$ is $w_5$.

Note that $R$ is the largest autobisimulation. Based on [atoms] there are only two possible candidate pairs that could potentially be added to $R$ (modulo symmetry), namely $(w_1,w_2)$ and $(w_2, w_3)$. But $w_2$ does not have a $b$-edge to a $q$ world, whereas both $w_1$ and $w_3$ do. There is therefore nothing more to add.

The largest autobisimulation for $M_R$ is completely analogous, as shown in Figure~\ref{fig:bisim:autobisim}.
\end{example}
\begin{lemma}\label{lemma:equiplausr} Let $\m = (W, \geq, V)$ be a plausibility model and $R$ a binary relation on $W$. If $(w,w') \in R^=$ and $w \sim_a w'$ then $w \simeq_a^R w'$.
\end{lemma}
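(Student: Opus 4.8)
The plan is to unfold the definition of $\simeq_a^R$ and observe that, under the two hypotheses, both inequalities $w \geq_a^R w'$ and $w' \geq_a^R w$ reduce to a single comparison of a set of $\geq_a$-minimal elements with itself. Recall that $w \geq_a^R w'$ means $\Min_a([w]_{R^=} \cap [w]_a) \geq_a \Min_a([w']_{R^=} \cap [w']_a)$, where $Y \geq_a Z$ abbreviates that $y \geq_a z$ holds for all $y \in Y$ and all $z \in Z$.

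First I would use the two hypotheses to collapse the two sets appearing in the definition. Since $R^=$ is an equivalence relation and $(w,w') \in R^=$, we have $[w]_{R^=} = [w']_{R^=}$; since $\sim_a$ is an equivalence relation and $w \sim_a w'$, we have $[w]_a = [w']_a$. Consequently $[w]_{R^=} \cap [w]_a = [w']_{R^=} \cap [w']_a$; call this common set $Z$. By reflexivity of both $R^=$ and $\sim_a$ we have $w \in Z$, so $Z$ is non-empty. Write $M = \Min_a Z$.

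With this reduction, $w \geq_a^R w'$ and $w' \geq_a^R w$ both say exactly $M \geq_a M$, so it suffices to prove $M \geq_a M$, i.e.\ $x \geq_a y$ for all $x,y \in M$. Here I would invoke the definition of minimal elements together with the fact that $Z \subseteq [w]_a$, on which $\geq_a$ is a well-preorder: this guarantees $M \neq \emptyset$, and every element of $M$ is $\geq_a$-below every element of $Z$. Taking any $x,y \in M \subseteq Z$, minimality of $x$ gives $y \geq_a x$ (as $y \in Z$) and minimality of $y$ gives $x \geq_a y$ (as $x \in Z$); in particular $x \geq_a y$. Since this holds for arbitrary $x,y \in M$, we conclude $M \geq_a M$, hence $w \simeq_a^R w'$.

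The argument is short, and I do not expect a genuine obstacle: the only two points to get right are (i) that the two intersected sets literally coincide, so that the two directions of $\simeq_a^R$ become the same statement---this relies on both $R^=$ and $\sim_a$ being equivalence relations---and (ii) the elementary fact that any two $\geq_a$-minimal elements of a set are mutually comparable. The well-preorder assumption is needed only to ensure $M$ is non-empty; even were $M$ empty, $M \geq_a M$ would hold vacuously.
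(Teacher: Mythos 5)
Your proposal is correct and follows the same route as the paper: both hypotheses collapse the two sets $[w]_{R^=}\cap[w]_a$ and $[w']_{R^=}\cap[w']_a$ into a single set, whence the two $\Min_a$-sets coincide and $w \simeq_a^R w'$ follows. The only difference is that you spell out the final step ($M \geq_a M$ for a set of minimal elements) which the paper's proof leaves as ``immediately implying''; this is a harmless and correct elaboration.
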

\begin{proof}
From $(w,w') \in R^=$ and $w \sim_a w'$ we get $[w]_{R^=} = [w']_{R^=}$ and $[w]_a = [w']_a$ and hence $[w]_{R^=} \cap [w]_a = [w']_{R^=} \cap [w']_a$. Thus also $\Min_a ([w]_{R^=} \cap [w]_a) = \Min_a ([w']_{R^=} \cap [w']_a)$, immediately implying $w \simeq_a^R w'$.
\end{proof}
Let $\m = (W,\geq,V)$ be a plausibility model. By definition, for each agent $a$, $\geq_a$ is a well-preorder on each $\sim_a$-equivalence class. The following result shows that the same holds for $\geq_a^R$ where $R$ is \emph{any} binary relation.
\begin{lemma} \label{lemma:wellr}
Let $\m = (W,\geq,V)$ be a plausibility model and $R$ a binary relation on $\m$. 
Then $\geq_a^R$ is a well-preorder on each $\sim_a$-equivalence class.
\end{lemma}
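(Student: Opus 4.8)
The plan is to fix an agent $a$ and a single $\sim_a$-equivalence class $C$ and verify, for $\geq_a^R$ restricted to $C$, the three ingredients of a well-preorder: reflexivity, transitivity, and the existence of $\geq_a^R$-minimal elements in every non-empty subset. Throughout I would abbreviate $M_w := \Min_a([w]_{R^=} \cap [w]_a)$ for $w \in C$, and begin by recording the basic facts I intend to reuse. Since $w \in C$ gives $[w]_a = C$, each $M_w$ is a subset of $C$; it is moreover non-empty, because $w \in [w]_{R^=} \cap [w]_a$ makes this a non-empty subset of $C$, and $\geq_a$ (a well-preorder on $C$) endows it with minimal elements. I would also use two elementary consequences of the well-preorder assumption from Section~\ref{section:plaus}: any two worlds of $C$ are $\geq_a$-comparable, and the $\geq_a$-minimal elements of any subset of $C$ are pairwise equiplausible. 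With these in hand, the definition unwinds to: for $w,v \in C$, $w \geq_a^R v$ holds iff $m \geq_a n$ for all $m \in M_w$ and $n \in M_v$.

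Reflexivity and transitivity I expect to be routine. For reflexivity I would observe that $w \geq_a^R w$ asks only that the elements of $M_w$ be pairwise $\geq_a$-related, which is exactly their equiplausibility. For transitivity, assuming $w \geq_a^R v$ and $v \geq_a^R u$ in $C$, I would take arbitrary $m \in M_w$ and $n \in M_u$, route through any fixed $k \in M_v$ (available since $M_v \neq \emptyset$), and conclude $m \geq_a k \geq_a n$, hence $m \geq_a n$, by transitivity of $\geq_a$.

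The heart of the argument, and the step I expect to be the main obstacle, is showing that every non-empty $Y \subseteq C$ has a $\geq_a^R$-minimal element; the plan is to transfer the question to $\geq_a$ itself. I would form $U := \bigcup_{y \in Y} M_y \subseteq C$, which is non-empty, and select a $\geq_a$-minimal element $m$ of $U$ (available because $\geq_a$ is a well-preorder on $C$); then $m \in M_{y_0}$ for some $y_0 \in Y$. The claim to prove is that this $y_0$ is $\geq_a^R$-minimal in $Y$, i.e.\ $y \geq_a^R y_0$ for all $y \in Y$. To see it, I would fix $y \in Y$ and arbitrary $m_y \in M_y$, $n \in M_{y_0}$: since $m_y \in U$, minimality of $m$ gives $m_y \geq_a m$; since $m, n$ are both minimal elements of $[y_0]_{R^=} \cap [y_0]_a$ they are equiplausible, so $m \geq_a n$; and transitivity yields $m_y \geq_a n$, which is precisely $y \geq_a^R y_0$.

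Finally I would note that totality of $\geq_a^R$ on $C$ comes for free: applied to a two-element subset $\{w,v\}$, the minimal-element property forces $w \geq_a^R v$ or $v \geq_a^R w$, just as for well-preorders in general. Combined with reflexivity and transitivity, this shows $\geq_a^R$ is a well-preorder on the arbitrary class $C$, hence on each $\sim_a$-equivalence class, as required.
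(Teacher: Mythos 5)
Your proof is correct. Reflexivity and transitivity are handled essentially as in the paper (which dismisses reflexivity as trivial and proves transitivity via transitivity of $\geq_a$ lifted to sets), but your minimal-elements step takes a genuinely different route: the paper argues by contradiction, supposing a non-empty subset has no $\geq_a^R$-minimal element and deriving an infinite $\geq_a$-descent contradicting well-foundedness, whereas you exhibit a minimal element directly by taking a $\geq_a$-minimal $m$ in the union $\bigcup_{y \in Y} \Min_a([y]_{R^=} \cap [y]_a)$ and checking that any $y_0$ whose $\Min$-set contains $m$ is $\geq_a^R$-minimal in $Y$. Your version is constructive and, I would say, tighter: the paper's closing step (``for all $w' \in W'$ there is a $u <_a w'$, contradicting well-preorder'') leaves the reader to supply the descent argument, since the witnesses $u$ need not lie in $W'$. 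The one item you omit is the paper's point 4: that $w \geq_a^R v$ implies $w \sim_a v$, i.e.\ that $\geq_a^R$ never relates worlds in distinct $\sim_a$-classes. The paper treats this as part of the lemma's content (it reads the statement as saying $\geq_a^R$ partitions into well-preorders on the classes, exactly as $\geq_a$ does) and invokes precisely this consequence repeatedly in Section~\ref{section:corr}, e.g.\ to infer $w \sim_a v$ from $w \geq_a^R v$ in the proof of Proposition~\ref{theorem:bisimimpliesmodalequivlc}. Your restriction-to-$C$ framing sidesteps the issue, but the missing claim follows in one line from facts you already recorded: if $w \geq_a^R v$ then, both $\Min$-sets being non-empty, there exist $x \in [w]_a$ and $y \in [v]_a$ with $x \geq_a y$, hence $x \sim_a y$ and so $w \sim_a v$. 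Adding that sentence would make your proof a full replacement for the paper's.
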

\begin{proof} 
The relation $\geq_a$ partitions $W$ into a well-preorder on each $\sim_a$-equivalence class, by definition. We need to show that $\geq_a^R$ does the same. Hence we need to prove: 1) $\geq_a^R$ is reflexive; 2) $\geq_a^R$ is transitive; 3) any $\sim_a$-equivalence class has $\geq_a^R$-minimal elements; 4) if two worlds are related by $\geq_a^R$ they are also related by $\sim_a$. 

Reflexivity of $\geq^R_a$ is trivial. \emph{Transitivity}: Let $(w,v),(v,u) \in {\geq_a^R}$. Then $\Min_a ([w]_{R^=} \cap [w]_a) \geq_a \Min_a ([v]_{R^=} \cap [v]_a)$, and  $\Min_a ([v]_{R^=} \cap [v]_a) \geq_a \Min_a ([u]_{R^=} \cap [u]_a)$. Using that for any sets $X,Y,Z$, if $X \geq_a Y$ and $Y \geq_a Z$ then $X \geq_a Z$ (transitivity of $\geq_a$ for sets is easy to check), we obtain that $\Min_a ([w]_{R^=} \cap [w]_a) \geq_a \Min_a ([u]_{R^=} \cap [u]_a)$ and therefore $(w,u) \in\ \geq_a^R$. \emph{Minimal elements}: Consider a $\sim_a$-equivalence class $W'' \subseteq W$, and let $W' \subseteq W''$ be a non-empty subset.  Suppose $W'$ does not have $\geq_a^R$ minimal elements. Then for all $w' \in W'$ there is a $w'' \in W'$ such that $w'' <^R_a w'$, i.e. $\Min_a ([w'']_{R^=} \cap [w'']_a) <_a \Min_a ([w']_{R^=} \cap [w']_a)$.  As $w' \in [w']_{R^=} \cap [w']_a$, we get $\{w'\} \geq_a \Min_a ([w']_{R^=} \cap [w']_a)$ and then $\Min_a ([w'']_{R^=} \cap [w'']_a) <_a \{w'\}$.  In other words, for all $w' \in W'$ there is a $u \in W$, namely any $u \in \Min_a ([w'']_{R^=} \cap [w'']_a)$, such that $u <_a w'$. This contradicts $\geq_a$ being a well-preorder on $W''$. We have now shown 1), 2) and 3). Finally we show 4): Assume $w \geq_a^R v$, that is, $\Min_a ([w]_{R^=} \cap [w]_a) \geq_a \Min_a ([v]_{R^=} \cap [v]_a)$. This implies the existence of an $x \in \Min_a ([w]_{R^=} \cap [w]_a)$ and a $y \in \Min_a ([w]_{R^=} \cap [v]_a)$ with $x \geq_a y$. By choice of $x$ and $y$ we have $x \sim_a w$ and $y \sim_a v$. From $x \geq_a y$ we get $x \sim_a y$. Hence we have $w \sim_a x \sim_a y \sim_a v$, as required. 
\end{proof}

\begin{proposition}\label{prop:maxautos}
 On any plausibility model there exists a largest autobisimulation. Furthermore, the largest autobisimulation is an equivalence relation. 
\end{proposition}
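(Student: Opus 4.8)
The plan is to identify the largest autobisimulation with \emph{modal equivalence}, and to harvest both of the claimed properties from the bisimulation characterisation established later (Section~\ref{section:corr}). Concretely, fix one of the three logics, say the logic of conditional belief \lc, and let ${\equiv}$ denote \lc-modal equivalence on $\m$: put $w \equiv w'$ iff $w$ and $w'$ satisfy exactly the same \lc-formulas. Since every plausibility model is by definition (pre)image-finite, the full correspondence between \lc-bisimilarity and \lc-modal equivalence is available on $\m$, and I would show that ${\equiv}$ is precisely the largest autobisimulation.

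Note first that ${\equiv}$ is trivially an equivalence relation, so it suffices to show (a) that every autobisimulation is contained in ${\equiv}$, and (b) that ${\equiv}$ is itself an autobisimulation. Part (a) is the easy \emph{soundness} direction: autobisimilar worlds satisfy the same \lc-formulas, so if $\bisrel$ is any autobisimulation and $(w,w') \in \bisrel$ then $w \equiv w'$; hence $\bisrel \subseteq {\equiv}$ for every autobisimulation $\bisrel$. This is just the invariance of \lc-formulas under autobisimulation, proved by induction on formulas, with [atoms] handling propositional variables and the four clauses [forth$_{\geq}$], [back$_{\geq}$], [forth$_{\leq}$], [back$_{\leq}$] (phrased via the derived relations $\geq_a^{\bisrel}$) handling the knowledge, belief and conditional-belief modalities.

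Part (b) is where the real work lies: I would verify that ${\equiv}$ satisfies [atoms] and all four back-and-forth clauses, i.e.\ that it is a total autobisimulation. [atoms] is immediate since propositional variables lie in \lc. For the back-and-forth clauses one argues in Hennessy--Milner style: assuming $w \equiv w'$ and $w \geq_a^{\equiv} v$ but that no $v'$ with $w' \geq_a^{\equiv} v'$ is \lc-equivalent to $v$, one collects for each of the \emph{finitely many} relevant candidate worlds $v'$ a distinguishing formula and combines them into a single \lc-formula that is forced to hold at $w$ while failing at $w'$, contradicting $w \equiv w'$. The crux is that the quantification ranges over the derived relation $\geq_a^{\equiv}$, whose successor structure is governed by the sets $\Min_a([w]_{\equiv} \cap [w]_a)$; (pre)image-finiteness keeps these finite, which is exactly what lets the distinguishing-formula construction close. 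This step is the content of the correspondence theorem of Section~\ref{section:corr}.

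Combining (a) and (b), ${\equiv}$ is an autobisimulation containing every autobisimulation, hence the largest one, and it is an equivalence relation; this delivers both assertions simultaneously. I expect the main obstacle to be precisely the completeness direction of (b). One cannot instead argue directly that the union of all autobisimulations is an autobisimulation, because the derived relation $\geq_a^{\bisrel}$ depends on $\bisrel$ in a non-monotone way: enlarging $\bisrel$ enlarges the classes $[w]_{\bisrel^=}$, which changes the sets $\Min_a([w]_{\bisrel^=} \cap [w]_a)$ and can shift $\geq_a^{\bisrel}$ in either direction. The standard ``closure under union'' route to a largest bisimulation is therefore unavailable, and routing the argument through the logical characterisation is what rescues it.
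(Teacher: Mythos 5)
Your proposal is correct and follows essentially the same route as the paper: the paper also takes the candidate relation to be $\equiv^C$-modal equivalence on $\m$, invokes Proposition~\ref{theorem:modalequivimpliesbisimlc} to see that this relation is an autobisimulation, invokes Proposition~\ref{theorem:bisimimpliesmodalequivlc} to see that it contains every autobisimulation, and observes that it is trivially an equivalence relation; your closing remark about the non-monotone dependence of $\geq_a^{\bisrel}$ on $\bisrel$ blocking the usual closure-under-union argument matches the paper's own motivation for this detour. One small caution: the choice of $\lang^C$ is not as arbitrary as your ``say'' suggests, since the semantics of $\lang^D$ and $\lang^S$ are defined via the normal plausibility relation $\succeq_a$ and hence presuppose the very existence result being proved, whereas the semantics of $\lang^C$ and the two correspondence propositions for it do not.
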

We postpone the proof of this result to Section~\ref{section:ccond}, since it is going to follow from the correspondence between bisimilarity and modal equivalence for our language of conditional belief (Theorem~\ref{theorem:bisimcharlc}). Let us already now reassure the reader that we are not risking any circular reasoning here: None of the results that lead to Theorem~\ref{theorem:bisimcharlc} and hence to Proposition~\ref{prop:maxautos} rely on largest autobisimulations. 
\begin{definition}[Bisimulation] \label{def.bisim}
Let $\m = (W,\geq,V)$ and $\m' = (W',\geq',V')$ be plausibility models and let $\m'' = \m \sqcup \m'$ be the disjoint union of the two. Given an autobisimulation $\bisrel$ on $\m''$, if $\bisrel' = \bisrel \cap (W \times W')$ is non-empty, then $R'$ is called a {\em bisimulation} between $\m$ and $\m'$. A bisimulation between $(\m,w)$ and $(\m',w')$ is a bisimulation between $\m$ and $\m'$ containing $(w,w')$.
\end{definition}
\begin{example}\label{ex:m1m2}
Take another look at $\m_C$ and $\m_R$ of Figure~\ref{fig:intro:models}. Let $\m' = \m_C \sqcup \m_R$ and consider possible autobisimulations here. From Figure~\ref{fig:bisim:autobisim} we have the existence of a largest autobisimulation on $\m_R$. For $\m_C$, the largest autobisimulation is just the identity. Naming them $R_R$ and $R_C$ respectively, we (trivially) have that $R_R \cup R_C$ is an autobisimulation on $\m'$. The question is whether we can extend $R_R \cup R_C$ to an autobisimulation on $\m'$ connecting the submodels $M_R$ to $M_C$. We can. This new autobisimulation is $R = R' \cup R_R \cup R_C$, where $R'(u_1) = R'(u_3) = \{v_1\}$, $R'(u_2) = \{v_2\}$ and $R'(u_4) = R'(u_5) = \{v_3\}$. Now we easily get a bisimulation between $\m_R$ and $\m_C$ as $R \cap (D(\m_R) \times D(\m_C)) = R'$. Figure~\ref{fig:bisimmodels} shows the bisimulation $R'$. 
\end{example}
\begin{figure}[t]
		\centering
bisim.\ contr.\ of $\m_R$ \hspace{2.5cm} $\m_C$ \hspace{4.5cm} $\m_R$ \hspace{2cm} \ \\ 
		\def\worldX{1.4cm}
	  	\begin{tikzpicture}
	  		\begin{scope}[xshift = 0cm]
		  		\node[world] (av1) {$\{u_1,u_3\}$}
		  			node[lbl, above = of av1] {$p$};
		  		\node[world, right = \worldX of av1] (av2) {$\{u_2\}$}
		  			node[lbl, above = of av2] {$p$};
		  		\node[world, below = \worldX of av1] (av3) {$\{u_4,u_5\}$}
		  			node[lbl, below = of av3] {$q$};
				\path[link] (av2) \edge{a}{above}{} (av1);
				\path[link] (av1) \edge{b}{left}{} (av3);
			\end{scope}
	  		\begin{scope}[xshift = 5cm]
		  		\node[world] (v1) {$v_1$}
		  			node[lbl, above = of v1] {$p$};
		  		\node[world, right = \worldX of v1] (v2) {$v_2$}
		  			node[lbl, above = of v2] {$p$};
		  		\node[world, below = \worldX of v1] (v3) {$v_3$}
		  			node[lbl, below = of v3] {$q$};
				\path[link] (v2) \edge{a}{above}{} (v1);
				\path[link] (v1) \edge{b}{left}{} (v3);
			\end{scope}
			\begin{scope}[xshift = 10cm]
		  		\node[world] (u1) {$u_1$}
		  			node[lbl, above = of u1] {$p$};
		  		\node[world, right = \worldX of u1] (u2) {$u_2$}
		  			node[lbl, above = of u2] {$p$};
		  		\node[world, right = \worldX of u2] (u3) {$u_3$}
		  			node[lbl, above = of u3] {$p$};
		  		\node[world, below = \worldX of u1] (u4) {$u_4$}
		  			node[lbl, below = of u4] {$q$};
		  		\node[world, below = \worldX of u3] (u5) {$u_5$}
		  			node[lbl, below = of u5] {$q$};
				\path[link] (u2) \edge{a}{above}{} (u3);
				\path[link] (u2) \edge{a}{above}{} (u1);
				\path[link] (u1) \edge{b}{left}{} (u4);
				\path[link] (u3) \edge{b}{left}{} (u5);
				\path[link,ke-ke] (u1) \edge{a}{below}{bend right = 35} (u3);
			\end{scope}
			
			\draw[thick, dotted] (v1)  to [bend left = 28, looseness = 1.14] (u1);
			\draw[thick, dotted] (v2) to [bend right = 25] (u2);
			\draw[thick, dotted] (v1) to [bend left = 38, looseness = 0.70] (u3);
			
			\draw[thick, dotted] (v3) to (u4);
			\draw[thick, dotted] (v3) to [bend right = 38, looseness = 0.70] (u5);

		\end{tikzpicture}
		
		\caption{See Figure \ref{fig:intro:models}. The dotted edges show the largest bisimulation between $\m_C$ and $\m_R$. Model $\m_C$ is isomorphic to the bisimulation contraction of $\m_R$ (on the left) and to the bisimulation contraction of $\m_L$ (not depicted).}
		\label{fig:bisimmodels}
	\end{figure}
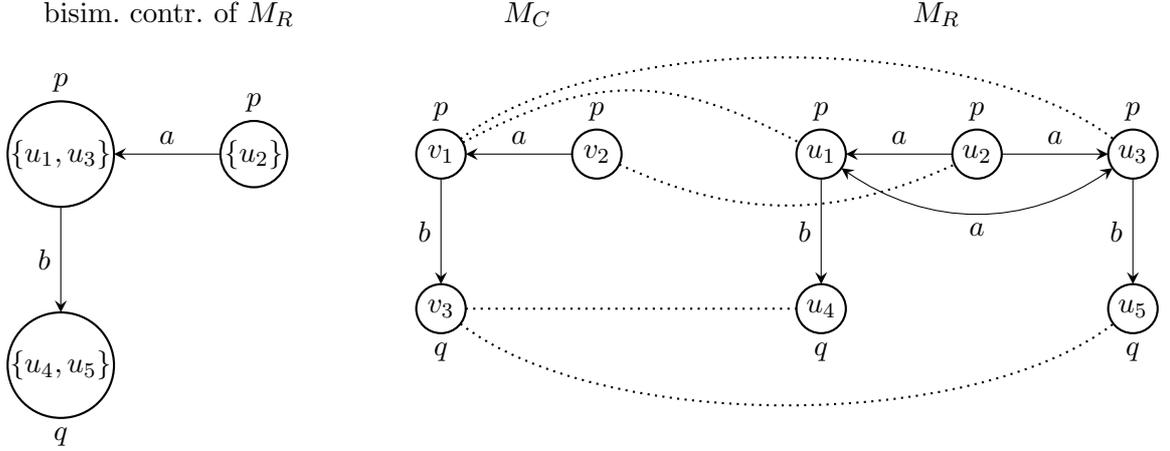
\begin{definition}[Bisimulation contraction]\label{def:contr}
Let $\m = (W,\geq,V)$ be a plausibility model and let $\bisrel$ be the largest autobisimulation on $\m$. The \emph{bisimulation contraction} of $\m$ is the model $\m' = (W',\geq',V')$ such that $W' = \{ [w]_R \mid w \in W \}$, $V'([w]_R) = V(w)$, and for all agents $a$ and worlds $w,v\in W$:
\[
  [w]_R \geq_a' [v]_R \quad \text{ iff } \quad \text{for some $w' \in [w]_R$ and $v' \in [v]_R$: $w' \geq_a^R v'$}. 
\]
\end{definition}
\begin{example}\label{ex:plausibilitymodels:bisimcontraction}
We compute the simulation contraction  $\m_R' = (W', \geq', V')$ of $\m_R = (W, \geq, V)$. For $\geq'_a$ and $\geq'_b$ take the reflexive closures. \[ \begin{array}{lll}
W' & = & \{\{u_1,u_3\}, \{u_2\}, \{u_4, u_5\}\} \\
\geq'_a & = & \{(\{u_2\}, \{u_1,u_3\}) \} \\ 
\geq'_b &=& \{ \{u_1,u_3\}, \{u_4, u_5\}) \} \\ 
V'(\{u_1,u_3\}) &=& \{p\} \\
V'(\{u_2\}) &=& \{p\} \\
V'(\{u_4,u_5\}) &=& \{q\}
\end{array} \]
Model $\m_C$ is isomorphic to both the bisimulation contraction of $\m_L$ and the bisimulation contraction of $\m_R$.
\end{example}
\begin{proposition}
The bisimulation contraction of a plausibility model is a plausibility model and is bisimilar to that model.
\end{proposition}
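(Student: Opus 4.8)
The plan is to prove the two assertions separately: first that $\m' = (W',\geq',V')$ satisfies the axioms of a plausibility model, and then that $\m$ and $\m'$ are bisimilar. Throughout, $R$ denotes the largest autobisimulation on $\m$, which is an equivalence relation by Proposition~\ref{prop:maxautos}, so the classes $[w]_R$ and the clause defining $\geq_a'$ in Definition~\ref{def:contr} make sense. Well-definedness of $V'$ is immediate from [atoms]: if $[w]_R=[v]_R$ then $(w,v)\in R$, so $V(w)=V(v)$. Reflexivity of $\geq_a'$ follows by choosing the representative $w$ twice and using reflexivity of $\geq_a^R$ (Lemma~\ref{lemma:wellr}). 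For transitivity, given $[w]_R\geq_a'[v]_R$ and $[v]_R\geq_a'[u]_R$ witnessed by $w_1\geq_a^R v_1$ and $v_2\geq_a^R u_2$ with $v_1,v_2\in[v]_R$, I would realign the two middle representatives: since $(v_2,v_1)\in R$, clause forth$_\geq$ turns $v_2\geq_a^R u_2$ into $v_1\geq_a^R u_1$ for some $u_1\in[u]_R$, and then transitivity of $\geq_a^R$ (Lemma~\ref{lemma:wellr}) gives $w_1\geq_a^R u_1$, i.e.\ $[w]_R\geq_a'[u]_R$.

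The delicate part -- and the main obstacle -- is verifying that $\geq_a'$ is a well-preorder on each $\sim_a'$-class together with (pre)image-finiteness, equivalently that the $\sim_a'$-classes are finite. The difficulty is that a single $R$-class may contain worlds from several distinct $\sim_a$-classes of $\m$ and, as the paper's example shows, $\geq_a^R$ is \emph{not} invariant under $R$; hence one cannot read $\geq_a'$ off a fixed representative, and the existential ``for some $w',v'$'' in Definition~\ref{def:contr} is essential. The key lemma I would establish is that the entire $\sim_a'$-class of $[w]_R$ stays inside the single finite set $[w]_a$: if $[v]_R\sim_a'[w]_R$ then $[v]_R$ meets $[w]_a$. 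To prove it I would first show that $[w]_R\sim_a'[v]_R$ holds iff some representatives are $\sim_a$-related; one direction uses that $\geq_a^R$ is total on each $\sim_a$-class (Lemma~\ref{lemma:wellr}), and transitivity of this auxiliary relation is obtained by realigning representatives through forth$_\geq$ and forth$_\leq$ together with Lemma~\ref{lemma:wellr}(4). Then, starting from $\hat w\in[w]_R$, $\hat v\in[v]_R$ with $\hat v\sim_a\hat w$ and using totality of $\geq_a^R$ on that $\sim_a$-class, I apply forth$_\geq$ or forth$_\leq$ to the pair $(\hat w,w)\in R$ to transport a representative of $[v]_R$ into $[w]_a$. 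Since $[w]_a$ is finite (the $\sim_a$-classes of $\m$ are finite) and each world lies in exactly one $R$-class, only finitely many $R$-classes meet $[w]_a$; hence $\sim_a'$-classes are finite and $\geq_a'$ is (pre)image-finite. Finiteness, together with totality of $\geq_a'$ on each $\sim_a'$-class (read off from representatives lying in $[w]_a$), yields that every non-empty subset has $\geq_a'$-minimal elements. This completes the proof that $\m'$ is a plausibility model.

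For the second assertion I would form the disjoint union $\m''=\m\sqcup\m'$, now legitimate because $\m'$ is a plausibility model, and exhibit the ``same $R$-class'' relation $S$ on $\m''$: within $W$ it is $R$, within $W'$ it is the identity, and it links each $w\in W$ to $[w]_R\in W'$ (and symmetrically). A short computation shows that, because epistemic relations do not cross the disjoint union, the restriction of $\geq_a^S$ to $W$ equals $\geq_a^R$ and its restriction to $W'$ equals $\geq_a'$ (each $W'$-node having singleton minimal set $\{[v]_R\}$), while no $\geq_a^S$-edge crosses between $W$ and $W'$. The bisimulation clauses of Definition~\ref{def:autobisim} for $S$ then reduce to three cases: the clauses for $R$ on the $W$-part (which hold since $R$ is an autobisimulation on $\m$); triviality on the identity $W'$-part; and the cross pairs $(w,[w]_R)$. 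For the cross pairs, forth$_\geq$ is exactly the defining property of $\geq_a'$ in Definition~\ref{def:contr}, while back$_\geq$ is supplied by applying forth$_\geq$ of $R$ to a witnessing pair of representatives; the $\leq$-clauses follow symmetrically. Hence $S$ is an autobisimulation on $\m''$, and $S\cap(W\times W')=\{(w,[w]_R):w\in W\}$ is a non-empty bisimulation between $\m$ and $\m'$ in the sense of Definition~\ref{def.bisim}, establishing $\m\,\bisim\,\m'$.
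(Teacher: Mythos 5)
Your proof is correct and follows the same route the paper takes: first verify the well-preorder (and finiteness) conditions for $\geq_a'$, then exhibit the functional relation $S=\{(w,[w]_R)\mid w\in W\}$ as a bisimulation. The paper only sketches this argument, and your filled-in details---in particular establishing finiteness of the $\sim_a'$-classes by transporting representatives into $[w]_a$ via the forth clauses, and checking that $\geq_a^S$ on the disjoint union restricts to $\geq_a^R$ on $W$ and to $\geq_a'$ on $W'$---are sound.
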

This proposition is not hard to prove, so we do not provide a full proof, but only sketch the overall idea. First, to prove that the bisimulation contraction $(W',\geq',V')$ of a plausibility model $(W,\geq,V)$ is a plausibility model, we simply have to prove that the relations $\geq'_a$ are well-preorders on each $\sim'_a$ equivalence class. That is shown by first proving reflexivity of $\geq'_a$, then transitivity of $\geq'_a$, and finally by proving that any non-empty subset has minimal elements with respect to $\geq'_a$. To show that $(W,\geq,V)$ is bisimilar to $(W',\geq',V')$, we define the (functional) relation $S: W \to W'$ as $S = \{(w, [w]_R) \mid w \in W \}$ and show that this is a bisimulation relation (that it satisfied [atoms] and the [back] and [forth] conditions). 
\begin{definition}[Normal plausibility relation, normal model] \label{def:normalplaus}
Let $\m = (W,\geq,V)$ be a plausibility model and let $R$ be the largest autobisimulation on $\m$. For all agents $a$, the relation $\geq_a^R$ is the {\em normal plausibility relation} for agent $a$ in $\m$, for which we may also write $\succeq_a$.  The model is {\em normal} if for all $a$, ${\geq_a} = {\geq^R_a}$. Any model $\m$ can be \emph{normalised} by replacing all $\geq_a$ by $\geq_a^R$.
\end{definition}
\begin{example}
Consider again the models $M_L$ and $M_R$ of Figure~\ref{fig:bisim:autobisim}. From the largest bisimulation on $M_L$ (shown by dotted edges), we can conclude that $M_R$ is the normalisation of $M_L$ (modulo a renaming of the worlds $w_i$ to $u_i$, for $i= 1,\dots,5$).
\end{example}
\begin{proposition}
The bisimulation contraction of a plausibility model is normal.
\end{proposition}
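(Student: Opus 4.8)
The plan is to reduce the whole statement to the single claim that the largest autobisimulation $R'$ on the contraction $\m' = (W',\geq',V')$ is the identity relation $\mathrm{id}_{W'}$. This claim makes sense because, by the previous proposition, $\m'$ is a plausibility model, so by Proposition~\ref{prop:maxautos} its largest autobisimulation $R'$ exists and is an equivalence relation. Granting $R' = \mathrm{id}_{W'}$, normality is immediate: then $(R')^= = \mathrm{id}_{W'}$, so for every world $X \in W'$ and agent $a$ we have $[X]_{(R')^=} \cap [X]_{\sim_a'} = \{X\}$ and hence $\Min_{\geq_a'}([X]_{(R')^=} \cap [X]_{\sim_a'}) = \{X\}$. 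Therefore $X\,(\geq_a')^{R'}\,Y$ unfolds to $\{X\} \geq_a' \{Y\}$, that is, to $X \geq_a' Y$. In other words $(\geq_a')^{R'} = \geq_a'$ for every agent $a$, which is exactly the defining condition of a normal model in Definition~\ref{def:normalplaus}.

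It remains to prove $R' = \mathrm{id}_{W'}$. The inclusion $\mathrm{id}_{W'} \subseteq R'$ is immediate, since the identity is always an autobisimulation and $R'$ is the largest one. For the converse I would use the bisimulation $S = \{(w,[w]_R) \mid w \in W\}$ between $\m$ and $\m'$ provided by the previous proposition, together with the correspondence between bisimilarity and modal equivalence (Theorem~\ref{theorem:bisimcharlc}). Suppose $([w]_R,[v]_R) \in R'$. The easy direction of the correspondence---bisimilar, and in particular autobisimilar, worlds are modally equivalent, which needs no finiteness assumption---yields three modal equivalences: $w$ with $[w]_R$ and $v$ with $[v]_R$ through $S$, and $[w]_R$ with $[v]_R$ through $R'$. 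Chaining them shows that $w$ and $v$ are modally equivalent in $\m$. Since $\m$ is a plausibility model and hence (pre)image-finite, the hard direction of Theorem~\ref{theorem:bisimcharlc} makes $w$ and $v$ bisimilar within $\m$, so that $(w,v)$ lies in the largest autobisimulation $R$; thus $[w]_R = [v]_R$. This gives $R' \subseteq \mathrm{id}_{W'}$ and completes the argument.

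The step needing the most care is the passage from ``$w$ and $v$ are modally equivalent in $\m$'' to ``$(w,v) \in R$''. Theorem~\ref{theorem:bisimcharlc} is phrased for pointed models and the disjoint-union notion of bisimulation of Definition~\ref{def.bisim}, whereas $R$ is an autobisimulation on the single model $\m$. I would bridge this by first recording the standard corollary that on a (pre)image-finite model the largest autobisimulation coincides with the relation of modal equivalence; modal equivalence of $w$ and $v$ in $\m$ then places $(w,v)$ in $R$ directly. Establishing that corollary is where the non-standard definition of bisimulation has to be confronted, since it requires matching autobisimulations on $\m$ with autobisimulations on $\m \sqcup \m$ having non-empty cross-component part, which in turn requires a short computation with the derived plausibility relations. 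Everything else is routine: that $\mathrm{id}_{W'}$ is an autobisimulation, that the minimal set of a singleton is that singleton, and that $\{X\} \geq_a' \{Y\}$ means $X \geq_a' Y$ all follow directly from the definitions.
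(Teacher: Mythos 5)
Your proposal is correct and follows essentially the same route as the paper: the paper's own proof simply asserts that the largest autobisimulation on the contraction is the identity and then observes that $\geq_a'^{R_{id}} = \geq_a'$. The only difference is that you actually justify that assertion (via the contraction bisimulation $S$, chaining modal equivalences, and the characterisation of the largest autobisimulation as the modal-equivalence relation from Proposition~\ref{prop:maxautos}), which is a sound and welcome filling-in of a step the paper leaves implicit.
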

\begin{proof}
Let $\m$ be a plausibility model, and let $\m' = (W', \geq', V')$ be the bisimulation contraction of $\m$.  The largest autobisimulation on $\m'$ is the identity relation $R_{id}$. For each agent $a$, we now have that ${{\geq'}^{R_{id}}_a} = {\geq'_a}$. Therefore, $\m'$ is normal.  
\end{proof}

\section{Logical language and semantics} \label{section:semantics}
In this section we define the language and the semantics of our logics.
\begin{definition}[Logical language]
For any countably infinite set of propositional symbols $\props$ and finite set of agents $\agents$ we define language $\lang^{CDS}_{\props\agents}$ by:
\[
	\phi ::= p \mid \neg \phi \mid \phi \wedge \phi \mid K_a \phi \mid B^\phi_a \phi
\mid B^n_a \phi \mid \Box_a \phi
\]
	where $p \in \props$, $a \in \agents$, and $n \in \Naturals$.  
\end{definition}
The formula $K_a \phi$ stands for `agent $a$ knows (formula) $\phi$', $B^\psi_a \phi$ stands for `agent $a$ believes $\phi$ on condition $\psi$', $B^n_a \phi$ stands for `agent $a$ believes $\phi$ to degree $n$', and $\Box_a \phi$ stands for `agent $a$ safely believes $\phi$'. (The semantics of these constructs is defined below.) The duals of $K_a$, $B^\phi_a$ and $\Box_a$ are denoted $\widehat{K}_a$, $\widehat{B}^\phi_a$ and $\Diamond_a$. We use the usual abbreviations for the boolean connectives as well as for $\top$ and $\bot$, and the abbreviation $B_a$ for $B^\top_a$. In order to refer to the type of modalities in the text, we call $K_a$ a {\em knowledge modality}, $B^\psi_a$ a {\em conditional belief modality}, $B^n_a$ a {\em degree of belief modality}, and $\Box_a$ a {\em safe belief modality}. 

In $\lang^{CDS}_{\props\agents}$, if $\agents$ is clear from the context, we may omit that and write $\lang^{CDS}_\props$, and if $\props$ is clear from the context, we may omit that as well, so that we get $\lang^{CDS}$. The letter $C$ stands for `conditional', $D$ for `degree', and $S$ for `safe'. Let $X$ be any subsequence of $CDS$, then $\lang^X$ is the language with, in the inductive definition, only the modalities $X$ (and with knowledge $K_a$) for all agents. In our work we focus on the {\em logic of conditional belief} with language $\lang^C$, the {\em logic of degrees of belief} with language $\lang^D$, and the {\em logic of safe belief} with language $\lang^S$.
\begin{definition}[Satisfaction Relation]\label{defi:sat-rel}
	Let $\m = (W,\geq,V)$ be a plausibility model for $\props$ and $\agents$, let $\succeq$ be the normal plausibility relation for $\m$, and let $w \in W$, $p \in \props$, $a \in \agents$, and $\phi, \psi \in \lang^{CDS}$. Then:
	\[
	  \begin{array}{lll}
	  		\m, w \models p & \text{iff} & p \in V(w) \\ 
			\m, w \models \neg \phi & \text{iff} & \m, w \not\models \phi \\
			\m, w \models \phi \wedge \psi & \text{iff} & \m, w \models \phi \textrm{ and } \m, w \models \psi \\
			\m, w \models K_a \phi & \text{iff}  & \m, v \models \phi \textrm{ for all } v \in [w]_a \\
			\m, w \models B_a^\psi \phi & \text{iff} & \m, v \models \phi \textrm{ for all } v \in \Min_a ( \llbracket \psi \rrbracket_\m \cap [w]_a) \\
\m,w \models B^n_a \phi & \text{iff} & \m,v \models \phi \text{ for all } v \in \Min^n_a [w]_a \\
\m,w \models \Box_a \phi & \text{iff} & \m,v \models \phi \text{ for all } v \text{ with } w \succeq_a v
	 \end{array}
   \]	
where 
\[ \begin{array}{lll}
\Min^0_a [w]_a  & = & \Min_{\succeq_a}  [w]_a \\
\Min^{n+1}_a [w]_a  & = & \left\{
  \begin{array}{ll} [w]_a \hfill \text{if } \Min^n_a [w]_a = [w]_a \\ \Min^n_a [w]_a  \union \Min_{\succeq_a} ([w]_a \setminus \Min^n_a [w]_a)   \hspace{2.5cm} \hfill \text{otherwise} \end{array} \right.
\end{array} \]
and where $\llbracket \phi \rrbracket_\m = \{ w \in W \mid \m,w \models \phi\}$. 
\end{definition}
We write  $\m \models \phi$ ($\phi$ is valid on $\m$) to mean that $\m,w \models \phi$ for all $w\in W$. 
\begin{definition}[Modal equivalence]
Consider the language $\lang^X_\props$, for $X$ a subsequence of $CDS$. Given are models $\m = (W,\geq,V)$ and $\m' = (W',\geq',V')$, and $w\in W$ and $w' \in W'$. We say that $(\m,w)$ and $(\m',w')$ are {\em modally equivalent} in $\lang^X_\props$, notation $(\m,w) \equiv^X_\props (\m',w')$, if and only if for all $\phi \in \lang^X_P$, $\m,w \models \phi$ if and only if $\m',w' \models \phi$. If $\props$ is obvious from the context we may write $(\m,w) \equiv^X (\m',w')$.
\end{definition}

\subsection*{The logic of conditional belief} \label{section:lcond}
The logic $L^C$ of conditional belief appears in \cite{stalnaker:1996,baltagetal.tlg:2008,jfak.jancl:2007,balt.ea:qual}, where particularly the latter two are foundational for dynamic belief revision (older roots are Lewis' counterfactual conditionals \cite{lewis:1973}). An axiomatisation is found in \cite{stalnaker:1996}. In this logic, defeasible belief $B_a \phi$ is definable as $B^\top_a \phi$, while $K_a \phi$ is definable as $B^{\neg\phi}_a \bot$.
\begin{example}
Consider Figure~\ref{fig:intro:models}. In the plausibility model $M_C$ we have, for instance: $M_C \models K_a p \rightarrow (B_a B_b q \land \neg K_a B_b q)$: If $a$ knows $p$ (true in $v_1$ and $v_2$), $a$ believes, but does not know, that $b$ believes $q$. Another example is $M_C \models B_a^{\neg B_b q} K_b \neg q$: Conditional on $b$ not believing $q$, $a$ believes that $b$ knows $\neg q$. Only in $v_2$ does $\neg B_b q$ hold; there $K_b \neg q$ holds. A final example is $M_C \models K_a p \rightarrow B_a^{\widehat{K}_b q} B_b q$: From $v_1$ and $v_2$ (where $K_a p$ holds), formula $\widehat{K}_b q$ only holds in $v_1$, and conditional to that, the one and only most plausible world $v_1$ satisfies $B_b q$. We can repeat this exercise in $M_L$ and $M_R$, as all three models are bisimilar and therefore, as will be proved in the next section, logically equivalent.
\end{example}

\subsection*{The logic of degrees of belief} \label{section:ldegrees}
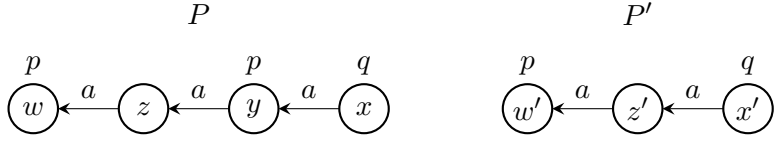
\begin{figure}[t]
		\centering
		\def\worldX{0.8cm}
	  	\begin{tikzpicture}
	  		\begin{scope}[xshift = 0cm]
		  		\node[world] (w) {$w$}
		  			node[lbl, above = of w] {$p$};
		  		\node[world, right = \worldX of w] (z) {$z$};
		  		\node[world, right = \worldX of z] (y) {$y$}
		  			node[lbl, above = of y] {$p$};
		  		\node[world, right = \worldX of y] (x) {$x$}
		  			node[lbl, above = of x] {$q$};
				\path[link] (x) \edge{a}{above}{} (y);
				\path[link] (y) \edge{a}{above}{} (z);
				\path[link] (z) \edge{a}{above}{} (w);
			\end{scope}
	  		\begin{scope}[xshift = 6.5cm]
		  		\node[world] (w') {$w'$}
		  			node[lbl, above = of w'] {$p$};
		  		\node[world, right = \worldX of w'] (z') {$z'$};
		  		\node[world, right = \worldX of z'] (x') {$x'$}
		  			node[lbl, above = of x'] {$q$};
				\path[link] (x') \edge{a}{above}{} (z');
				\path[link] (z') \edge{a}{above}{} (w');
			\end{scope}	
			
			\node[above = 0.8cm of w] (pl) {}; 			
			\node[above = 0.8cm of x] (pr) {}; 
			\path (pl) -- (pr) node[midway] (lblp) {$P$};	
			
			\node[above = 0.8cm of w'] (p'l) {}; 			
			\node[above = 0.8cm of x'] (p'r) {}; 
			\path (p'l) -- (p'r) node[midway] (lblpp) {$P'$};

		\end{tikzpicture}
		\caption{A plausibility model $P$ and its bisimulation contraction $P'$.}
		\label{fig:naivebebeliefmodels}
\end{figure}
The logic $L^D$ of degrees of belief, also known as the logic of graded belief, goes back to Grove~\cite{grove:1988} and Spohn~\cite{spohn:1988}, although these could more properly be said to be semantic frameworks to model degrees of belief (there is no relation between the logic of degrees of belief and Fine's logic of graded belief \cite{fine:1972} and subsequent works, wherein we count the number of pairs $(v,w) \in R$ between two worlds $v$ and $w$, or, alternatively, label the accessibility relation with that number). Logics of degrees of belief have seen some popularity in artificial intelligence and AGM style belief revision, see e.g.\ \cite{hoek:1992,hoek:1993,laverny:2006}. Belief revision based on degrees of belief have been proposed by \cite{aucher:2005a,hvd.prolegomena:2005}. The typical distinction between conviction (arbitrarily strong belief) and knowledge, as in \cite{lenzen:1978,lenzen:2003}, is absent in our logic $L^D$, wherein the strongest form of belief defines knowledge. Reasoning with degrees of belief is often called quantitative, where conditional belief can then be called qualitative. In other communities both are called qualitative, and quantitive epistemic reasoning approaches are in that case those that combine knowledge and probabilities \cite{halpern:2003}. The zeroth degree of belief $B_a^0 \phi$ defines defeasible belief $B_a \phi$. How Spohn's work relates to dynamic belief revision as in \cite{baltagetal.tlg:2008} is discussed in detail in \cite{hvd.comments:2008}. There have also been various proposals combining knowledge and belief ($B^\T_a \phi$ or $B^0_a \phi$) in a single framework, without considering either conditional or degrees of belief, where the dynamics are temporal modalities, see \cite{klm:1990,krausetal:1988,friedmanetal:1994}. For purposes of further discussions and the proofs in Section \ref{section:cdegrees} we define belief layers as follows:
\begin{definition}[Belief Layers]
Let $\m = (W, \geq, V)$. For $w \in W$, $a \in \agents$ and $n \in \Naturals$, the $n$th (belief) layer of $w$ for $a$ is defined as $E^{n}_a [w]_a = \Min_{\succeq_a} ([w]_a \setminus \Min_a^{n-1} [w]_a)$, where we use the special case $\Min_a^{-1} [w]_a = \emptyset$.
\end{definition}
This immediately gives the following lemma:
\begin{lemma}
For $\m = (W, \geq, V)$, $w \in W$, $a \in \agents$ and $n \in \Naturals$, we have $\Min^n_a [w]_a = E^{n}_a [w]_a \cup \Min_a^{n-1} [w]_a$. For $n$ such that $\Min_a^n [w]_a = [w]_a$ we have $E_a^{n+1} [w]_a = \emptyset$. We name the smallest such $n$ the \emph{maximum degree} (for $a$ at $w$). If $n$ is the maximum degree for $a$ at $w$, we have $M, w \models K_a \phi \leftrightarrow B_a^n \phi$.
\end{lemma}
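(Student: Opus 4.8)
The statement bundles together three claims, all of which I would derive directly from the recursive definition of $\Min^n_a [w]_a$ in Definition~\ref{defi:sat-rel} together with the definition of the belief layers $E^n_a$; no clever construction is needed, only a careful case analysis and one finiteness observation.

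First I would establish the identity $\Min^n_a [w]_a = E^n_a [w]_a \cup \Min^{n-1}_a [w]_a$ by unfolding definitions. For the base case $n=0$, the convention $\Min^{-1}_a [w]_a = \emptyset$ gives $E^0_a [w]_a \cup \Min^{-1}_a [w]_a = \Min_{\succeq_a}([w]_a \setminus \emptyset) = \Min_{\succeq_a} [w]_a = \Min^0_a [w]_a$. For $n \geq 1$ I would split on the case distinction in the definition of $\Min^n_a = \Min^{(n-1)+1}_a$. If $\Min^{n-1}_a [w]_a = [w]_a$, then $\Min^n_a [w]_a = [w]_a$, while the right-hand side is $\Min_{\succeq_a}(\emptyset) \cup [w]_a = [w]_a$; and if $\Min^{n-1}_a [w]_a \neq [w]_a$, the recursive clause makes the two sides literally equal (up to the order of the union). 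This simultaneously settles the second claim: whenever $\Min^n_a [w]_a = [w]_a$ we have $E^{n+1}_a [w]_a = \Min_{\succeq_a}([w]_a \setminus [w]_a) = \Min_{\succeq_a}(\emptyset) = \emptyset$.

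Before invoking the notion of \emph{maximum degree} I would check that it is well defined, i.e.\ that some $n$ with $\Min^n_a [w]_a = [w]_a$ actually exists. Here I would use that $\m$ is (pre)image-finite, so the equivalence class $[w]_a$ is finite, and that $\succeq_a$ is a well-preorder on $[w]_a$ by Lemma~\ref{lemma:wellr} (applied to the largest autobisimulation $R$, since $\succeq_a = \geq_a^R$). Consequently, as long as $\Min^{n-1}_a [w]_a \neq [w]_a$, the remainder $[w]_a \setminus \Min^{n-1}_a [w]_a$ is a non-empty subset of $[w]_a$ and hence has $\succeq_a$-minimal elements; thus $E^n_a [w]_a$ is non-empty and, being a subset of $[w]_a \setminus \Min^{n-1}_a [w]_a$, disjoint from $\Min^{n-1}_a [w]_a$, so $\Min^n_a [w]_a \supsetneq \Min^{n-1}_a [w]_a$ strictly. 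Since $[w]_a$ is finite, this strictly increasing chain must reach $[w]_a$ after finitely many steps, which yields the required smallest such $n$.

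Finally, the biconditional $\m, w \models K_a \phi \leftrightarrow B^n_a \phi$ for $n$ the maximum degree is immediate from Definition~\ref{defi:sat-rel}: by definition $\m, w \models K_a \phi$ iff $\phi$ holds throughout $[w]_a$, while $\m, w \models B^n_a \phi$ iff $\phi$ holds throughout $\Min^n_a [w]_a$, and for the maximum degree these two quantification domains coincide since $\Min^n_a [w]_a = [w]_a$. The only point requiring any real care in the whole argument is the existence of the maximum degree, which is exactly where the standing (pre)image-finiteness assumption is used; everything else is bookkeeping over the recursive definitions.
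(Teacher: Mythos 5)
Your proof is correct and matches the paper's approach: the paper states this lemma with no proof at all (``This immediately gives the following lemma''), and your argument is exactly the routine unfolding of the recursive definition of $\Min^n_a$ and of $E^n_a$ that the authors treat as immediate. Your extra care about the existence of the maximum degree via finiteness of $[w]_a$ and $\succeq_a$ being a well-preorder is a sound and welcome filling-in of the one non-trivial detail.
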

In \cite{aucher:2005a,hvd.prolegomena:2005,laverny:2006} different layers can contain bisimilar worlds. In our approach they cannot, because we define belief layers on the normal plausibility relation. Unlike \cite{spohn:1988} our semantics does not allow empty layers in between non-empty layers. If $E_a^n [w]_a \neq \emptyset$ and $E_a^{n+2} [w]_a \neq \emptyset$, then $E_a^{n+1} [w]_a \neq \emptyset$. Layers above the maximum degree will be empty, i.e.\ if there is a maximum degree $n$ for $a$ at $w$, as there will always be in our (pre)image-finite models, then for all $k > n$, we have $E_a^k [w]_a = \emptyset$.
\begin{example}
In Figure~\ref{fig:intro:models}, we have that $M_C \models B_a^0 B^0_b q$ but not $M_C \models B_a^1 B^0_b q$. The maximum degree of belief for $a$ in $M_C$ is at either $v_1$ and $v_2$, where it is $1$, so $M_C \models K_a \phi \leftrightarrow B_a^1 \phi$. This is also true in the other two models. Consider now the models $P$ and $P'$ in Figure~\ref{fig:naivebebeliefmodels} and an alternative definition of $B^n_a$ not using $\succeq_a$ but $\geq_a$ (as in \cite{aucher:2005a,hvd.prolegomena:2005,laverny:2006,balt.ea:qual}). In the $\geq_a$-semantics we have $P \models B_a^2 \neg q$, as $q$ is false in $\{y,z,w\}$. Only when we reach the third degree of belief does $q$ become uncertain: $P \not\models B_a^3 \neg q$. With $\succeq_a$-semantics, $2$ is the maximum degree so $P \not\models B_a^2 \neg q$. This can be seen in the bisimilar model $P'$, where $P' \not\models B_a^2 \neg q$.
\end{example}

\subsection*{The logic of safe belief} \label{section:lsafe}
The logic $L^S$ of safe belief goes back to Stalnaker \cite{stalnaker:1996} and has been progressed by Baltag and Smets (for example, how it relates to conditional belief and knowledge) in \cite{balt.ea:qual}, which also gives a detailed literature review involving the roots of conditional belief, degrees of belief, and safe belief. An agent has {\em safe belief} in a formula $\phi$ iff the agent will continue to believe $\phi$ no matter what {\em true} information conditions its belief, i.e. $M, w \models \Box_a \phi$ iff $M, w \models B_a^\psi \phi$ for all $\Box$-free $\psi$ s.t. $M,w \models \psi$. In \cite{balt.ea:qual} safe belief is defined as $M, w \models \Box_a \phi$ iff $M, v \models \phi$ for all $v$ s.t. $w \geq_a v$. For both \cite{stalnaker:1996} and \cite{balt.ea:qual} true information are subsets of the domain containing the actual world. When this is what true information is, there is a correspondence between the two definitions, as indeed noted by Baltag and Smets. The complications of this choice are addressed in detail in \cite{demey:2011}. For us, there is not a correspondence between the two definitions, because we can only condition on modally definable subsets. When we, as we do, define safe belief using $\succeq_a$, this correspondence is reestablished.
\begin{example}
Consider for a final time the models of Figure~\ref{fig:intro:models}. We have $M_C, v_1 \models \Box_a \widehat{K}_b q$, whereas $M_C, v_2 \not\models \Box_a \widehat{K}_b q$. Now consider $M_L$ and the $\geq_a$-version of safe belief  for which we have $M_L,w_3 \not \models \Box_a \widehat{K}_b q$. For \cite{stalnaker:1996,balt.ea:qual} this is as it should be: For the subset $\{w_2,w_3\}$ (which includes the actual world $w_3$ as required) we have $\Min_a (\{w_2,w_3\} \cap [w_3]_a) = \{w_2\}$ where $M_L, w_2 \not\models \widehat{K}_b q$. Using the $\succeq_a$-version of safe belief, we have $M_L,w_3 \models \Box_a \widehat{K}_b q$. For us, this is as it should be: As our conditional belief picks using $\llbracket \psi  \rrbracket \cap [w]_a$, any set containing $w_3$ must include the modally equivalent world $w_1$. This corresponds to first normalising $M_L$ to get $M_R$. In \emph{that} model, $u_2$ is strictly less plausible than $u_3$. 
\end{example}
The semantics we propose for degrees of belief and safe belief are non-standard. Still, as we show in the following, these non-standard semantics and the standard semantics for conditional belief are all bisimulation invariant. This makes the results in Section \ref{section:expressivity} showing a non-trivial expressivity hierarchy between these logics even more remarkable. 

\section{Bisimulation characterisation for $\lang^C$, $\lang^D$ and $\lang^S$} \label{section:corr}

\subsection{Bisimulation correspondence for conditional belief} \label{section:ccond}
In the following we prove that for the language $L^C$ bisimilarity implies modal equivalence and vice versa. This shows that our notion of bisimulation is proper for the language and models at hand. The proof of Proposition~\ref{theorem:bisimimpliesmodalequivlc} below is essentially a standard proof of bisimilarity implying modal equivalence: modal equivalence is proved by induction on the structure of the formula, where in the induction step the back and forth conditions of bisimilarity are applied to the induction hypothesis. However, the induction case of conditional belief formulas $B_a^\gamma \psi$ is a bit more involved than for standard modalities. Additional work is needed to ensure that when applying the back and forth conditions we produce a world which is among the minimal $\gamma$-worlds.
\begin{proposition}\label{theorem:bisimimpliesmodalequivlc}
Bisimilarity implies modal equivalence for $\lang^C$. 
\end{proposition}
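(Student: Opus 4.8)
The plan is to reduce to a single model and then induct on formula structure. By Definition~\ref{def.bisim}, a bisimulation between $(\m,w)$ and $(\m',w')$ comes from an autobisimulation $R$ on $\m''=\m\sqcup\m'$ with $(w,w')\in R$. In a disjoint union the classes $[\cdot]_a$ and the relations $\geq_a$, $\geq_a^R$ never cross components, so truth of an $\lang^C$-formula at a world of $\m''$ coincides with its truth in the component containing that world. Hence it suffices to show: for every autobisimulation $R$ on a plausibility model and every $(w,w')\in R$, $\m'',w\models\phi$ iff $\m'',w'\models\phi$, for all $\phi\in\lang^C$; I would prove this by induction on $\phi$. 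Since $R^=$ is generated from $R$ by reflexivity, symmetry and transitivity, and each of these preserves a biconditional, the induction hypothesis for a subformula automatically upgrades to: if $(x,y)\in R^=$ then $x$ and $y$ satisfy the same subformula, i.e.\ each $\llbracket\gamma\rrbracket_{\m''}$ already treated is closed under $R^=$. The atomic case is [atoms] and the Boolean cases are routine. For $K_a\phi$, to get the forward implication I would take $v'\in[w']_a$; as $\geq_a^R$ is a well-preorder on $[w']_a$ (Lemma~\ref{lemma:wellr}) the worlds $w',v'$ are $\geq_a^R$-comparable, so the back condition for $\geq_a^R$ or for $\leq_a^R$ supplies $v$ with $(v,v')\in R$ and $w\sim_a v$ (again Lemma~\ref{lemma:wellr}), whence $v\in[w]_a$; now $\m,v\models\phi$ and the induction hypothesis give $\m,v'\models\phi$. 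The converse uses the forth conditions symmetrically.

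The conditional-belief case $B_a^\gamma\psi$ is the substantive one, since the semantics selects $\geq_a$-minimal $\gamma$-worlds while the back/forth clauses speak only of $\geq_a^R$. By the induction hypothesis both $\llbracket\gamma\rrbracket_{\m''}$ and $\llbracket\psi\rrbracket_{\m''}$ are $R^=$-closed. First I would bridge the two orders by the pointwise fact that a $\gamma$-world $v\in[w]_a$ is $\geq_a$-minimal among the $\gamma$-worlds of $[w]_a$ iff it is simultaneously $\geq_a^R$-minimal among them and $\geq_a$-minimal within its own block $[v]_{R^=}\cap[w]_a$; this is immediate from the definition of $\geq_a^R$ (its value at $v$ is read off from $\Min_a([v]_{R^=}\cap[v]_a)$) together with Lemma~\ref{lemma:equiplausr}. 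Using $R^=$-closure of $\llbracket\gamma\rrbracket$, this shows that $\Min_a(\llbracket\gamma\rrbracket_{\m}\cap[w]_a)$ is contained in $\mu:=\Min_{\geq_a^R}(\llbracket\gamma\rrbracket_{\m}\cap[w]_a)$ and that every element of $\mu$ is $R^=$-related to an element of $\Min_a(\llbracket\gamma\rrbracket_{\m}\cap[w]_a)$; with $R^=$-closure of $\llbracket\psi\rrbracket$ this yields the reduction
\[
\m,w\models B_a^\gamma\psi \quad\text{iff}\quad \m,v\models\psi \text{ for all } v\in\mu ,
\]
and the analogous statement at $w'$ with $\mu':=\Min_{\geq_a^R}(\llbracket\gamma\rrbracket_{\m}\cap[w']_a)$. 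Thus the case is reduced to transferring satisfaction of $\psi$ between the $\geq_a^R$-minimal sets $\mu$ and $\mu'$, which live entirely in the language of the back/forth clauses.

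The remaining task---showing $\mu'\subseteq R^=[\mu]$ (and symmetrically), after which $R^=$-closure of $\llbracket\psi\rrbracket$ closes the case---is the step I expect to be the main obstacle, precisely the ``additional work'' the text warns about. Fix $x'\in\mu'$. From $\geq_a^R$-comparability of $w'$ and $x'$ and the back conditions applied to $(w,w')$ I obtain a $\gamma$-world $x_0\in[w]_a$ with $(x_0,x')\in R$; since $\geq_a^R$ is a well-preorder on $[w]_a$ I may pick $x\in\mu$ with $x\leq_a^R x_0$. Applying the forth condition for $\geq_a^R$ to $(x_0,x')$ at $x$ gives $y'$ with $x'\geq_a^R y'$ and $(x,y')\in R$, where $y'$ is a $\gamma$-world in $[w']_a$; minimality of $x'$ in $\mu'$ forces $x'\simeq_a^R y'$. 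The decisive move is then to apply the back condition for $\geq_a^R$ to $(x,y')$ using $y'\geq_a^R x'$: this returns $z$ with $x\geq_a^R z$ and, crucially, a \emph{direct} edge $(z,x')\in R$, where $z$ is a $\gamma$-world in $[w]_a$; minimality of $x$ in $\mu$ forces $z\simeq_a^R x$, so $z\in\mu$. Hence $x'\in R^=[\mu]$. The subtlety this detour overcomes is that a single application of back/forth only matches $x'$ to a world $\simeq_a^R$-equivalent to it, not to $x'$ itself, so $\psi$ need not transfer; the back--forth--back passage is exactly what restores a direct $R$-link from a minimal world to $x'$. The converse inclusion, and the converse implication throughout the induction, follow by the symmetric argument, equivalently by applying the same reasoning to the autobisimulation $R^{-1}$ (for which $\geq_a^{R^{-1}}=\geq_a^R$ since $R$ and $R^{-1}$ share the equivalence closure).
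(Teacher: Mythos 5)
Your proof is correct, and the overall strategy coincides with the paper's: pass to the disjoint union, induct on formula structure, handle $K_a$ via $\geq_a^R$-comparability plus one back/forth application, and resolve $B_a^\gamma\psi$ by a three-step back--forth--back chase that lands a direct $R$-edge onto the target minimal world. Where you genuinely diverge is in how the minimality bookkeeping of the $B_a^\gamma\psi$ case is organised. The paper works with the $\geq_a$-minimal sets $\Min_a(\llbracket\gamma\rrbracket\cap[w]_a)$ throughout and proves two ad hoc claims by explicit subset computations: that $[y']_{R^=}\cap[y']_a\subseteq\llbracket\gamma\rrbracket\cap[w']_a$ forces $y'\geq_a^R v'$, and that $\Min_a([v]_{R^=}\cap[v]_a)$ sits inside the $\geq_a$-minimal $\gamma$-set. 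You instead isolate a ``bridge'' lemma ($\geq_a$-minimality in the $\gamma$-set equals $\geq_a^R$-minimality plus minimality within one's own $R^=$-block) and use the $R^=$-closure of $\llbracket\gamma\rrbracket$ and $\llbracket\psi\rrbracket$ to rewrite the semantics of $B_a^\gamma\psi$ entirely in terms of the $\geq_a^R$-minimal sets $\mu,\mu'$; the whole case then reduces to $\mu'\subseteq R^=[\mu]$, and every minimality argument in the subsequent chase is a one-line appeal to $\geq_a^R$-minimality and transitivity rather than a set comparison mixing $\geq_a$ and $\geq_a^R$. What your route buys is modularity and a cleaner interface with the back/forth clauses (which speak only of $\geq_a^R$); what the paper's route buys is that it never needs the equivalence between the two minimality notions to be stated, only the one-directional inclusions actually used. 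Your closing observation that a single back application only reaches a world $\simeq_a^R$-equivalent to the target, so that the extra forth--back detour is what restores a direct $R$-link, is exactly the difficulty the paper's Claims 1 and 2 are designed to overcome, and your symmetry argument via $R^{-1}$ (using $(R^{-1})^==R^=$) is a legitimate shortcut for the converse direction.
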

\begin{proof}
\newcommand{\tbisrel}{R}
\newcommand{\mina}{\Min_{\leq_a}}
\newcommand{\sima}{{\sim_a}}
\newcommand{\leqa}{\leq_a}
\newcommand{\leqar}{\leq_a^\tbisrel}
\newcommand{\leqr}{\leq^\tbisrel}
\newcommand{\llgamma}{\llbracket \gamma \rrbracket}
Assume $(\m_1,w_1) \bisim (\m_2,w_2)$. Then, by definition, there exists an autobisimulation $R$ on the disjoint union of $\m_1$ and $\m_2$ with $(w_1,w_2) \in R$. Let $\m = (W,\geq,V)$ denote the disjoint union of $\m_1$ and $\m_2$. We then need to prove that $(\m,w_1)$ and $(\m,w_2)$ are modally equivalent in $L_C$.  We will show that for all $\phi$ in $L_C$, for all $(w,w') \in \tbisrel$, if $\m,w \models \phi$ then $\m,w' \models \phi$. This implies the required (the other direction being symmetric). 
 The proof is by induction on the syntactic complexity of $\phi$. The propositional cases are easy, so we only consider the cases $\phi = K_a \psi$ and $\phi = B^\gamma_a \psi$. Consider first $\phi = K_a \psi$. In this case we assume $\m, w \models K_a \psi$, that is, $\m, v \models \psi$ for all $v$ with $w \sim_a v$.   Let $v'$ be chosen arbitrarily with $w' \sim_a v'$. We need to prove $\m, v' \models \psi$. From Lemma~\ref{lemma:wellr} we have that $\geq_a^R$ is a well-preorder on each $\sim_a$-equivalence class. Since $w' \sim_a v'$ we hence get that either $w' \geq_a^R v'$ or $v' \geq_a^R w'$. We can assume $w' \geq_a^R v'$, the other case being symmetric. Then since $(w,w') \in \tbisrel$ and $w' \geq_a^R v'$, [back$_\geq$] gives us a $v$ s.t.\ $(v,v') \in \tbisrel$ and $w \geq_a^R v$. Lemma~\ref{lemma:wellr} now implies $w \sim_a v$, and hence $\m, v \models \psi$. Since $(v,v') \in \tbisrel$, the induction hypothesis gives us $\m, v' \models \psi$, and we are done.
 
 Now consider the case $\phi = B_a^\gamma \psi$. This case is more involved. Assume $\m,w \models B_a^\gamma \psi$, that is, $\m, v \models \psi$ for all $v \in \Min_a ( \llbracket \gamma \rrbracket_\m \cap [w]_a)$. Letting $v' \in \Min_a (\llgamma_\m \cap [w']_a)$, we need to show $\m,v' \models \psi$ (if $\Min_a (\llgamma_\m \cap [w']_a)$ is empty there is nothing to show). 
 We will first find a $y$ in $\Min_a (\llgamma_\m \cap [w]_a)$, then find a $y'$ with $(y,y') \in \tbisrel$, and only then apply [back$_\geq$] to $y' \geq_a^R v'$ to produce the required $v$. The point is that our choice of $y$ in $\Min_a (\llgamma_\m \cap [w]_a)$ will ensure that $v$ is in $\Min_a (\llgamma_\m \cap [w]_a)$.

As mentioned, we want to start out choosing a $y$ in $\Min_a (\llgamma_\m \cap [w]_a)$, so we need to ensure that this set is non-empty. By choice of $v'$ we have $v' \in \llgamma_\m$ and $v' \sim_a w'$. From $v' \sim_a w'$ we get that $w' \geq_a^R v'$ or $w' \leq_a^R v'$, using Lemma~\ref{lemma:wellr}. Since also $(w,w') \in R$, we can apply [back$_\geq$] or [back$_\leq$] to get a $u$ such that $(u,v') \in R$ and either $w \geq^R_a u$ or $w \leq_a^R u$. From $(u,v') \in R$ and $v' \in \llgamma_\m$, we get $u \in \llgamma_\m$, using the induction hypothesis. From the fact that either $w \geq^R_a u$ or $w \leq_a^R u$ we get $w \sim_a u$, using Lemma~\ref{lemma:wellr}. Hence we have $u \in \llgamma_\m \cap [w]_a$. This shows the set $\llgamma_\m \cap [w]_a$ to be non-empty. 
Hence also $\Min_a  (\llgamma_\m \cap [w]_a)$ is non-empty, and we are free to choose a $y$ in that set. Since $y \sim_a w$, Lemma~\ref{lemma:wellr} gives us that
 either $y \geq_a^R w$ or $w \geq_a^R y$, so we can apply [forth$_\leq$] or [forth$_\geq$] to find a $y'$ with $(y,y') \in \tbisrel$ and either $y' \geq_a^R w'$ or $w' \geq_a^R y'$. 
 
\medskip \noindent \textit{Claim 1.} $y' \geq_a^R v'$.

\medskip
\noindent \textit{Proof of claim 1.} 
We need to prove $\Min_a([y']_{R^=} \cap [y']_a) \geq_a \Min_a ([v']_{R^=} \cap [v']_a)$. We first prove that $[y']_{R^=} \cap [y']_a \subseteq \llgamma_\m \cap [w']_a$: 
\begin{itemize}
  \item {$[y']_{R^=} \cap [y']_a \subseteq \llgamma_\m$:} Assume $y'' \in [y']_{R^=} \cap [y']_a$. Then $(y',y'') \in R^=$. Since we also have $(y,y') \in R$, we get $(y,y'') \in R^=$. From $(y,y'') \in R^=$ and $y \in \llgamma_\m$ a finite sequence of applications of the induction hypothesis gives us $y'' \in \llgamma_\m$. 
  \item {$[y']_{R^=} \cap [y']_a \subseteq [w']_a$:} Assume $y'' \in [y']_{R^=} \cap [y']_a$. Then $y'' \sim_a y'$. Since we have either $y' \geq_a^R w'$ or $w' \geq_a^R y'$, we must also have $y' \sim_a w'$, by Lemma~\ref{lemma:wellr}. Hence $y'' \sim_a y' \sim_a w'$ implying $y'' \in [w']_a$.
\end{itemize}
Since $v'$ is chosen minimal in $\llgamma_\m \cap [w']_a$ and $[y']_{R^=} \cap [y']_a  \subseteq \llgamma_\m \cap [w']_a$ we get 
$\Min_a ( [y']_{R^=} \cap [y']_a) \geq_a \{ v' \} \geq_a \Min_a ([v']_{R^=} \cap [v']_a)$, as required. This concludes the proof of the claim.

\medskip \noindent
By choice of $y'$ we have $(y,y') \in R$, and by Claim 1 we have $y' \geq_a^R v'$. We can now finally, as promised, apply [back$_\geq$] to these premises to get a $v$ s.t.\ $(v,v') \in \tbisrel$ and $y \geq_a^R v$. 

\medskip \noindent \textit{Claim 2.}  $\Min_a ([v]_{R^=} \cap [v]_a) \subseteq \Min_a (\llgamma_\m \cap [w]_a)$.

\medskip
\noindent \textit{Proof of claim 2.} Let $x \in \Min_a ([v]_{R^=} \cap [v]_a)$. We need to prove $x \in \Min_a (\llgamma_\m \cap [w]_a)$. We do this by proving $x \in \llgamma_\m$, $x \in [w]_a$ and $\{ x \} \leq_a \Min_a (\llgamma_\m \cap [w]_a)$: 
\begin{itemize}
  \item {$x \in \llgamma_\m$:} By choice of $x$ we have $(v,x) \in R^=$.
  From  $(v,x) \in R^=$ and $(v,v') \in \tbisrel$ we get $(v',x) \in R^=$. From $(v',x) \in R^=$ and $v' \in \llgamma_\m$ a finite sequence of applications of the induction hypothesis gives us $x \in \llgamma_\m$.
  \item {$x \in [w]_\m$:}  By choice of $x$ we have $x \sim_a v$. Since $y \geq_a^R v$, Lemma~\ref{lemma:wellr} implies $v \sim_a y$. By choice of $y$ we have $y \sim_a w$, so in total we get $x \sim_a v \sim_a y \sim_a w$, as required.
  \item {$\{ x \} \leq_a \Min_a( \llgamma_\m \cap [w]_a)$:} 
  \[ \begin{array}{rll}
    \{ x \} & \leq_a \Min_a ( [v]_{R^=} \cap [v]_a) \qquad & \text{by choice of $x$} \\
       & \leq_a \Min_a ([y]_{R^=} \cap [y]_a) & \text{since $y \geq_a^R v$} \\
       & \leq_a \{ y \} &  \\ 
       & \leq_a \Min_a( \llgamma_\m \cap [w]_a) & \text{since $y \in \Min_a ( \llgamma_\m \cap [w]_a)$}.
     \end{array}
     \]
\end{itemize}
This concludes the proof of the claim.

\medskip \noindent
Now we are finally ready to prove $\m, v' \models \psi$. Let $z \in \Min_a ([v]_{R^=} \cap [v]_a).$ Then $z \in \Min_a( \llgamma_\m \cap [w]_a)$, by Claim 2. Hence $\m,z \models \psi$, by assumption. Since $(v,z) \in R^=$ and $(v,v') \in R$ we get $(z,v') \in R^=$, and hence a finite sequence of applications of the induction hypothesis gives us $\m,v' \models \psi$.
\end{proof}
We proceed now to show the converse, that modal equivalence with regard to $\lang^C$ implies bisimulation. The proof has the same structure as the Hennessy-Millner approach, though appropriately modified for our purposes. Given a pair of image-finite models $\m$ and $\m'$, the standard approach is to construct a relation $R \subseteq \Domain(\m) \times \Domain(\m')$ s.t.\ $(w,w') \in R$ if $\m, w \equiv \m',w'$. Using $\Diamond$-formulas, it is then shown that $R$ fulfils the requirements for being a bisimulation, as such formulas denote what is true at worlds accessible by whatever accessibility relation is used in the model. This means that modally equivalent worlds have modally equivalent successors, which is then used to show that $R$ fulfils the required conditions. For our purposes this will not do, as we only have $\widehat{K}_a$-formulas (i.e. for $\sim_a$). Instead, our equivalent to $\Diamond$-formulas are of the form $\widehat{B}^\psi_a \phi$, each such formula corresponding to accessibility to the most plausible $\psi$-worlds from all worlds in an equivalence class. What we want are formulas corresponding to specific links between worlds, so we first establish that such formulas exists. We thus have formulas with the same function as $\Diamond$-formulas serve in the standard approach.
\begin{proposition}\label{theorem:modalequivimpliesbisimlc}
 Modal equivalence with respect to $\lang^C$ implies bisimilarity.
\end{proposition}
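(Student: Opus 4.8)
The plan is to follow the Hennessy--Milner strategy, adapted to our non-standard bisimulation clauses. Suppose $(\m,w) \equiv^C (\m',w')$, write $\m'' = \m \sqcup \m'$ for the disjoint union, and define a relation $R$ on $\m''$ by letting $(x,y) \in R$ iff $(\m'',x) \equiv^C (\m'',y)$. Since modal equivalence is an equivalence relation, $R^= = R$, so that $[x]_{R^=}$ is simply the modal-equivalence class of $x$; the clause [atoms] holds because $p \in \lang^C$ for every atom $p$. As $R$ is symmetric, [back$_\geq$] and [back$_\leq$] follow from [forth$_\geq$] and [forth$_\leq$] by interchanging the two worlds, so it suffices to establish the two forth clauses. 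Since $(w,w') \in R \cap (W \times W')$ is then a bisimulation between $(\m,w)$ and $(\m',w')$, this proves the proposition.

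First I would prove the usual lemma on the existence of distinguishing formulas. Because $\m''$ is (pre)image-finite, every $\sim_a$-class is finite, hence contains only finitely many $R$-classes, and any two non-$R$-related worlds are separated by some $\lang^C$-formula. Taking finite conjunctions of such separating formulas, for any world $v$ I can build a characteristic formula $\tau$ and for $w$ a characteristic formula $\delta_w$ such that, throughout the finite set $[w]_a \cup [w']_a$, a world satisfies $\tau$ (resp.\ $\delta_w$) exactly when it is $R$-related to $v$ (resp.\ to $w$). The second ingredient is the observation that $\llbracket \psi \rrbracket_{\m''}$ is always a union of $R$-classes, so that for a definable condition $\psi$ the set $\Min_a(\llbracket \psi \rrbracket_{\m''} \cap [w]_a)$ consists precisely of the $\geq_a$-minimal representatives of the $\geq_a^R$-least $R$-classes meeting $\psi$. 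This is exactly what links the $\geq_a$-based semantics of conditional belief to the $\geq_a^R$-based bisimulation clauses.

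For [forth$_\geq$], assume $(w,w') \in R$ and $w \geq_a^R v$; then $v \sim_a w$ by Lemma~\ref{lemma:wellr}, so $v \in [w]_a$. I would take $\chi = \widehat{B}_a^{\tau \vee \delta_w} \tau$. Because $w \geq_a^R v$ says that the $R$-class of $v$ is at least as plausible as that of $w$, a $v$-world occurs among the $\geq_a$-minimal $(\tau \vee \delta_w)$-worlds of $[w]_a$, whence $\m,w \models \chi$. Modal equivalence transfers $\chi$ to $w'$, producing a $\tau$-world $z \in \Min_a(\llbracket \tau \vee \delta_w \rrbracket_{\m'} \cap [w']_a)$; minimality of $z$ forces $w' \geq_a^R z$, and $z$ satisfies $\tau$, so $(v,z) \in R$, giving the required $v' = z$. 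For [forth$_\leq$], assume $w \leq_a^R v$ and take $\chi = \widehat{K}_a \tau \wedge \widehat{B}_a^{\tau \vee \delta_w} \delta_w$: the first conjunct records that a $v$-world exists in $[w]_a$, and the second that the $R$-class of $w$ is at least as plausible as that of $v$. Transferring $\chi$ to $w'$ yields a $v$-world in $[w']_a$ whose class is no more plausible than $w'$'s, i.e.\ a $v'$ with $w' \leq_a^R v'$ and $(v,v') \in R$.

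The hard part is precisely the mismatch flagged in the introduction: the modalities of $\lang^C$ speak about $\geq_a$, while the bisimulation clauses speak about the derived relation $\geq_a^R$. The argument goes through only because conditioning formulas $\psi$ can carve out unions of $R$-classes and nothing finer, so that taking $\geq_a$-minima of a definable set collapses to comparing whole $R$-classes under $\geq_a^R$; getting the witness formulas $\widehat{B}_a^{\tau \vee \delta_w}(\cdot)$ to isolate exactly the intended plausibility level---and verifying that their minimal witnesses land in the correct $\geq_a^R$-stratum---is where the real work lies. As in the classical Hennessy--Milner theorem, (pre)image-finiteness is indispensable, since the characteristic formulas $\tau$ and $\delta_w$ must be finite conjunctions.
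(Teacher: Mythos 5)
Your proposal is correct and follows essentially the same route as the paper's proof: a Hennessy--Milner argument over the disjoint union, with finite characteristic formulas for modal-equivalence classes (your $\tau$, $\delta_w$ are the paper's $\Psi_t$) and conditional-belief diamonds $\widehat{B}_a^{\tau\vee\delta_w}(\cdot)$ whose condition defines the union of exactly two equivalence classes, which is precisely the paper's Claim~1. The only differences are in execution: the paper establishes [forth$_\geq$] by contradiction (conjoining distinguishing formulas for the finitely many $\geq_a^Q$-successors of $w'$) and obtains [forth$_\leq$] by first finding an equivalent world via $\widehat{K}_a$ and then reusing [forth$_\geq$], whereas you construct the witnesses directly in both cases; both versions go through.
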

\begin{proof}
Assume $(\m_1, w) \equiv^C (\m_2, w')$. We wish to show that $(\m_1, w) \bisim (\m_2, w')$. Let $\m = \m_1 \sqcup \m_2$ be the disjoint union of $\m_1$ and $\m_2$. We then need to show that $Q = \{(v,v') \in \dom\m \times \dom\m \mid \m, v \equiv^C \m, v'\}$ is an autobisimulation on $\m$. Note that as $\equiv^C$ is an equivalence relation, so is $Q$. We first show that $\Diamond$-like formulas talking about the $\geq_a^Q$-relations between specific worlds in $\m$ exist.

\medskip \noindent \textit{Claim 1.}
Let $w$ and $w'$ be worlds of the model $\m = (W, \geq, V)$ where $w \geq^Q_a w'$. Further let $\phi \in \lang^C$ be any formula true in $w'$. There then exists a formula $\psi \in \lang^C$ such that $([w]_Q \cup [w']_Q) \cap [w]_a = \llbracket \psi \rrbracket_\m \cap [w]_a$ and $\m, w \models \widehat{B}_a^{\psi} \phi$.

\medskip
\noindent \textit{Proof of Claim 1.}
If two worlds $s$ and $s'$ are not modally equivalent, there exists some distinguishing formula $\Psi_{s,s'}$ with $\m, s \models \Psi_{s,s'}$ and $\m, s' \not\models \Psi_{s,s'}$. As $\sim_a$ is image-finite (since both $\geq_a$ and its converse are) the following formula is finite:
\begin{align*}
\Psi_t = \bigwedge \{ \Psi_{t,t'} \mid t \sim_a t' \land (t, t') \not \in Q \}
\end{align*}
The formula $\Psi_t$ distinguishes $t$ from all the worlds in $[t]_a$ that it is not modally equivalent to. If there are no such worlds, $\Psi_t$ is the empty conjunction equivalent to $\top$.

We now return to our two original worlds $w$ and $w'$. With the assumption that $\m, w' \models \phi$, we show that $\psi = \Psi_w \lor \Psi_{w'}$ is a formula of the kind whose existence we claim. First note that $\llbracket \Psi_w \rrbracket_\m \cap [w]_a$ contains only those worlds in $[w]_a$ that are modally equivalent to $w$, exactly as $[w]_Q \cap [w]_a$ does. As $\llbracket \Psi_w \rrbracket_\m \cup \llbracket \Psi_{w'} \rrbracket_\m = \llbracket \Psi_w \lor \Psi_{w'} \rrbracket_\m$ we have $([w]_Q \cup [w']_Q) \cap [w]_a = \llbracket \Psi_w \lor \Psi_{w'} \rrbracket_\m \cap [w]_a$. To get $\m, w \models \widehat{B}_a^{\psi} \phi$ we need to show that $\exists v \in \Min_a ( \llbracket {\Psi_w \lor \Psi_{w'}} \rrbracket_\m \cap \equivclass{w}{a})$ s.t. $\m, v \models \phi$. Pick an arbitrary $v \in \Min_a([w']_Q \cap [w']_a)$. We will now show that this has the required properties.

Let $T = \llbracket \Psi_w \lor \Psi_{w'} \rrbracket_\m \cap \equivclass{w}{a}$. Since $T = ([w]_Q \cup [w']_Q) \cap [w]_a$, Lemma \ref{lemma:equiplausr} gives $u \simeq_a^Q w$ or $u \simeq_a^Q w'$ for all $u \in T$. Together with $w \geq_a^Q w'$, this gives $w' \in \Min_{\geq_a^Q} T$. Choose $u \in T$ arbitrarily. We then have $u \geq_a^Q w'$ and, by definition, that $\Min_a([u]_Q \cap [u]_a) \geq_a \Min_a([w']_Q \cap [w']_a)$. By choice of $v$ we can then conclude $\{v\} \leq_a \Min_a([w']_Q \cap [w']_a) \leq_a \Min_a ([u]_Q \cap [u]_a) \leq_a \{u\}$. As $u$ was chosen arbitrarily in $T$, this shows $v \in \Min_a T$. As $v \in [w']_Q$ we have $\m, v \equiv^C \m, w'$ and by assumption of $\m, w' \models \phi$ that $\m, v \models \phi$. We now have $v \in \Min_a ( \llbracket {\Psi_w \lor \Psi_{w'}} \rrbracket_\m \cap \equivclass{w}{a})$ and $\m, v \models \phi$, completing the proof of the claim.
\medskip

\medskip\noindent
We now proceed to show that $Q$ fulfils the conditions for being an autobisimulation on $\m$ (Definition \ref{def:autobisim}).  [atoms] is trivial. Next we show [forth$_\geq$]. 
Let $(w,w') \in Q$ (i.e. $(\m,w) \equiv^C (\m,w')$) and $w \geq^Q_a v$. We then have that [forth$_\geq$] is fulfilled if $\exists v' \in W$, s.t. $w' \geq^Q_a v'$ and $(v,v') \in Q$ (i.e. $(\m,v) \equiv^C (\m,v')$). To this end, we show that assuming for all $v' \in W$, $w' \geq^Q_a v'$ implies $(\m,v) \not\equiv^C (\m,v')$ leads to a contradiction. This is analogous to how $Q$ is shown to be a bisimulation in standard Hennessy-Millner proofs.

We first show that $\geq_a^Q$ is image-finite. First recall that by assumption on plausibility models, $\geq_a$ is (pre)image finite, that is, both $\geq_a$ and $\leq_a$ are image-finite. It follows that $\sim_a \ =\ \geq_a \cup \leq_a$ is image-finite as well. If a relation is image-finite, then so is any subset of the relation. Therefore, as $\geq_a^Q\ \subseteq\ \sim_a$, $\geq_a^Q$ must be image-finite. Hence the set of $\geq_a^Q$-successors of $w'$, $S = \{ v' \mid w' \geq^Q_a v' \} = \{ v'_1, \dots , v'_n \}$ is also finite. Having assumed that $v$ and none of the $v'_i$s are modally equivalent, we have that there exists a number of distinguishing formulae $\phi^{v'_i}$, one for each $v'_i$, such that $\m, v \models \phi^{v'_i}$ and $\m, v'_i \not \models \phi^{v'_i}$. Therefore, $\m, v \models \phi^{v'_1} \land \dots \land \phi^{v'_n}$. For notational ease, let $\phi = \phi^{v'_1} \land \dots \land \phi^{v'_n}$. 

With $\m, v \models \phi$, Claim 1 gives the existence of a formula $\psi$, such that $([w]_Q \cup [v]_Q) \cap [w]_a = \llbracket \psi \rrbracket_\m \cap [w]_a$ and $\m, w \models \widehat{B}_a^{\psi} \phi$. Due to modal equivalence of $w$ and $w'$, we must have $\m, w' \models \widehat{B}_a^{\psi} \phi$. This we have iff $\exists u' \in \Min_a (\llbracket \psi \rrbracket_\m \cap [w']_{a})$, s.t. $\m, u' \models \phi$. By construction of $\phi$, no world $v'_i$ exists such that $w' \geq^Q_a v'_i $ and $\m, v'_i \models \phi$, so we must have $u' >^Q_a w'$. As $u' \in [w']_a$, the definition of $>^Q_a$ gives $\Min_a ([u']_Q \cap [w']_a) >_a \Min_a ([w']_Q \cap [w']_a)$, so we get $\exists w'' \in \Min_a([w']_Q \cap [w']_a)$ s.t. $u' >_a w''$. As $u' \in \Min_a ( \llbracket \psi \rrbracket_\m \cap [w']_{a})$, we must therefore have $w'' \not\in \llbracket \psi \rrbracket_{\m}$, and then also $w' \not \in \llbracket \psi \rrbracket_{\m}$. But as $\m, w \models \psi$, we get the sought after contradiction (we initially assumed $(\m,w) \equiv^C (\m,w')$). We get [back$_\geq$] immediately from $Q$ being an equivalence relation.

\medskip \noindent
Now we get to [forth$_\leq$]. Let $(w,w') \in Q$ and $w \leq^Q_a v$. We have that [forth$_\leq$] is fulfilled if $\exists v' \in W$, s.t. $w' \leq^Q_a v'$ and $(v,v') \in Q$. 

\medskip \noindent \textit{Claim 2.} There exists a $v' \in [w']_a$ satisfying $(v,v') \in Q$.

\medskip
\noindent \textit{Proof of Claim 2.}
Suppose not. Then $v$ does not have a modally equivalent world in $[w']_a$. Thus there must be some formula $\phi$ holding in $v$ that holds nowhere in $[w']_a$. Since $v \in [w]_a$ (using Lemma~\ref{lemma:wellr}), this implies that $\m, w \models \widehat{K}_a \phi$ and $\m, w' \not\models \widehat{K}_a \phi$. However, this contradicts $(w,w') \in Q$, concluding the proof of the claim. 

\medskip \noindent
Let $v'$ be chosen as guaranteed by Claim 2. It now suffices to show $w' \leq^Q_a v'$. From $(v,v') \in Q$ and $v \geq^Q_a w$, [forth$_\geq$] gives a $w''$ s.t. $v' \geq^Q_a w''$ and $(w,w'') \in Q$. From $v' \geq^Q_a w''$ we get $v' \sim_a w''$, using Lemma~\ref{lemma:wellr}. Since $v' \in [w']_a$ we further get $w' \sim_a v' \sim_a w''$. Since $(w,w'') \in Q$ and $(w,w') \in Q$ we also get $(w',w'') \in Q$. From $w' \sim_a w''$ and $(w',w'') \in Q$ Lemma~\ref{lemma:equiplausr} gives us $w' \simeq_a^Q w''$. From this and $v' \geq^Q_a w''$ we get $v' \geq^Q_a w'$ and hence $w' \leq^Q_a v'$, as required. This concludes proof of [forth$_\leq$]. As for [back$_\geq$] getting to [back$_\leq$] is easy and left out.
\end{proof}
\begin{theorem}[Bisimulation characterisation for $L^C$]\label{theorem:bisimcharlc}
Let $(\m,w),(\m',w')$ be plausibility models. Then: \[ (\m,w) \bisim (\m',w') \text{ iff } (\m,w) \equiv^C (\m',w') \]
\end{theorem}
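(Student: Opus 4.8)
The plan is to observe that this theorem is an immediate corollary of the two propositions established just above, since each direction of the biconditional is precisely one of them. There is no new mathematical content to produce; the work lies entirely in assembling the two directions and checking that their hypotheses are met.

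First I would handle the left-to-right implication. Suppose $(\m,w) \bisim (\m',w')$. This is exactly the hypothesis of Proposition~\ref{theorem:bisimimpliesmodalequivlc}, which concludes that bisimilar pointed models are modally equivalent for $\lang^C$; hence $(\m,w) \equiv^C (\m',w')$. Note that this direction requires no finiteness assumption: the induction on formula complexity in Proposition~\ref{theorem:bisimimpliesmodalequivlc} goes through for arbitrary plausibility models.

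Next I would handle the right-to-left implication. Suppose $(\m,w) \equiv^C (\m',w')$. This is the hypothesis of Proposition~\ref{theorem:modalequivimpliesbisimlc}, which yields $(\m,w) \bisim (\m',w')$. Here it is worth flagging explicitly that this direction is the Hennessy--Milner-style half and genuinely relies on the relations $\geq_a$ (and hence $\sim_a$) being image-finite, so that the distinguishing formulas $\Psi_t$ and the conjunction $\phi$ over all $\geq_a^Q$-successors used inside Proposition~\ref{theorem:modalequivimpliesbisimlc} are finite. This hypothesis is guaranteed for free, since every plausibility model is by definition (pre)image-finite; thus $(\m,w)$ and $(\m',w')$ automatically satisfy the requirement and the proposition applies.

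The two implications together give the biconditional, completing the proof. I do not anticipate a genuine obstacle here, as all the difficulty has already been discharged in the two propositions; the only point demanding care is the bookkeeping observation that the standing (pre)image-finiteness assumption built into the definition of plausibility models is exactly what makes the backward direction unconditionally available for the models named in the statement.
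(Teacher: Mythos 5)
Your proposal is correct and matches the paper exactly: the theorem is proved simply by citing Proposition~\ref{theorem:bisimimpliesmodalequivlc} for the forward direction and Proposition~\ref{theorem:modalequivimpliesbisimlc} for the converse. Your additional remarks on where (pre)image-finiteness is actually needed are accurate but not part of the paper's (one-line) proof.
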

\begin{proof}
From Proposition \ref{theorem:bisimimpliesmodalequivlc} and Proposition \ref{theorem:modalequivimpliesbisimlc}.
\end{proof}
We can now finally give the promised proof of Proposition~\ref{prop:maxautos}.
\begin{proof}[Proof of Proposition~\ref{prop:maxautos}]
First note that neither the semantics of $L^C$ nor the proofs of Proposition~\ref{theorem:bisimimpliesmodalequivlc} and \ref{theorem:modalequivimpliesbisimlc} 
rely on the existence of largest autobisimulations. Hence we can use these in proving the proposition. Given a plausibility model $\m = (W,\geq,V)$ we define a relation $R$ by $R = \{(w,v) \in W^2 \mid \m, w \equiv^C \m, v \}$. Since modal equivalence implies bisimilarity (Theorem~\ref{theorem:modalequivimpliesbisimlc}), $R$ is a bisimulation relation (indeed, $R$ is exactly the relation shown to be an autobisimulation in the proof of Proposition~\ref{theorem:modalequivimpliesbisimlc}). Now we have to show that $R$ is the largest autobisimulation. If it was not, there would exist an autobisimulation $R'$ with $R' - R \neq \emptyset$.  By definition of $R$, $R'$ would then contain at least one pair $(w,v)$ with $ \m, w \not\equiv^C \m, v$. However, since bisimilarity implies modal equivalence (Proposition~\ref{theorem:bisimimpliesmodalequivlc}), this contradicts $R'$ being an autobisimulation. Hence $R$ must be the largest autobisimulation. It only remains to prove that $R$ is an equivalence relation. However, this is trivial given its definition in terms of modal equivalence.
\end{proof}

\subsection{Bisimulation correspondence for degrees of belief} \label{section:cdegrees}
\newcommand{\Rmax}{{R_{\text{max}}}}
We now show bisimulation characterisation results for the logic of degrees of belief $\lang^D$. Let $\m = (W, \geq, V)$. Recalling Definition \ref{defi:sat-rel}, for some world $w \in W$, the set $\Min_a^0 [w]_a$ contains the minimal worlds with respect to $\succeq_a$ in the $\sim_a$-equivalence class of $w$. For a given $w$ and $a$, we refer to the generalised definition $\Min_a^n [w]_a$ as (belief) sphere $n$ of $w$ for $a$. The distinction between $\Min_a^n$ and $\Min_a$ is important to keep straight! The former $\Min$---used to give semantics of the $B^n_a$ modality of $L^D$---is with respect to the relation $\succeq_a$. The latter $\Min$ is with respect to $\geq_a$, used to give the semantics of $L^C$.  Dealing as we do in this section with $L^D$, we first state some necessary observations about the properties of what we call beliefs spheres. When convenient we will simply say that $v$ is in (belief) sphere $n$ for $a$, understanding that this actually means $v \in \Min_a^n [v]_a$. 

It follows easily from the definitions, that for any world $w$, sphere $n$ for $a$ is wholly contained within sphere $n+1$ for $a$, i.e. $\Min_a^n [w]_a \subseteq \Min_a^{n+1} [w]_a$. 
\begin{lemma}\label{lemma:layersucc} Let $M = (W, \geq, V)$ be a plausibility model and consider $w, v \in W$. If $w \sim_a v$ and $w \not\in \Min^n_a [w]_a$, we have the following two properties:
\begin{enumerate}
\item[(i)] If $v \in \Min^n_a [w]_a$, then $w \succ_a v$.
\item[(ii)] If $v \in Min^{n+1}_a [w]_a$ then $w \succeq_a v$ .
\end{enumerate}
\end{lemma}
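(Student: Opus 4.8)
The plan is to work entirely with the normal plausibility relation $\succeq_a = \geq_a^R$ (for $R$ the largest autobisimulation), exploiting that, by Lemma~\ref{lemma:wellr}, $\succeq_a$ is a well-preorder on the class $[w]_a$. In particular $w$ and $v$ are $\succeq_a$-comparable, and I may use the non-standard minimality convention freely: $x \in \Min_{\succeq_a} Y$ means $y \succeq_a x$ for every $y \in Y$. I would also record two structural facts up front. Writing the layers as $E^k_a [w]_a = \Min_{\succeq_a}([w]_a \setminus \Min^{k-1}_a[w]_a)$, the belief-layer lemma gives, by induction on $n$, that $\Min^n_a[w]_a = \bigcup_{k \le n} E^k_a[w]_a$; hence $v \in \Min^n_a[w]_a$ exactly when $v \in E^k_a[w]_a$ for some $k \le n$. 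Moreover, since $w \in [w]_a$ but $w \notin \Min^n_a[w]_a$, the set $\Min^n_a[w]_a$ is a proper subset of $[w]_a$, so the ``otherwise'' clause applies and $\Min^{n+1}_a[w]_a = \Min^n_a[w]_a \cup E^{n+1}_a[w]_a$.

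For part (i), I would locate the layer of $v$: fix $k \le n$ with $v \in E^k_a[w]_a = \Min_{\succeq_a} S$, where $S := [w]_a \setminus \Min^{k-1}_a[w]_a$. Because $k-1 < n$ and the spheres are nested, $\Min^{k-1}_a[w]_a \subseteq \Min^n_a[w]_a$; as $w \notin \Min^n_a[w]_a$, the world $w$ itself lies in $S$. Minimality of $v$ in $S$ then yields $w \succeq_a v$ immediately. For the strictness I would argue by contradiction: assuming also $v \succeq_a w$, transitivity of $\succeq_a$ upgrades ``$y \succeq_a v$ for all $y \in S$'' (minimality of $v$) to ``$y \succeq_a w$ for all $y \in S$'', so that $w \in \Min_{\succeq_a} S = E^k_a[w]_a \subseteq \Min^n_a[w]_a$, contradicting the hypothesis. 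Hence $v \not\succeq_a w$, and together with $w \succeq_a v$ this gives $w \succ_a v$.

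For part (ii), I would split on the decomposition $\Min^{n+1}_a[w]_a = \Min^n_a[w]_a \cup E^{n+1}_a[w]_a$ recalled above. If $v \in \Min^n_a[w]_a$, part (i) already gives $w \succ_a v$ and hence $w \succeq_a v$. If instead $v \in E^{n+1}_a[w]_a = \Min_{\succeq_a}([w]_a \setminus \Min^n_a[w]_a)$, then $w$ lies in $[w]_a \setminus \Min^n_a[w]_a$ (again because $w \notin \Min^n_a[w]_a$), so minimality of $v$ in that set gives $w \succeq_a v$ directly.

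I expect the only real subtlety to be bookkeeping rather than a deep obstacle: one must keep the non-standard reading of $\Min$ and the orientation of $\succeq_a$ straight (recall $x \succeq_a y$ reads ``$y$ is at least as plausible as $x$'', so the lower spheres are the more plausible ones and $w \succ_a v$ asserts that $v$ is strictly more plausible than $w$). The single genuinely non-trivial inference is the transitivity step establishing strictness in part (i); everything else is a direct application of the minimality definition together with the nestedness of the spheres.
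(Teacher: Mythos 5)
Your proof is correct and follows essentially the same route as the paper: part (ii) is handled by exactly the same case split ($v \in \Min^{n+1}_a[w]_a \setminus \Min^n_a[w]_a$ using $\succeq_a$-minimality of the layer, versus $v \in \Min^n_a[w]_a$ using part (i)). The paper dismisses part (i) as following ``easily from the definition''; your layer-locating argument with the transitivity step for strictness is a correct and complete way of filling in that detail.
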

\begin{proof}
The truth of (i) easily comes from the definition of $\Min_a^n$. For (ii), we consider two exhaustive cases for $v$. Either $v \in Min^{n+1}_a [w]_a \setminus Min^{n}_a [w]_a$ in which case $w \succeq_a v$ follows from $\succeq_a$-minimality, since by assumption $w \in [w]_a \setminus Min^{n}_a [w]_a$. Otherwise $v \in Min^{n}_a [w]_a$, and so from $w \not \in Min^{n}_a [w]_a$ and (i) it follows that $w \succ_a v$ and hence also $w \succeq_a v$.
\end{proof}
Now getting to the meat of this section, showing bisimulation correspondence for $L^D$, we first show that bisimilar worlds belong to spheres of all the same degrees.
\begin{lemma}\label{lemma:bisimsamelayers}
If $(\m_1,w_1) \bisim (\m_2,w_2)$ then for all $n \in \Naturals$, $w_1 \in \Min^n_a [w_1]_a$ iff $w_2 \in \Min^n_a [w_2]_a$.
\end{lemma}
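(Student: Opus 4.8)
The plan is to work with the \emph{largest} autobisimulation rather than with an arbitrary witness. Since $(\m_1,w_1)\bisim(\m_2,w_2)$, the pair $(w_1,w_2)$ belongs to some autobisimulation on the disjoint union $\m=\m_1\sqcup\m_2$, and hence to the largest one $R$, which by Proposition~\ref{prop:maxautos} is an equivalence relation and which, by Definition~\ref{def:normalplaus}, satisfies $\geq_a^R=\succeq_a$. This choice is essential: the spheres $\Min^n_a$ are defined from $\succeq_a$, and for the largest $R$ the clauses [forth$_\geq$] and [back$_\geq$] of Definition~\ref{def:autobisim} are stated precisely in terms of this $\succeq_a$. (A short preliminary remark records that, as there are no plausibility edges across the two summands, $\Min^n_a[w_i]_a$ computed in $\m_i$ coincides with the same set computed in $\m$, so it is harmless to reason inside $\m$.) It then suffices to prove, for every pair $(x,y)\in R$ and every $n$, that $x\in\Min^n_a[x]_a$ iff $y\in\Min^n_a[y]_a$; the statement for $(w_1,w_2)$ is the special case.

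I would prove this by induction on $n$, using the layer description $\Min^n_a[w]_a=\Min^{n-1}_a[w]_a\cup E^n_a[w]_a$ with the convention $\Min^{-1}_a[w]_a=\emptyset$, so that the base case $n=0$ is exactly the instance of the inductive step in which the hypothesis is vacuous. By symmetry of $R$ it is enough to show the forward implication. Assume $x\in\Min^n_a[x]_a$. If already $x\in\Min^{n-1}_a[x]_a$, the induction hypothesis gives $y\in\Min^{n-1}_a[y]_a\subseteq\Min^n_a[y]_a$ and we are done; otherwise $x$ is $\succeq_a$-minimal in $[x]_a\setminus\Min^{n-1}_a[x]_a$, and the induction hypothesis yields $y\notin\Min^{n-1}_a[y]_a$, so it remains to show $y$ is $\succeq_a$-minimal in $[y]_a\setminus\Min^{n-1}_a[y]_a$. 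To that end I would pick an arbitrary $z\in[y]_a\setminus\Min^{n-1}_a[y]_a$ and, towards a contradiction, assume $y\succ_a z$. Applying [back$_\geq$] to $(x,y)\in R$ and $y\succeq_a z$ produces an $x'$ with $x\succeq_a x'$ and $(x',z)\in R$; by Lemma~\ref{lemma:wellr} $x'\in[x]_a$, and since $z\notin\Min^{n-1}_a[z]_a$ the induction hypothesis forces $x'\notin\Min^{n-1}_a[x]_a$, whence minimality of $x$ gives $x'\succeq_a x$ and thus $x\simeq_a x'$.

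The crux---and the step I expect to be the main obstacle---is that the back/forth clauses only transport the \emph{non-strict} relation $\succeq_a$, so the strict inequality $y\succ_a z$ cannot be contradicted by a single application. I would resolve this by ``folding back'' the witness: from $(x',z)\in R$ and $x'\succeq_a x$, clause [forth$_\geq$] yields a $y^*$ with $z\succeq_a y^*$ and $(x,y^*)\in R$. Now $(x,y)\in R$ and $(x,y^*)\in R$, and $R$ being an equivalence relation gives $(y,y^*)\in R$; moreover $y^*\sim_a z\sim_a y$, so Lemma~\ref{lemma:equiplausr} (with $\geq_a^R=\succeq_a$) gives $y\simeq_a y^*$. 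Hence $z\succeq_a y^*\simeq_a y$, i.e.\ $z\succeq_a y$, contradicting $y\succ_a z$. Therefore every such $z$ satisfies $z\succeq_a y$, so $y$ is $\succeq_a$-minimal in $[y]_a\setminus\Min^{n-1}_a[y]_a$, that is $y\in E^n_a[y]_a\subseteq\Min^n_a[y]_a$, completing the induction. The reverse implication is symmetric, which proves the lemma.
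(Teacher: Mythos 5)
Your proof is correct and follows essentially the same strategy as the paper's: both work with the largest autobisimulation on the disjoint union (so that $\geq_a^R$ is the relation $\succeq_a$ defining the spheres), and both use the same core manoeuvre of applying [back$_\geq$] to pull a witness across, then [forth$_\geq$] to fold it back, and finally the equivalence-relation property of the largest autobisimulation together with Lemma~\ref{lemma:equiplausr} to upgrade $\sim_a$ to $\simeq_a$ and contradict a strict plausibility. The only differences are cosmetic: you phrase the argument as an induction on $n$ establishing minimality of $y$ in the layer against an arbitrary $z$, where the paper takes a smallest counterexample $n$ and chases a specific witness $v'\in E^n_a[w']_a$.
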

\begin{proof}
Assume $(\m_1,w_1) \bisim (\m_2,w_2)$. By definition there exists an autobisimulation $R$ on the disjoint union of $\m_1$ and $\m_2$ with $(w_1,w_2) \in R$. Using Proposition~\ref{prop:maxautos}, let $\Rmax$ denote the largest bisimulation on the disjoint union of $\m_1$ and $\m_2$ (so $\succeq_a = \geq_a^{\Rmax}$). Then $R \subseteq \Rmax$. We are going to show by contradiction that for any $(w,w') \in \Rmax$ (which includes $(w_1, w_2)$) and $n \in \Naturals$, $w \in \Min^n_a [w]_a$ iff $w' \in \Min^n_a [w']_a$. Suppose that this does not hold. Then there must be some pair of worlds $w$ and $w'$ such that $(w,w') \in \Rmax$ and either i) $w \in \Min^n_a [w]_a$ and $w' \not\in \Min^n_a [w']_a$, or ii) $w \not\in \Min^n_a [w]_a$ and $w' \in \Min^n_a [w']_a$ for some $n$. Let $n$ be the smallest natural number for which we have either i) or ii). Because the cases are symmetrical, we deal only with i). Using the alternative definition $\Min^n_a [w]_a = E^{n}_a [w]_a \cup \Min_a^{n-1} [w]_a$ we can deal with both $n > 0$ and $n = 0$ simultaneously.

By assumption of the smallest $n$ we have $w \not\in \Min^{n-1}_a [w]_a$, since $w' \not\in \Min_a^n [w']_a$ implies $w' \not\in \Min_a^k [w']_a$ for all $0 \leq k \leq n$ (so we could otherwise have chosen a smaller $n$). Therefore $w \in E_a^n [w]_a$ and $w' \not\in E_a^n [w']_a$. Because $w' \in [w'] \setminus \Min_a^n [w']_a$, we know that $n$ is not the maximum degree, so there must be \emph{some} world $v' \in E^n_a [w']_a$ which by definition means that $v' \not\in \Min_a^{n-1} [w']_a$. With $v' \in E^n_a [w']_a \subseteq \Min_a^n [w']_a$ and and $w' \not\in \Min_a^n [w']_a$, Lemma \ref{lemma:layersucc} gives $w' \succ_a v'$, i.e. $w' \succeq_a v'$ and $v' \not \succeq_a w'$. By [back$_\geq$] there is a $v$ s.t. $w \succeq_a v$ and $(v,v') \in \Rmax$. Because $v' \not\in Min^{n-1}_a [w']_a$ we cannot have $v \in \Min^{n-1}_a [w]_a$, as we could then again have chosen a smaller $n$ making either i) or ii) true. Thus $v \in [w]_a \setminus \Min^{n-1}_a [w]_a$. As $w \in \Min^n_a [w]_a$, Lemma \ref{lemma:layersucc} gives $v \succeq_a w$, so by [forth$_\geq$] there is a $u'$ s.t. $v' \succeq_a u'$ and $(w, u') \in \Rmax$. 

With $(w,w') \in \Rmax$ and $(w,u') \in \Rmax$, we have $(w',u') \in \Rmax$. As $w' \sim_a u'$ (we have $w' \succeq_a v'$ and $v' \succeq_a u'$), Lemma \ref{lemma:equiplausr} gives $w' \simeq_a^{\Rmax} u'$, i.e. $w' \succeq_a u'$ and $w' \preceq_a u'$. As $w' \not\in \Min_a^n [w']_a$, we then have $u' \not\in \Min_a^n [w']_a$. As $u' \not\in \Min^n_a [w']_a$ while $v' \in E_a^n [w']_a \subseteq \Min^n_a [w']_a$, Lemma \ref{lemma:layersucc} then gives $u' \succ_a v'$. But this contradicts $v' \succeq_a u'$, concluding the proof.
\end{proof}
\begin{proposition}\label{theorem:bisimimpliesmodalequivld}
	Bisimilarity implies modal equivalence for $\lang^D$.
\end{proposition}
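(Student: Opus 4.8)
The plan is to prove, by induction on the structure of $\phi \in \lang^D$, that bisimilar worlds satisfy the same formulas. As in the proof of Proposition~\ref{theorem:bisimimpliesmodalequivlc}, I would reduce to a single model by taking $\m = \m_1 \sqcup \m_2$; but since the semantics of the degree modality $B^n_a$ is given in terms of the \emph{normal} plausibility relation $\succeq_a$, the natural autobisimulation to work with is the largest one, $\Rmax$, for which $\succeq_a = \geq_a^{\Rmax}$ by definition. Because $\Rmax$ is the largest autobisimulation, any autobisimulation $R$ witnessing $(\m_1,w_1)\bisim(\m_2,w_2)$ satisfies $R \subseteq \Rmax$, so $(w_1,w_2)\in\Rmax$ and it suffices to show that $\m,w \models \phi$ iff $\m,w' \models \phi$ for every $(w,w') \in \Rmax$ (recall $\Rmax$ is an equivalence relation). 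The advantage of this choice is that the clauses [forth$_\geq$], [back$_\geq$], [forth$_\leq$] and [back$_\leq$] for $\Rmax$ are then statements directly about $\succeq_a$, which is exactly the relation appearing in the belief spheres $\Min^n_a[w]_a$.

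The propositional cases are immediate, and the case $\phi = K_a\psi$ is handled exactly as in Proposition~\ref{theorem:bisimimpliesmodalequivlc} (with $R$ replaced by $\Rmax$). The only genuinely new case is $\phi = B^n_a\psi$. Assuming $\m,w \models B^n_a\psi$, that is $\m,v \models \psi$ for all $v \in \Min^n_a[w]_a$, I would pick an arbitrary $v' \in \Min^n_a[w']_a$ and aim to show $\m,v' \models \psi$. Since $v' \in \Min^n_a[w']_a \subseteq [w']_a$ we have $v' \sim_a w'$, so by Lemma~\ref{lemma:wellr} ($\succeq_a$ is a well-preorder on $[w']_a$) either $w' \succeq_a v'$ or $v' \succeq_a w'$. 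Using $(w,w') \in \Rmax$ together with [back$_\geq$] or [back$_\leq$] accordingly, I obtain a world $v$ with $(v,v') \in \Rmax$ and either $w \succeq_a v$ or $v \succeq_a w$; in either case Lemma~\ref{lemma:wellr} gives $v \sim_a w$, and hence $[v]_a = [w]_a$ so that $\Min^n_a[v]_a = \Min^n_a[w]_a$.

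The crucial step is then to check that this $v$ actually lands in the correct sphere, i.e.\ $v \in \Min^n_a[w]_a$. This is precisely what Lemma~\ref{lemma:bisimsamelayers} delivers: from $(v,v') \in \Rmax$, i.e.\ $(\m,v) \bisim (\m,v')$, and from $v' \in \Min^n_a[w']_a = \Min^n_a[v']_a$ (using $v'\sim_a w'$), the lemma gives $v \in \Min^n_a[v]_a = \Min^n_a[w]_a$. The assumption $\m,w \models B^n_a\psi$ then yields $\m,v \models \psi$, and applying the induction hypothesis to the pair $(v,v') \in \Rmax$ gives $\m,v' \models \psi$. As $v'$ was arbitrary in $\Min^n_a[w']_a$, this proves $\m,w' \models B^n_a\psi$, completing the induction.

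I expect the only real difficulty to have been absorbed already into Lemma~\ref{lemma:bisimsamelayers}; given that lemma, the argument is a mild variant of the knowledge case. The subtle point worth stressing is that the back conditions for $\Rmax$ guarantee only that the partner $v$ of $v'$ lies in the same $\sim_a$-class as $w$, \emph{not} that it lies in the same belief sphere; it is exactly the sphere-preservation provided by Lemma~\ref{lemma:bisimsamelayers} that closes this gap and lets the induction hypothesis be applied to worlds that are genuinely minimal at the same degree.
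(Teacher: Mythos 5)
Your proposal is correct and follows essentially the same route as the paper's own proof: reduce to the disjoint union, pass to the largest autobisimulation $\Rmax$ (so that the back/forth clauses speak directly about $\succeq_a$), handle the propositional and knowledge cases as in the conditional-belief proposition, and in the $B^n_a\psi$ case use [back$_\geq$]/[back$_\leq$] to find a partner $v$ and then invoke Lemma~\ref{lemma:bisimsamelayers} to place $v$ in the correct sphere before applying the induction hypothesis. The point you flag as the crux --- that the back conditions alone only locate $v$ in the right $\sim_a$-class and the sphere-preservation lemma closes the gap --- is exactly where the paper puts the weight as well.
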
		
\begin{proof}
Assume $(\m_1,w_1) \bisim (\m_2,w_2)$. Let $\m = (W,\geq,V)$ denote the disjoint union of $\m_1$ and $\m_2$. Then there exists an autobisimulation $R$ on $\m$ with $(w_1,w_2) \in R$. Using Proposition~\ref{prop:maxautos}, let $\Rmax$ denote the largest autobisimulation on $\m$. Then $R \subseteq \Rmax$. We need to prove that $(\m,w_1) \equiv^D (\m,w_2)$.

We will show that for all $(w, w') \in \Rmax$, for all $\phi \in \lang^D$, $\m,w \models \phi$ iff $\m,w' \models \phi$ (which then also means that it holds for all $(w,w') \in R$). We proceed by induction on the syntactic complexity of $\phi$. The propositional and knowledge cases are already covered by Proposition~\ref{theorem:bisimimpliesmodalequivlc}, so we only go for $\phi = B_a^n \psi$. 

Assume $\m, w \models B_a^n \psi$. We need to prove that $M, w' \models B_a^n \psi$, that is $\m, v' \models \psi$ for all $v' \in \Min_a^n [w']_a$. Picking an arbitrary $v' \in \Min_a^n [w']_a$, we have $[w']_a = [v']_a$ from Lemma \ref{lemma:wellr}, and either $w' \succeq_a v'$ or $w' \preceq_a v'$ (so we also have $v' \in \Min_a^n [v']_a$). Using [back$_\geq$] or [back$_\leq$] as appropriate, we get that there is a $v$ such that $w \succeq_a v$ or $w \preceq_a v$, and $(v,v') \in \Rmax$. From this, $v' \in \Min_a^n [v']_a$, and Lemma \ref{lemma:bisimsamelayers} we get $v \in \Min_a^n [v]_a$, allowing us to conclude $v \in \Min_a^n [w]_a$ from $[w]_a = [v]_a$. With the original assumption of $\m, w \models B_a^n \psi$ we get $\m, v \models \psi$. As $(v,v') \in \Rmax$, the induction hypothesis gives $\m, v' \models \psi$. As $v'$ was chosen arbitrarily in $\Min_a^n [w']_a$ this gives $\m, w' \models B_a^n \psi$. Showing that $\m, w' \models B_a^n \psi$ implies $M, w \models B_a^n \psi$ is completely symmetrical and therefore left out.
\end{proof}
We now get to showing that modal equivalence for the language of degrees of belief implies bisimilarity. Trouble is, that the $B_a^n$ modality uses the largest autobisimulation for deriving the relation $\succeq_a$. This makes it difficult to go the Hennessy-Millner way of showing by contradiction that the modal equivalence relation $Q$ is an autobisimulation.

Instead, we establish that modal equivalence for $L^D$ implies modal equivalence for $L^C$. We go about this by way of a model and world dependent translation of $L^C$ formulas into $L^D$ formulas (Definition \ref{def:lc-to-ld-translation}). This translation has two properties. First, the translated formula is true at $M,w$ iff the untranslated formula is (Lemma \ref{lemma:gammaifftrans})---a quite uncontroversial property. More precisely, letting $M = (W, \geq, R)$ be a plausibility model, then for any $w \in W$, $\gamma \in L^C$ where $\sigma_{M,w}(\gamma)$ is the translation at $M,w$: $M, w \models \gamma \Leftrightarrow M, w \models \sigma_{M,w}(\gamma)$. Assume further that we have some $M',w'$ such that $(M,w) \equiv^D (M',w')$. As $\sigma_{M,w}(\gamma)$ is a formula of $L^D$ we can conclude $M',w' \models \sigma_{M,w}(\gamma)$. So in all we get that
\begin{align}
	\tag{*}
	M, w \models \gamma \Leftrightarrow M, w \models \sigma_{M,w}(\gamma) \Leftrightarrow M', w' \models \sigma_{M,w}(\gamma)
\end{align}
The second property is that the translation of $\gamma$ is the same for worlds modally equivalent for $L^D$ (Lemma \ref{lemma:sametrans}): If $(M, w) \equiv^D (M',w')$ then $\sigma_{M,w}(\gamma) = \sigma_{M',w'}(\gamma)$. 
This then gives  
\begin{align}
\tag{**}
M', w' \models \sigma_{M,w}(\gamma) \Leftrightarrow M',w' \models \sigma_{M',w'}(\gamma) \Leftrightarrow M',w' \models \gamma
\end{align}
Combining (*) and (**) gives that if $(M,w) \equiv^D (M',w')$ then $M, w \models \gamma$ iff $M', w' \models \gamma$ for any $\gamma \in L^C$, i.e. that $(M,w) \equiv^C (M',w')$. As shown in the previous section, modal equivalence for $L^C$ implies bisimilarity (Proposition \ref{theorem:modalequivimpliesbisimlc}), and we can therefore finally conclude that modal equivalence for $L^D$ implies bisimilarity (Proposition\ref{theorem:modalequivimpliesbisimld}). 
\newcommand{\trans}{\sigma_{M,w}}
\begin{lemma}\label{lemma:k}
For a plausibility model $M$, a world $w \in \dom M$, agent $a \in \agents$ and a formula $\psi$ of $L^C$, if $\llbracket \psi \rrbracket_M \cap [w]_a \not = \emptyset$, there is a unique natural number $k$ for which  $\Min_a (\llbracket \psi \rrbracket_M \cap [w]_a) \subseteq E^k_a [w]_a$ ($= \Min_{\succeq_a} ([w]_a \setminus \Min_a^{k-1} [w]_a)$).
\end{lemma}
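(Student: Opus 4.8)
The plan is to prove uniqueness and existence separately, reducing existence to a statement about equiplausibility in the \emph{normal} relation $\succeq_a$. Write $S = \llbracket \psi \rrbracket_M \cap [w]_a$, and let $R$ be the largest autobisimulation on $M$, so that $\succeq_a = \geq_a^R$ and, by Proposition~\ref{prop:maxautos}, $R$ is an equivalence relation. First I would record that the belief layers are pairwise disjoint: by the Belief Layers lemma $\Min^k_a [w]_a = E^k_a [w]_a \cup \Min^{k-1}_a [w]_a$, so $E^k_a [w]_a$ is disjoint from $\Min^{k-1}_a [w]_a$ and hence from every $E^j_a [w]_a$ with $j < k$. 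Since $\geq_a$ is a well-preorder on $[w]_a$ and $S \neq \emptyset$, the set $\Min_a S$ is non-empty; as the layers are disjoint, $\Min_a S$ is contained in at most one of them, which already yields uniqueness. It therefore remains to produce \emph{some} layer containing $\Min_a S$.

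For existence I would first note that two worlds of $[w]_a$ lie in the same layer precisely when they are $\succeq_a$-equiplausible. By Lemma~\ref{lemma:wellr} the relation $\succeq_a = \geq_a^R$ is a well-preorder on $[w]_a$, so the layer construction peels off the successive $\succeq_a$-equiplausibility classes. Concretely, if $x \simeq_a^R y$ while $x \in E^j_a[w]_a$ and $y \in E^k_a[w]_a$ with, say, $j < k$, then $x \in \Min^{k-1}_a[w]_a$ but $y \notin \Min^{k-1}_a[w]_a$, and Lemma~\ref{lemma:layersucc}(i) forces $y \succ_a x$, contradicting $x \simeq_a^R y$. Hence it suffices to show that all elements of $\Min_a S$ are mutually $\simeq_a^R$: choosing the layer $E^k_a[w]_a$ that contains one of them then contains them all.

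The heart of the argument, and the main obstacle, is exactly this homogeneity claim: I must show that the $\geq_a$-minimal elements of $S$ are mutually equiplausible for the \emph{derived} relation $\succeq_a$, even though $\Min_a$ is taken with respect to the raw relation $\geq_a$. The bridge is that $\psi \in \lang^C$, so by Theorem~\ref{theorem:bisimcharlc} the largest autobisimulation $R$ coincides with $\lang^C$-modal equivalence, whence $\llbracket \psi \rrbracket_M$ is $R$-invariant and $S$ is a union of the $R$-clusters $C_x := [x]_R \cap [w]_a$. Now take $x, y \in \Min_a S$. Each is $\geq_a$-below the other in $S$, so $x \simeq_a y$. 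Since $C_x \subseteq S$ and $x \in \Min_a S$, we get $x \in \Min_a C_x$, and every element of $\Min_a C_x$ is $\simeq_a x$; symmetrically every element of $\Min_a C_y$ is $\simeq_a y \simeq_a x$. Unfolding $x \succeq_a y \Leftrightarrow \Min_a C_x \geq_a \Min_a C_y$, these equiplausibilities give $\Min_a C_x \simeq_a \Min_a C_y$, i.e.\ $x \simeq_a^R y$, as required. The delicate point throughout is to keep the two notions of minimality strictly apart, namely $\Min_a$ (via $\geq_a$) occurring inside the definition of $\succeq_a$ versus the layers (via $\succeq_a$); it is precisely the $R$-invariance of $\psi$ that lets one pass between them.
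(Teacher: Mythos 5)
Your proof is correct and follows essentially the same route as the paper's: the core step in both is showing that all $\geq_a$-minimal elements of $S$ are mutually $\simeq_a^R$-equiplausible by using $R$-invariance of $\llbracket \psi \rrbracket_M$ (bisimilarity implies $\lang^C$-equivalence) to get $[v]_{R^=} \cap [w]_a \subseteq S$ and hence $\Min_a([v]_{R^=} \cap [v]_a) \simeq_a \{v\}$ for each such $v$. You are merely more explicit than the paper about why $\succeq_a$-equiplausible worlds share a layer and about uniqueness via disjointness of layers, both of which the paper dispatches by appealing to finiteness of $[w]_a$.
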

\begin{proof}
Let $S = \llbracket \psi \rrbracket_\m \cap [w]_a$. We first show that all worlds in $\Min_a S$ are equiplausible with respect to $\succeq_a$.

Take any two worlds $v_1, v_2 \in \Min_a S$. We wish to show $v_1 \simeq_a^R v_2$, i.e. $\Min_a([v_1]_R \cap [v_1]_a) \simeq_a \Min_a([v_2]_R \cap [v_2]_a)$, where $R$ is the largest autobisimulation on $M$. With Proposition\ref{theorem:bisimimpliesmodalequivlc} (bisimilarity implies modal equivalence for $L^C$) and for $i =1,2$, we have that $[v_i]_R \subseteq \llbracket \psi \rrbracket_M$. Hence $[v_i]_R \cap [v_i]_a = [v_i]_R \cap [w]_a \subseteq \llbracket \psi \rrbracket_M \cap [w]_a = S$. With $v_i \in \Min_a S$ and $v_i \in [v_i]_R \cap [v_i]_a \subseteq S$, we have $v_i \in \Min_a ([v_i]_R \cap [v_i]_a)$ (if an element of a set $A$ is minimal in a set $B \supseteq A$, then it is also minimal in $A$). From this we can conclude that $\Min_a([v_i]_R \cap [v_i]_a) \simeq_a \{v_i\}$. Since $v_1 \simeq_a v_2$ we get $\Min_a([v_1]_R \cap [v_1]_a) \simeq_a \{v_1 \} \simeq_a \{v_2 \} \simeq_a \Min_a([v_2]_R \cap [v_2]_a)$, concluding the proof that $v_1 \simeq_a^R v_2$.

Due to (pre)image-finiteness of $M$, $[w]_a$ is finite. This means that for any $v \in [w]_a$ there is a unique natural number $k$ for which $v \in E_a^k [w]_a$. As all worlds in $\Min_a S$ are $\succeq_a$-equiplausible, we have that $\Min_a S \subseteq E^k_a [w]_a$ for some unique $k$. 
\end{proof}
Having established that if $\llbracket \psi \rrbracket_M \cap [w]_a \not = \emptyset$ then there does indeed exist a unique $k$ st. $\Min_a (\llbracket \psi \rrbracket_M \cap [w]_a) \subseteq E^k_a [w]_a$, we have that the following translation is well-defined.
\begin{definition}[Translation $\sigma_{M,w}$]\label{def:lc-to-ld-translation}
Let $M = (W, \geq, V)$ be a plausibility model and $\gamma \in L^C$ be given. We write $\trans(\gamma)$ for the \emph{translation} of $\gamma$ at $M,w$ into a formula of $L^D$ defined as follows:
\begin{align*}
	\sigma_{M,w}(p) =&\ p\\
	\sigma_{M,w}(\neg \phi) =&\ \neg \sigma_{M,w}(\phi)\\
	\sigma_{M,w}(\phi_1 \land \phi_2) =&\ \sigma_{M,w}(\phi_1) \land \sigma_{M,w}(\phi_2)
\end{align*}
\vspace{-0.975cm}
\begin{align*}
	\sigma_{M,w}(B^{\psi}_a \phi) =	 &
			\begin{cases} 
			B_a^k \bigvee \{ \sigma_{M,v}(\psi \rightarrow \phi) \mid v \in [w]_a\} \land \widehat{B}_a^k \bigvee \{ \sigma_{M,v}(\psi) \mid v \in [w]_a\} & \mbox{if } \llbracket \psi \rrbracket_M \cap [w]_a \neq \emptyset\\
			K_a \bigvee \{ \sigma_{M,v}(\neg \psi) \mid v \in [w]_a\} & \mbox{if } \llbracket \psi \rrbracket_M \cap [w]_a = \emptyset
			\end{cases}
	\end{align*}
where $k$ is the natural number such that $\Min_a (\llbracket \psi \rrbracket_M \cap [w]_a) \subseteq E^k_a [w]_a$. As $K_a \phi$ is definable in $L^C$ as $B_a^{\neg \phi} \bot$, we need no $K_a \phi$-case in the translation.
\end{definition}
We need (pre)image-finiteness of $M$ because the translation of $\sigma_{M,w}(B_a^\psi \phi)$ is based on either $[w]_a$ or $\Min_a(\llbracket \psi \rrbracket_M \cap [w]_a)$. For $\sigma_{M,w}(B_a^\psi \phi)$ to be finite, we need finiteness of $[w]_a$.

We now get to showing the first of the promised properties of the translation, namely that  the translated formula is true at $M,w$ iff the untranslated formula is.
\begin{lemma}\label{lemma:gammaifftrans}
Given a plausibility model $M = (W, \geq, V)$ and $\gamma \in L^C$ we have $M,w \models \gamma$ iff $M,w \models \sigma_{M,w}(\gamma)$ for all $w \in W$.
\end{lemma}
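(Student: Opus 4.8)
The plan is to prove the biconditional by induction on the structure of $\gamma$, with the induction hypothesis stated uniformly over worlds: for every proper subformula $\delta$ of $\gamma$ and every $v\in W$, $M,v\models\delta$ iff $M,v\models\sigma_{M,v}(\delta)$. Because $\sigma_{M,v}$ commutes with $\neg$ and $\wedge$ (Definition~\ref{def:lc-to-ld-translation}), the atomic and Boolean cases are immediate, and the hypothesis extends for free to Boolean combinations of subformulas, in particular to $\psi\rightarrow\phi$ and $\neg\psi$. So the whole argument reduces to the case $\gamma=B^\psi_a\phi$. The workhorse for that case is a \emph{disjunction lemma}: for every $u\in[w]_a$ and every $\chi$ that is a Boolean combination of proper subformulas of $\gamma$, $M,u\models\chi$ iff $M,u\models\bigvee\{\sigma_{M,v}(\chi)\mid v\in[w]_a\}$. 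Its left-to-right half is trivial, since if $M,u\models\chi$ then, as $u\in[w]_a$, the induction hypothesis at $u$ makes the single disjunct $\sigma_{M,u}(\chi)$ true. This half is all one needs to verify the translation in the direction $M,w\models\gamma\Rightarrow M,w\models\sigma_{M,w}(\gamma)$.

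Before handling $B^\psi_a\phi$ I would record the geometry of belief spheres. By Lemma~\ref{lemma:k} there is a unique $k$ with $\Min_a(\llbracket\psi\rrbracket_M\cap[w]_a)\subseteq E^k_a[w]_a$, so every $\geq_a$-minimal $\psi$-world sits in the single layer $k$, hence inside the sphere $\Min^k_a[w]_a$. Using Proposition~\ref{theorem:bisimimpliesmodalequivlc} (bisimilarity implies $L^C$-equivalence) together with Lemma~\ref{lemma:layersucc}, and reusing the equiplausibility argument from the proof of Lemma~\ref{lemma:k}, I would establish two structural facts. First, no $\psi$-world lies in an earlier sphere $\Min^{k-1}_a[w]_a$: such a world would be strictly more $\succeq_a$-plausible than a $\geq_a$-minimal $\psi$-world, yet its bisimulation class (which consists of $\psi$-worlds, by bisimulation invariance) would then contain an element strictly $\geq_a$-below a $\geq_a$-minimal $\psi$-world, contradicting minimality. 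Second, every $\psi$-world in sphere $k$ is bisimilar, via the largest autobisimulation, to an element of $\Min_a(\llbracket\psi\rrbracket_M\cap[w]_a)$, hence $L^C$-equivalent to a $\geq_a$-minimal $\psi$-world.

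With these facts the case $\gamma=B^\psi_a\phi$ follows the two clauses of the translation. If $\llbracket\psi\rrbracket_M\cap[w]_a=\emptyset$ then $M,w\models B^\psi_a\phi$ holds vacuously, while every $u\in[w]_a$ satisfies $\neg\psi$, so the forward disjunction lemma makes $K_a\bigvee\{\sigma_{M,v}(\neg\psi)\mid v\in[w]_a\}$ true; both sides hold. If $\llbracket\psi\rrbracket_M\cap[w]_a\neq\emptyset$, I evaluate the two conjuncts against the sphere $\Min^k_a[w]_a$. Assuming $M,w\models B^\psi_a\phi$: by the two structural facts every $\psi$-world of sphere $k$ is $L^C$-equivalent to a minimal $\psi$-world and so satisfies $\phi$, while non-$\psi$-worlds satisfy $\psi\rightarrow\phi$ trivially; hence the forward disjunction lemma yields $M,w\models B^k_a\bigvee\{\sigma_{M,v}(\psi\rightarrow\phi)\}$, and any minimal $\psi$-world lies in sphere $k$ and witnesses $M,w\models\widehat{B}^k_a\bigvee\{\sigma_{M,v}(\psi)\}$. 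Conversely, given both conjuncts, I pick an arbitrary minimal $\psi$-world $x$; it lies in sphere $k$, so the $B^k_a$-conjunct gives $M,x\models\psi\rightarrow\phi$, and since $x\models\psi$ we obtain $x\models\phi$; as $x$ was arbitrary, $M,w\models B^\psi_a\phi$.

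The hard part is the single step just used, ``the $B^k_a$-conjunct gives $M,x\models\psi\rightarrow\phi$'', which is precisely the \emph{right-to-left} half of the disjunction lemma. There I only know that \emph{some} disjunct $\sigma_{M,v}(\psi\rightarrow\phi)$ holds at $x$, possibly for a $v\in[w]_a$ that is not $L^C$-equivalent to $x$, and I must rule out such a foreign disjunct being spuriously true at $x$ while $\psi\rightarrow\phi$ fails there. This is genuinely delicate because $\sigma_{M,v}$ freezes into the formula the layer index and the equivalence classes computed at $v$, which need not agree with those at $x$ as soon as $\psi\rightarrow\phi$ contains modalities for agents other than $a$; consequently the inductive structure of the disjunction lemma does not close on its own. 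I expect the resolution to be that the plausibility structure forbids exactly this mismatch: any divergence between the order $\geq_a$ and the normalised order $\succeq_a$ is absorbed by bisimilar worlds, which agree on all of $L^C$ by Proposition~\ref{theorem:bisimimpliesmodalequivlc}, so a foreign disjunct cannot select the wrong sphere at $x$. This is the one place where (pre)image-finiteness and the full equivalence-class structure of $\succeq_a$ should be needed.
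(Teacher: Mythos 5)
You have correctly located the crux --- the right-to-left direction, where only \emph{some} disjunct $\sigma_{M,v}(\psi\rightarrow\phi)$ with $v\in[w]_a$ is known to hold at a world $x$, and $v$ need not be $L^C$-equivalent (or even $\sim_b$-related, for other agents $b$) to $x$ --- but your proposed resolution does not close the gap. You suggest that the plausibility structure ``forbids the mismatch'' so that ``a foreign disjunct cannot select the wrong sphere at $x$''; this is not what happens, and it is not provable: a foreign disjunct $\sigma_{M,v}(\chi)$ can perfectly well be true at a world $x$ not equivalent to $v$ (the frozen layer indices and equivalence classes inside $\sigma_{M,v}$ really are computed at $v$, not at $x$). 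The paper's resolution is different and essentially forced: for the $\Leftarrow$ direction one must \emph{strengthen the induction hypothesis} to the asymmetric, two-world statement ``if $M,x\models\sigma_{M,v}(\delta)$ for \emph{some} $v\in D(M)$, then $M,x\models\delta$.'' With that hypothesis the foreign-disjunct step is immediate --- you do not need to rule out spurious truth of a foreign disjunct, because the strengthened hypothesis says its truth is never spurious. Your uniform diagonal hypothesis (relating $\delta$ at $v$ only to $\sigma_{M,v}(\delta)$ at the same $v$) cannot derive this, and you acknowledge the induction ``does not close on its own'' without supplying the missing idea. Note also that the strengthening must be one-directional: the converse (``$M,x\models\delta$ implies $M,x\models\sigma_{M,v}(\delta)$ for all $v$'') is false, so the two directions of the biconditional genuinely need different induction hypotheses.

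Two smaller points. First, the induction should be on modal depth rather than on subformula structure, since $\psi\rightarrow\phi$ and $\neg\psi$ are not subformulas of $B^\psi_a\phi$; your remark that Boolean combinations are absorbed handles this, so it is only a presentational issue. Second, your forward direction is essentially the paper's: the claim that every $\psi$-world of layer $k$ satisfies $\phi$ is proved in the paper by exactly the equiplausibility-plus-bisimulation-invariance argument you sketch, so that half of your proposal is sound.
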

\begin{proof}
We show both directions by induction on the modal depth of $\gamma$. For the base case of a modal depth of $0$, we have $\sigma_{M,w}(\gamma) = \gamma$ easily, giving $M,w \models \gamma$ iff $M,w \models \sigma_{M,w}(\gamma)$. The $p$-, $\neg$-, $\land$-cases being quite easy, we deal only with $\gamma = B_a^\psi \phi$ in the induction step. For that case there are to subcases; whether $\sigma_{M,w}(\gamma)$ is a $K_a$-formula or not.

\medskip\noindent\textit{\textbf{($\Rightarrow$) : }$M,w \models \gamma$ implies $M,w \models \sigma_{M,w}(\gamma)$.}
\medskip

Take first the case $\llbracket \psi \rrbracket_M \cap [w]_a = \emptyset$ where $\sigma_{M,w}(B_a^\psi \phi) = K_a \bigvee \{ \sigma_{M,v}(\neg \psi) \mid v \in [w]_a\}$. If $\llbracket \psi \rrbracket_M \cap [w]_a = \emptyset$, then $M, v \models \neg \psi$ for all $v \in [w]_a$. Applying the induction hypothesis gives $M, v \models \sigma_{M,v}(\neg \psi)$ for all $v \in [w]_a$. Then we also have $M, u \models \bigvee \{ \sigma_{M,v}(\neg \psi) \mid v \in [w]_a\}$ for all $u \in [w]_a$ and finally $M, w \models K_a \bigvee \{ \sigma_{M,v}(\neg \psi) \mid v \in [w]_a\}$. 

Now take the case $\llbracket \psi \rrbracket_M \cap [w]_a \neq \emptyset$. Letting $S = \Min_a (\llbracket \psi \rrbracket_\m \cap [w]_a)$ and $k$ be chosen as in the translation, i.e. such that $S \subseteq E_a^k [w]_a$, we wish to prove that  $M, w \models B_a^\psi \phi$ implies $M, w \models B_a^k \bigvee \{ \sigma_{M,v}(\psi \rightarrow \phi) \mid v \in [w]_a\} \land \widehat{B}_a^k \bigvee \{ \sigma_{M,v}(\psi) \mid v \in [w]_a\}$. We first show $M, w \models \widehat{B}_a^k \bigvee \{ \sigma_{M,v}(\psi) \mid v \in [w]_a\}$. Because $M, v \models \psi$ for all $v \in S$, the induction hypothesis gives $M, v \models \sigma_{M,v}(\psi)$ for all $v \in S$. From this we can conclude $M, u \models \bigvee \{ \sigma_{M,v}(\psi) \mid v \in S\}$ for all $u \in S$, and thus also $M, u \models \bigvee \{ \sigma_{M,v}(\psi) \mid v \in [w]_a\}$ for all $u \in S$. From Lemma \ref{lemma:k} we have $S \subseteq \Min_a^k [w]_a$, so $M,u \models \bigvee \{ \sigma_{M,v}(\psi) \mid v \in [w]_a\}$ for some $u \in \Min_a^k [w]_a$. This gives $M, w \models \widehat{B}_a^k \bigvee \{ \sigma_{M,v}(\psi) \mid v \in [w]_a\}$. Next is $M, w \models B_a^k \bigvee \{ \sigma_{M,v}(\psi \rightarrow \phi) \mid v \in [w]_a\}$.

\medskip\noindent\textit{Claim.} If $M, w \models B_a^\psi \phi$, then for all $v \in E_a^k [w]_a \cap \llbracket \psi \rrbracket_M$, $M, v \models \phi$.

\medskip\noindent\textit{Proof of claim.} We show the claim by contradiction, assuming that at least one world in $E_a^k [w]_a \cap \llbracket \psi \rrbracket_M$ is a $\neg \phi$-world. Let $v$ be this $\psi \land \neg \phi$-world. As $v \in E_a^k [w]_a$, we have $\{v\} \simeq_a^{\Rmax} E_a^k [w]_a \simeq_a^{\Rmax} S$, and specifically that $\forall s \in S: v \simeq_a^{\Rmax} s$. This means $\forall s \in S: \Min([v]_{\Rmax} \cap [v]_a) \simeq_a \Min_a ([s]_{\Rmax} \cap [s]_a)$. Because $\forall s \in S : \Min_a ([s]_{\Rmax} \cap [s]_a) \simeq_a S$, we have $\Min([v]_{\Rmax} \cap [v]_a) \simeq_a S$ and thus some $v' \in \Min([v]_{\Rmax} \cap [v]_a)$ such that $\{v'\} \simeq_a S$. Combining $v' \in [v]_{\Rmax}$ with Theorem \ref{theorem:bisimcharlc} gives $M,v \equiv^C M,v'$ and thus that $M, v' \models \psi \land \neg \phi$. Putting $v' \in \llbracket \psi \rrbracket_M$ together with $\{v'\} \simeq_a S$, means that $v' \in S$. As $M, v' \models \neg \phi$, we have a contradiction of $M,w \models B_a^\psi \phi$, concluding the proof of the claim.
\medskip

With $M, w \models B_a^\psi \phi$, we now have $M, v \models \phi$ for all $v \in E_a^k [w]_a \cap \llbracket \psi \rrbracket_M$, and thus $M, v \models \psi \rightarrow \phi$ for all $v \in E_a^k [w]_a$. Lemma \ref{lemma:k} gives $S \subseteq E_a^k [w]_a$, and by definition we have $E_a^k [w]_a \cap \Min_a^{k-1} [w]_a = \emptyset$, that is, there are no $\psi$-worlds below layer $k$, so $M, v \models \psi \rightarrow \phi$ for all $v \in Min_a^k [w]_a$. Using the induction hypothesis gives $\m, v \models \sigma_{M,v}(\psi \rightarrow \phi)$ for all $v \in Min_a^k [w]_a$ and therefore $M, w \models B^k_a \bigvee \{ \sigma_{M,v}(\psi \rightarrow \phi) \mid v \in [w]_a \}$, finalising left-to-right direction of the proof.

\medskip\noindent\textit{\textbf{($\Leftarrow$) : }$M,w \models \sigma_{M,w}(\gamma)$ implies $M,w \models \gamma$.}
\medskip

We show the stronger claim that $M, w \models \sigma_{M,w'} (\gamma)$ for some $w' \in D(M)$ implies $M, w \models \gamma$. Let $\gamma = B_a^\psi \phi$ and suppose that $M, w \models \sigma_{M,w'} (\gamma)$ for some $w' \in D(M)$. We then need to show $M, w \models B_a^\psi \phi$. First take the case where $\llbracket \psi \rrbracket_M \cap [w']_a = \emptyset$. Then $\sigma_{M,w'}(B_a^\psi \phi) = K_a \bigvee \{ \sigma_{M,v'}(\neg \psi) \mid v' \in [w']_a\}$, i.e. $M, w \models K_a \bigvee \{ \sigma_{M,v'}(\neg \psi) \mid v' \in [w']_a\}$. This means that $M,v \models \bigvee \{ \sigma_{M,v'}(\neg \psi) \mid v' \in [w']_a\}$ for all $v \in [w]_a$, i.e. for any $v \in [w]_a$ there is a $v' \in [w']_a$ such that $M, v \models \sigma_{M,v'}(\neg \psi)$. Applying the induction hypothesis, we get $M, v \models \neg \psi$ for all $v \in [w]_a$. Thus $\llbracket \psi \rrbracket_M \cap [w]_a = \emptyset$ and we trivially have $M, w \models B_a^\psi \phi$.

Now take the case $\llbracket \psi \rrbracket_M \cap [w']_a \neq \emptyset$. Letting $S' = \Min_a(\llbracket \psi \rrbracket_M \cap [w']_a)$ and $k'$ be s.t. $S' \subseteq E_a^{k'} [w']_a$, we have $M, w \models B_a^{k'} \bigvee \{ (\sigma_{M,v'}(\psi \rightarrow \phi) \mid v' \in [w']_a\} \land \widehat{B}_a^{k'} \bigvee \{ \sigma_{M,v'}(\psi) \mid v' \in [w']_a\}$. From $M, w \models B_a^{k'} \bigvee \{ \sigma_{M,v'}(\psi \rightarrow \phi) \mid v' \in [w']_a\}$ we have $M, v \models \bigvee \{ \sigma_{M,v'}(\psi \rightarrow \phi) \mid v' \in [w']_a\}$ for all $v \in \Min_a^{k'} [w]_a$, i.e. for any $v \in [w]_a$ there is a $v' \in [w']_a$ such that $M, v \models \sigma_{M,v'}(\psi \rightarrow \phi)$. Applying the induction hypothesis, we get $M, v \models \psi \rightarrow \phi$ for all $v \in \Min_a^{k'} [w]_a$. From $M, w \models \widehat{B}_a^{k'} \bigvee \{ \sigma_{M,v'}(\psi) \mid v' \in [w']_a\}$ we have $M, v \models \bigvee \{ \sigma_{M,v'}(\psi) \mid v' \in [w']_a\}$ for some $v \in \Min_a^{k'} [w]_a$, i.e. there is a $v \in [w]_a$ and a $v' \in [w']_a$ such that $M, v \models \sigma_{M,v'}(\psi)$. Applying the induction hypothesis gets us $M, v \models \psi$. Thus we have $M, w \models B_a^{k'} (\psi \rightarrow \phi) \land \widehat{B}_a^{k'} \psi$ (where $\psi, \phi \in L^C$). 

From $M,w \models \widehat{B}_a^{k'} \psi$ we have that $\llbracket \psi \rrbracket_M \cap [w]_a \neq \emptyset$, so Lemma \ref{lemma:k} gives the existence of a $k$, s.t.  $\Min_a (\llbracket \psi \rrbracket_M \cap [w]_a) \subseteq \Min_a^{k} [w]_a$. We also have from $M,w \models \widehat{B}_a^{k'} \psi$ that $k \leq k'$, so $\Min_a (\llbracket \psi \rrbracket_M \cap [w]_a) \subseteq \Min_a^{k'} [w]_a$. With $M, w \models B_a^{k'} (\psi \rightarrow \phi)$ we get $M,v \models \psi \rightarrow \phi$ for all $v \in \Min_a(\llbracket \psi \rrbracket_M \cap [w]_a)$, then $M,v \models \phi$ for all $v \in \Min_a(\llbracket \psi \rrbracket_M \cap [w]_a)$, and finally $M, w \models B_a^\psi \phi$.
\end{proof}
We have now gotten to the second of the two promised properties; that the translation is the same for worlds modally equivalent for $L^D$.
\begin{lemma} Given plausibility models $M$ and $M'$, for any $w \in D(M)$ and $w' \in D(M')$, if $(M,w) \equiv^D (M',w')$ then for any formula $\gamma \in L^C$, $\sigma_{M,w}(\gamma) = \sigma_{M',w'}(\gamma)$.\label{lemma:sametrans}
\end{lemma}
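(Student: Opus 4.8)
The plan is to prove the lemma by induction on the modal depth of $\gamma$, mirroring the structure of the proof of Lemma~\ref{lemma:gammaifftrans}. The Boolean cases ($p$, $\neg$, $\wedge$) are immediate from the defining clauses of the translation and the induction hypothesis, so the whole content is in the case $\gamma = B^\psi_a \phi$. There the translated formula $\sigma_{M,w}(B^\psi_a\phi)$ is determined by exactly three pieces of data: (i) whether $\llbracket\psi\rrbracket_M \cap [w]_a = \emptyset$; (ii) the degree $k$ with $\Min_a(\llbracket\psi\rrbracket_M\cap[w]_a)\subseteq E^k_a[w]_a$; and (iii) the finite sets of subtranslations $\{\sigma_{M,v}(\chi)\mid v\in[w]_a\}$ for $\chi\in\{\psi\to\phi,\psi,\neg\psi\}$. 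I would show that each of these three pieces coincides at $(M,w)$ and $(M',w')$ whenever $(M,w)\equiv^D(M',w')$. Since $\psi$, $\neg\psi$ and $\psi\to\phi$ all have modal depth strictly below that of $\gamma$, the induction hypothesis is available for them.

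The technical heart is a structural claim: for every $n$, the set of $L^D$-modal-equivalence types realised inside the sphere $\Min^n_a[w]_a$ equals the set realised inside $\Min^n_a[w']_a$. I would prove this with $\widehat{B}^n_a$-formulas together with (pre)image-finiteness. Since $[w]_a$ is finite, each sphere realises only finitely many types; if some type occurred in $\Min^n_a[w]_a$ but not in $\Min^n_a[w']_a$, then conjoining, for each of the finitely many worlds of $\Min^n_a[w']_a$, a formula separating it from a witness of that type yields a single $L^D$-formula $\chi$ with $M,w\models\widehat{B}^n_a\chi$ but $M',w'\not\models\widehat{B}^n_a\chi$, contradicting $\equiv^D$. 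Letting $n$ grow, every type of $[w]_a$ appears in some sphere and hence, by the sphere-wise agreement, in $[w']_a$, and symmetrically; so $[w]_a$ and $[w']_a$ realise the same types. Via the induction hypothesis this immediately yields equality of the sets in (iii), since two worlds of the same $L^D$-type have identical $\chi$-translations for $\chi$ of smaller modal depth.

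It remains to transfer (i) and (ii), which both concern the $L^C$-formula $\psi$. The point I would stress is that, for $\psi$ of modal depth below $\gamma$, satisfaction of $\psi$ is a function of the $L^D$-type: if $(M,v)\equiv^D(M'',v'')$ then $\sigma_{M,v}(\psi)=\sigma_{M'',v''}(\psi)=:\chi\in L^D$ by the induction hypothesis, and Lemma~\ref{lemma:gammaifftrans} together with $\equiv^D$ gives $M,v\models\psi$ iff $M'',v''\models\psi$. Hence ``being a $\psi$-world'' is a type-property, well defined across models, so (i) follows from the agreement of the realised type-sets. For (ii) I would first observe that $k$ is simply the least $n$ for which $\Min^n_a[w]_a$ contains a $\psi$-world: using that the largest autobisimulation $R$ equals $L^C$-equivalence (Theorem~\ref{theorem:bisimcharlc}, Proposition~\ref{prop:maxautos}) and that $\succeq_a=\geq_a^R$, one checks that a $\geq_a$-minimal $\psi$-world is also $\succeq_a$-minimal among the $\psi$-worlds of $[w]_a$, so the minimal $\psi$-worlds occupy the lowest layer containing any $\psi$-world. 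Then, because $\psi$-worldhood is a type-property and the sphere-type-sets agree at every $n$, this least $n$ is the same on both sides, so $k=k'$.

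The main obstacle I anticipate is exactly this matching of $k$: it forces me to reconcile the two notions of minimality in play — the $\geq_a$-minimality used in the condition $\Min_a(\llbracket\psi\rrbracket_M\cap[w]_a)$ of the $L^C$-semantics with the $\succeq_a$-layering used by the degree modalities — and to argue that an $L^C$-property ($\psi$-satisfaction) is invariant under $L^D$-equivalence even though the global implication from $L^D$-equivalence to $L^C$-equivalence is precisely what this section is working towards. The resolution, as sketched above, is to route everything through the already-established Lemma~\ref{lemma:gammaifftrans} and the identification of $R$ with $L^C$-equivalence, so that no unproven global implication is invoked, only the induction hypothesis at strictly smaller modal depth.
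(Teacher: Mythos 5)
Your proof is correct, and its skeleton---induction on modal depth, reducing the $\gamma = B_a^\psi\phi$ case to matching (i) emptiness of $\llbracket\psi\rrbracket_M\cap[w]_a$, (ii) the degree $k$, and (iii) the sets of subtranslations---is the same as the paper's. Where you genuinely diverge is in the key step (ii). The paper establishes only the coarser fact that every world of $[w]_a$ is $\equiv^D$-matched by some world of $[w']_a$ and vice versa (via $\widehat{K}_a$-formulas), which suffices for (i) and (iii), and then proves $k=k'$ by contradiction: assuming $k>k'$, it uses Lemma~\ref{lemma:k} and Lemma~\ref{lemma:gammaifftrans} to obtain $M,w\models B_a^{k'}\bigvee\{\sigma_{M,v}(\neg\psi)\mid v\in[w]_a\}$ while $M',w'\not\models B_a^{k'}\bigvee\{\sigma_{M',v'}(\neg\psi)\mid v'\in[w']_a\}$, contradicting $\equiv^D$ once the two disjunctions are known to coincide. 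You instead prove the stronger sphere-wise statement that $\Min_a^n[w]_a$ and $\Min_a^n[w']_a$ realise the same $L^D$-types for every $n$ (via $\widehat{B}_a^n$-formulas and image-finiteness), recharacterise $k$ as the least $n$ for which $\Min_a^n[w]_a$ meets $\llbracket\psi\rrbracket_M$, and read off $k=k'$ positively. Both of your auxiliary facts check out: that no $\psi$-world of $[w]_a$ lies in a layer below $k$ is exactly what the proof of Lemma~\ref{lemma:k} yields (and what the paper itself invokes inside the proof of Lemma~\ref{lemma:gammaifftrans}), and your explicit argument that $\psi$-satisfaction is invariant under $\equiv^D$ (routing through the induction hypothesis and Lemma~\ref{lemma:gammaifftrans} rather than any unproven $\equiv^D\Rightarrow\equiv^C$ implication) makes precise a step the paper leaves implicit. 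The trade-off is that your sphere-wise lemma is a reusable structural fact that makes the agreement of the degrees transparent, at the cost of proving more than is strictly needed for (i) and (iii); the paper's contradiction argument is shorter but more ad hoc.
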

\begin{proof}
We show this by another induction on the modal depth of $\gamma$. For the base case of modal depth 0 we trivially have $\sigma_{M,w}(\gamma) = \sigma_{M',w'}(\gamma)$. 

For the induction step we, as before, only deal with $\gamma = B_a^\psi \phi$. Note first that every world in $[w]_a$ is modally equivalent to at least one world in $[w']_a$. If that wasn't the case, there would be some $L^D$-formula $\phi$ true somewhere in $[w]_a$ and nowhere in $[w']_a$. Then $M, w \models \widehat{K}_a \phi$ while $M', w' \not\models \widehat{K}_a \phi$, contradicting $(M, w) \equiv^D (M', w')$. A completely analogous argument gives that every world in $[w']_a$ is modally equivalent to at least one world in $[w]_a$. Thus $\llbracket \psi \rrbracket_M \cap [w]_a = \emptyset$ iff $\llbracket \psi \rrbracket_M' \cap [w']_a = \emptyset$. We thus have two cases, either both $\sigma_{M,w}(B_a^\psi \phi)$ and $\sigma_{M',w'}(B_a^\psi \phi)$ are $K_a$-formulas, or both are $B_a^k$-formulas.

We deal first with the case where both translations are $K_a$-formulas. Here we have $\sigma_{M,w}(B_a^\psi \phi) = K_a \bigvee \{ \sigma_{M,v} (\neg \phi) \mid v \in [w]_a\}$ and $\sigma_{M',w'}(B_a^\psi \phi) = K_a \bigvee \{ \sigma_{M',v'} (\neg \phi) \mid v' \in [w']_a\}$. As already shown, for all $v \in [w]_a$ there is a $v' \in [w']_a$ such that $(M,w) \equiv^D (M',v')$, and vice versa. The induction hypothesis gives $\sigma_{M,v}(\neg \phi) = \sigma_{M',v'}(\neg \phi)$ for all these $v$s and $v'$s. Then $\bigvee \{ \sigma_{M,v} (\neg \phi) \mid v \in [w]_a\} = \bigvee \{ \sigma_{M',v'} (\neg \phi) \mid v' \in [w']_a\}$ and thus $\sigma_{M,w}(B_a^\psi \phi) = \sigma_{M',w'}(B_a^\psi \phi)$.

Take now the case where both translations are $B^k_a$-formulas. A similar argument as above gives $\bigvee \{ \sigma_{M,v}(\psi \rightarrow \phi) \mid v \in [w]_a \} = \bigvee \{ \sigma_{M',v'}(\psi \rightarrow \phi) \mid v' \in [w']_a \}$ and $\bigvee \{ \sigma_{M,v}(\psi) \mid v \in [w]_a \} = \bigvee \{ \sigma_{M',v'}(\psi) \mid v' \in [w']_a \}$.  Letting $k$ and $k'$ be the indices chosen in the translation of $\sigma_{M,w}(B_a^\psi \phi)$ and $\sigma_{M',w'}(B_a^\psi \phi)$ respectively, we have $\sigma_{M,w}(B_a^\psi \phi) = \sigma_{M',w'}(B_a^\psi \phi)$ if $k = k'$.  Assume towards a contradiction that $k > k'$. Lemma \ref{lemma:k} now gives $\Min_a(\llbracket \psi\rrbracket_M \cap [w]_a) \cap \Min_a^{k'} [w]_a = \emptyset$, so $M, v \models \neg \psi$ for all $v \in Min_a^{k'} [w]_a$. With Lemma \ref{lemma:gammaifftrans} we have $M, v \models \sigma_{M,v}(\neg \psi)$ for all $v \in Min_a^{k'} [w]_a$ and thus also that $M, w \models B_a^{k'} \bigvee \{ \sigma_{M,v}(\neg \psi) \mid v \in [w]_a\}$. From Lemma \ref{lemma:k} we also have $\Min_a(\llbracket \psi\rrbracket_M \cap [w']_a) \subseteq Min_a^{k'} [w']_a$, so $M', v' \not\models \neg \psi$ for some $v' \in Min_a^{k'} [w']_a$. From here we use Lemma \ref{lemma:gammaifftrans} to conclude $M', v' \not\models \sigma_{M',v'}(\neg \psi)$ for some $v' \in Min_a^{k'} [w']_a$ and thus $M', w' \not\models B_a^{k'} \bigvee \{ \sigma_{M',v'}(\neg \psi) \mid v' \in [w']_a \}$. By the work done so far, this also means $M', w' \not\models B_a^{k'} \bigvee \{ \sigma_{M,v}(\neg \psi) \mid v \in [w]_a\}$ which contradicts $(M,w) \equiv^D (M', w')$. The case when $k' > k$ is completely symmetrical, and the proof if thus concluded.
\end{proof}
\begin{proposition}\label{theorem:modalequivimpliesbisimld}
	Modal equivalence for $L^D$ implies bisimilarity.
\end{proposition}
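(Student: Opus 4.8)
The plan is to reduce this statement to the corresponding result for $L^C$, namely Proposition~\ref{theorem:modalequivimpliesbisimlc}. Concretely, I would first establish that $L^D$-modal equivalence implies $L^C$-modal equivalence, and then invoke that earlier proposition to conclude bisimilarity. All the substantive machinery needed for the reduction has already been assembled in the translation $\sigma_{M,w}$ (Definition~\ref{def:lc-to-ld-translation}) and in Lemmas~\ref{lemma:gammaifftrans} and~\ref{lemma:sametrans}, so the proof itself is a short composition of biconditionals.

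For the reduction, I would assume $(M,w) \equiv^D (M',w')$ and fix an arbitrary $\gamma \in L^C$, aiming to show $M,w \models \gamma$ iff $M',w' \models \gamma$. First I would apply Lemma~\ref{lemma:gammaifftrans} to replace $\gamma$ by its translation at $M,w$, giving $M,w \models \gamma \Leftrightarrow M,w \models \sigma_{M,w}(\gamma)$. Since $\sigma_{M,w}(\gamma)$ is an $L^D$-formula and $(M,w) \equiv^D (M',w')$, I then get $M,w \models \sigma_{M,w}(\gamma) \Leftrightarrow M',w' \models \sigma_{M,w}(\gamma)$; this is precisely the chain (*) displayed in the text.

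Next I would use Lemma~\ref{lemma:sametrans}, which under the hypothesis $(M,w) \equiv^D (M',w')$ guarantees $\sigma_{M,w}(\gamma) = \sigma_{M',w'}(\gamma)$, to rewrite the right-hand formula, and then apply Lemma~\ref{lemma:gammaifftrans} once more (this time at $M',w'$) to return to $\gamma$, obtaining $M',w' \models \sigma_{M,w}(\gamma) \Leftrightarrow M',w' \models \sigma_{M',w'}(\gamma) \Leftrightarrow M',w' \models \gamma$, which is the chain (**). Stringing (*) and (**) together yields $M,w \models \gamma$ iff $M',w' \models \gamma$, and since $\gamma \in L^C$ was arbitrary, this is exactly $(M,w) \equiv^C (M',w')$. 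Finally, Proposition~\ref{theorem:modalequivimpliesbisimlc} gives $(M,w) \bisim (M',w')$, completing the argument.

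As for the main obstacle: the genuine difficulty has already been dispatched in the preceding lemmas, not in this proposition. The delicate points are that Lemma~\ref{lemma:gammaifftrans} relies on the minimal $\psi$-worlds of an equivalence class all lying in a single belief layer (Lemma~\ref{lemma:k}), and that Lemma~\ref{lemma:sametrans} requires the chosen degrees $k$ to agree across $L^D$-modally equivalent worlds. Granting those results, the present proposition is essentially bookkeeping, so I expect no further complication here; the only care needed is to keep track of which model each translation is taken at, so that Lemma~\ref{lemma:gammaifftrans} is applied at the correct world in each of (*) and (**).
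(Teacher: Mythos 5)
Your proposal matches the paper's own proof essentially verbatim: the paper also assumes $(M,w) \equiv^D (M',w')$, chains Lemma~\ref{lemma:gammaifftrans}, the $\equiv^D$ hypothesis, Lemma~\ref{lemma:sametrans}, and Lemma~\ref{lemma:gammaifftrans} again to obtain $(M,w) \equiv^C (M',w')$, and then concludes bisimilarity from Proposition~\ref{theorem:modalequivimpliesbisimlc}. Your reduction is correct and your assessment that the real work lies in the preceding lemmas is accurate.
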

\begin{proof} Let $M = (W, \geq, V)$ and $M' = (W', \geq', V')$ be two plausibility models. We first show that if $(M,w) \equiv^D (M',w')$ then $(M,w) \equiv^C (M',w')$. Assume $(M, w) \equiv^D (M', w,')$ and let $\gamma$ be any formula of 
	$L^C$.
\begin{align*}
	 M, w \models \gamma &\Leftrightarrow M, w \models \sigma_{M,w}(\gamma) \ &\text{(Lemma \ref{lemma:gammaifftrans})}  \\
						& \Leftrightarrow M', w' \models \sigma_{M,w}(\gamma) \ &\text{(by assumption)}\\
						&\Leftrightarrow M', w' \models \sigma_{M',w'}(\gamma) \ &\text{(Lemma \ref{lemma:sametrans})} \\
						&\Leftrightarrow M', w' \models \gamma \ &\text{(Lemma \ref{lemma:gammaifftrans})}
\end{align*}
	Putting this together with Theorem \ref{theorem:modalequivimpliesbisimlc} (modal equivalence for $L^C$ implies bisimilarity), we have that two worlds which are modally equivalent in $L^D$ are also modally equivalent in $L^C$ and therefore bisimilar.
\end{proof}
\begin{theorem}[Bisimulation characterisation for $L^D$]\label{theorem:bisimcharld}
Let $(\m,w),(\m',w')$ be plausibility models. Then: \[ (\m,w) \bisim (\m',w') \text{ iff } (\m,w) \equiv^D (\m',w') \]
\end{theorem}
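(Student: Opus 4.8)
The plan is to obtain the biconditional by simply combining the two directions that have already been established as separate propositions, exactly mirroring the proof of Theorem~\ref{theorem:bisimcharlc} for $L^C$. For the left-to-right implication, the statement that $(\m,w) \bisim (\m',w')$ entails $(\m,w) \equiv^D (\m',w')$ is precisely the content of Proposition~\ref{theorem:bisimimpliesmodalequivld}. For the right-to-left implication, the statement that $(\m,w) \equiv^D (\m',w')$ entails $(\m,w) \bisim (\m',w')$ is precisely Proposition~\ref{theorem:modalequivimpliesbisimld}. Chaining the two yields the claimed equivalence, so the proof itself amounts to a single citation of each proposition.

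The reason this final step is so short is that all the genuine difficulty has already been discharged upstream. The forward direction, Proposition~\ref{theorem:bisimimpliesmodalequivld}, hinges on Lemma~\ref{lemma:bisimsamelayers}, which guarantees that bisimilar worlds occupy belief spheres of exactly the same degree; this is what makes the $B_a^n$ case of the structural induction go through, and it in turn relies on working with the largest autobisimulation $\Rmax$ from Proposition~\ref{prop:maxautos} (so that $\succeq_a = \geq_a^{\Rmax}$). The backward direction, Proposition~\ref{theorem:modalequivimpliesbisimld}, deliberately avoids a direct Hennessy--Milner construction for $L^D$---which is awkward precisely because $\succeq_a$ is itself defined through the largest autobisimulation---and instead routes through $L^C$: the world-dependent translation $\sigma_{M,w}$ of Definition~\ref{def:lc-to-ld-translation}, together with Lemmas~\ref{lemma:gammaifftrans} and~\ref{lemma:sametrans}, shows that $L^D$-modal equivalence implies $L^C$-modal equivalence, after which Theorem~\ref{theorem:bisimcharlc} (equivalently Proposition~\ref{theorem:modalequivimpliesbisimlc}) delivers bisimilarity.

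Consequently I expect no real obstacle in this particular proof; the only care required is bookkeeping. I would check that the two cited propositions are stated over the same class of objects---arbitrary pointed plausibility models, which already carry the global (pre)image-finiteness assumption built into the definition---so that their conjunction is \emph{literally} the biconditional asserted, with no hidden gap in the hypotheses. No new argument is needed beyond invoking Propositions~\ref{theorem:bisimimpliesmodalequivld} and~\ref{theorem:modalequivimpliesbisimld}.
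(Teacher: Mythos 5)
Your proposal is correct and matches the paper's own proof exactly: the theorem is established by simply combining Proposition~\ref{theorem:bisimimpliesmodalequivld} (bisimilarity implies $\equiv^D$) with Proposition~\ref{theorem:modalequivimpliesbisimld} ($\equiv^D$ implies bisimilarity). Your accompanying account of where the real work lives upstream is also accurate.
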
		
\begin{proof}
From Proposition \ref{theorem:bisimimpliesmodalequivld} and Proposition \ref{theorem:modalequivimpliesbisimld}.
\end{proof}		

\subsection{Bisimulation correspondence for safe belief} \label{section:csafe}
We now show bisimulation characterisation results for the logic of degrees of belief $\lang^S$.
\begin{proposition}\label{theorem:bisimimpliesmodalequivls}
Bisimilarity implies modal equivalence for $\lang^S$. 
\end{proposition}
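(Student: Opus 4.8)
The plan is to follow the template of Proposition~\ref{theorem:bisimimpliesmodalequivlc} almost verbatim. Assume $(\m_1,w_1) \bisim (\m_2,w_2)$, let $\m = (W,\geq,V)$ be the disjoint union of $\m_1$ and $\m_2$, and fix an autobisimulation $R$ on $\m$ with $(w_1,w_2)\in R$. The one preliminary move that matters — borrowed from the degrees-of-belief proof (Proposition~\ref{theorem:bisimimpliesmodalequivld}) — is to pass to the largest autobisimulation $\Rmax$, which exists by Proposition~\ref{prop:maxautos}. This is forced on us because the semantics of $\Box_a$ is given in terms of the normal plausibility relation $\succeq_a$, and by Definition~\ref{def:normalplaus} we have $\succeq_a = \geq_a^{\Rmax}$. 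Since $R \subseteq \Rmax$, it suffices to prove that for every $(w,w')\in\Rmax$ and every $\phi\in L^S$, $\m,w\models\phi$ iff $\m,w'\models\phi$; instantiating at $(w_1,w_2)$ then yields $(\m_1,w_1)\equiv^S(\m_2,w_2)$.

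The argument is by induction on the structure of $\phi$. The propositional cases and the case $\phi = K_a\psi$ are handled exactly as in Proposition~\ref{theorem:bisimimpliesmodalequivlc}, so only the safe-belief case $\phi = \Box_a\psi$ needs attention, and here the proof is in fact shorter than for conditional belief: safe belief corresponds directly to the very relation $\geq_a^{\Rmax}=\succeq_a$ that already appears in the back and forth clauses of the bisimulation. Concretely, assume $\m,w\models\Box_a\psi$, i.e.\ $\m,v\models\psi$ for every $v$ with $w\succeq_a v$; we show $\m,w'\models\Box_a\psi$. Pick any $v'$ with $w'\succeq_a v'$, that is, $w'\geq_a^{\Rmax}v'$. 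Since $(w,w')\in\Rmax$, clause [back$_\geq$] applied to $w'\geq_a^{\Rmax}v'$ yields a $v$ with $w\geq_a^{\Rmax}v$ (i.e.\ $w\succeq_a v$) and $(v,v')\in\Rmax$. The assumption gives $\m,v\models\psi$, and the induction hypothesis applied to $(v,v')\in\Rmax$ gives $\m,v'\models\psi$. As $v'$ was an arbitrary $\succeq_a$-successor of $w'$, we conclude $\m,w'\models\Box_a\psi$. The converse implication follows by symmetry, since $\Rmax$ is an equivalence relation (Proposition~\ref{prop:maxautos}), so the proven direction may be applied to the pair $(w',w)$.

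The one genuinely substantive point — and the only place where care is required — is the switch from the given autobisimulation $R$ to the largest autobisimulation $\Rmax$. Without it, the back clause would only supply successors with respect to $\geq_a^{R}$, whereas the semantics of $\Box_a$ quantifies over $\succeq_a = \geq_a^{\Rmax}$-successors, and the induction would not close. Once this alignment is in place, no analogue of the delicate ``produce a minimal $\gamma$-world'' reasoning from the $L^C$ proof (Claims~1 and~2 there) is needed: the safe-belief modality reads off precisely the normal plausibility relation that drives the bisimulation clauses, so the back condition applies directly. I therefore expect this to be the easiest of the three ``bisimilarity implies modal equivalence'' results, its only real dependency being the availability of Proposition~\ref{prop:maxautos}.
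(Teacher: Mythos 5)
Your overall strategy and the core inductive step coincide with the paper's: work in the disjoint union, pass to the largest autobisimulation $R_{\max}$ via Proposition~\ref{prop:maxautos}, and discharge the $\Box_a\psi$ case with a single application of [back$_\geq$], since $\Box_a$ quantifies over exactly the derived relation appearing in the back/forth clauses. However, there is a genuine gap in the step you present as immediate, namely ``by Definition~\ref{def:normalplaus} we have $\succeq_a = \geq_a^{R_{\max}}$.'' The normal plausibility relation is \emph{model-relative}: the semantic clause for $\m_1, w \models \Box_a\psi$ uses $\geq_a^{S_1}$ where $S_1$ is the largest autobisimulation on $\m_1$ alone, whereas your $R_{\max}$ is the largest autobisimulation on the disjoint union $\m_1 \sqcup \m_2$. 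What your induction actually establishes is modal equivalence of $(\m, w_1)$ and $(\m, w_2)$ with satisfaction evaluated in the disjoint union; to conclude $(\m_1,w_1)\equiv^S(\m_2,w_2)$, which is the statement, you must show that evaluating $\Box_a$ in $\m_i$ and in $\m$ gives the same answer. This is not automatic: a priori the largest autobisimulation on the larger model could induce a different derived relation on the worlds of $\m_i$ (compare $L^C$, where the semantics only inspects $\geq_a$ within $[w]_a$ and the issue does not arise).

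This is precisely the content of the only non-routine part of the paper's proof: a three-part claim showing (i) that $R_{\max}$ restricted to $D(\m_i)\times D(\m_i)$ is the largest autobisimulation on $\m_i$, (ii) that $\Min_a([w]_{R_{\max}^=}\cap[w]_a)$ computed with the restricted relation agrees with the unrestricted one, and (iii) that consequently $w \geq_a^{R_{\max}} v$ iff $w \geq_a^{R_i} v$ for $w,v \in D(\m_i)$. With that claim in hand, the paper's inductive step for $\Box_a\psi$ is exactly yours (translate $w_2 \succeq_a v_2$ into $w_2 \geq_a^{R_{\max}} v_2$, apply [back$_\geq$], translate back). So your proof is repairable by inserting this invariance argument, but as written it silently assumes the one fact that actually requires work.
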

\begin{proof}
\newcommand\restrict\upharpoonright
\newcommand\rmax{R_{max}}
Assume $\m_1 \bisim \m_2$. Then there is an autobisimulation $R'$ on the disjoint union $\m_1 \sqcup \m_2$ with $R' \cap (\Domain(\m_1) \times \Domain(\m_2)) \neq \emptyset$. Extend $R'$ into the largest autobisimulation $R$ on $\m_1\sqcup \m_2$ (using Proposition~\ref{prop:maxautos}). Define $R_1 = R \cap (\Domain(\m_1) \times \Domain(\m_1))$ and $R_2 = R \cap (\Domain(\m_2) \times \Domain(\m_2))$. 

\medskip \noindent 
\emph{Claim}. Let $i \in \{1,2 \}$ and $w \in \Domain(\m_i)$. Then
\begin{enumerate}
  \item[(i)] $R_i$ is the largest autobisimulation on $\m_i$. 
  \item[(ii)] $\Min_a([w]_{R^=} \cap [w]_a) = \Min_a([w]_{R_i^=} \cap [w]_a)$.
  \item[(iii)] For any $v$, $w \geq_a^R v$ iff $w \geq_a^{R_i} v$.
\end{enumerate}

\medskip \noindent
\emph{Proof of claim}. To prove (i), let $S_i$ denote the largest autobisimulation on $\m_i$. If we can show $S_i \subseteq R_i$ we are done. Since $S_i$ is an autobisimulation on $\m_i$, it must also be an autobisimulation on $\m_1 \sqcup \m_2$. Thus, clearly, $S_i \subseteq R$, since $R$ is the largest autobisimulation on $\m_1 \sqcup \m_2$. Hence, since $S_i \subseteq \Domain(\m_i) \times \Domain(\m_i)$, we get $S_i = S_i \cap (\Domain(\m_i) \times \Domain(\m_i)) \subseteq R \cap (\Domain(\m_i) \times \Domain(\m_i)) = R_i$. This shows $S_i \subseteq R_i$, as required. 

We now prove (ii). Since $w \in \Domain(\m_i)$ we get $[w]_a \subseteq \Domain(\m_i)$. Since $R_i = R \cap (\Domain(\m_i) \times \Domain(\m_i))$ this implies $[w]_R \cap [w]_a = [w]_{R_i} \cap [w]_a$. Now note that since $R$ is the largest autobisimulation on $\m_1 \sqcup \m_2$ and $R_i$ is the largest autobisimulation on $\m_i$, we have $R = R^=$ and $R_i = R_i^=$, by Proposition~\ref{prop:maxautos} (the largest autobisimulation is an equivalence relation). Hence from $[w]_R \cap [w]_a = [w]_{R_i} \cap [w]_a$ we can conclude $[w]_{R^=} \cap [w]_a = [w]_{R_i^=} \cap [w]_a$, and then finally $\Min_a ([w]_{R^=} \cap [w]_a) = \Min_a ([w]_{R_i^=} \cap [w]_a)$.

We now prove (iii). Note that if $w \geq_a^R v$ or $w \geq_a^{R_i}$ then $w \sim_a v$ (by Lemma~\ref{lemma:wellr}). So in proving $w \geq_a^R v \Leftrightarrow w \geq_a^{R_i} v$ for $w \in \Domain(\m_i)$, we can assume that also $v\in \Domain(\m_i)$. We then get:
\[
  \begin{array}{rcl}
    w \geq_a^R v &\Leftrightarrow &\Min_a ([w]_{R^=} \cap [w]_a) \geq_a \Min_a ([v]_{R^=} \cap [v]_a) \\
                             &\Leftrightarrow &\Min_a ([w]_{R_i^=} \cap [w]_a) \geq_a \Min_a ([v]_{R_i^=} \cap [v]_a) \quad \text{
                             by (ii), since $w,v\in \Domain(\m_i)$}                             
                              \\
                             &\Leftrightarrow &w \geq_a^{R_i} v.
  \end{array}
\]
This completes the proof of the claim.

\medskip \noindent
We will now show that for all $\phi$ and all $(w_1,w_2) \in R \cap (\Domain(\m_1) \times \Domain(\m_2))$, if $\m_1,w_1 \models \phi$ then $\m_2,w_2 \models \phi$ (the other direction being symmetric). The proof is by induction on the syntactic complexity of $\phi$. The propositional and knowledge cases are already covered by Proposition~\ref{theorem:bisimimpliesmodalequivlc}, so we only need to consider the case $\phi = \Box_a \psi$. Hence assume $\m_1,w_1 \models \Box_a \psi$ and $(w_1,w_2) \in R \cap (\Domain(\m_1) \times \Domain(\m_2))$. We need to prove $\m_2,w_2 \models \Box_a \psi$. Pick an arbitrary $v_2 \in \Domain(\m_2)$ with $w_2 \succeq_a v_2$. If we can show $\m_2, v_2 \models \psi$, we are done. By (i), $R_2$ is the largest autobisimulation on $\m_2$.
Hence $w_2 \succeq_a v_2$ by definition means $w_2 \geq_a^{R_2} v_2$. Using (iii), we can from $w_2 \geq_a^{R_2} v_2$ conclude $w_2 \geq_a^R v_2$. Since $R$ is an autobisimulation, we can now apply [back$_\geq$] to $(w_1,w_2) \in R$ and $w_2 \geq_a^R v_2$ to get a
 $v_1$ with $w_1 \geq_a^{R} v_1$ and $(v_1,v_2) \in R$. Using (iii) again we can conclude from $w_1 \geq_a^R v_1$ to $w_1 \geq_a^{R_1} v_1$, since $w_1 \in \Domain(\m_1)$. By (i), $R_1$ is the largest autobisimulation on $\m_1$, so $w_1 \geq_a^{R_1} v_1$ is by definition the same as $w_1 \succeq_a v_1$. 
 Since we have assumed $\m_1, w_1 \models \Box_a \psi$, and since $w_1 \succeq_a v_1$, we get $\m_1, v_1 \models \psi$. Since $(v_1,v_2) \in R$, the induction hypothesis gives us $\m_2, v_2 \models \psi$, and we are done.
\end{proof}
As for the previous logics, the converse also holds, that is, modal equivalence with regard to $L^S$ implies bisimulation. This is going to be proved as follows. First we prove that any conditional belief formula $\phi_C$ can be translated into a logically equivalent safe belief formula $\phi_S$. This implies that if two pointed models $(M,w)$ and $(M',w')$ are modally equivalent in $L^S$, they must also be modally equivalent in $L^C$: Any formula $\phi_C \in L^C$ is true in $(M,w)$ \emph{iff} its translation $\phi_S \in L^S$ is true in $(M,w)$ \emph{iff} $\phi_S$ is true in $(M',w')$ \emph{iff} $\phi_C$ is true in $(M',w')$. Now we can reason as follows: If two pointed models $(M,w)$ and $(M',w')$ are modally equivalent in $L^S$ then they are modally equivalent in $L^C$ and hence, by Theorem~\ref{theorem:modalequivimpliesbisimlc}, bisimilar. This is the result we were after. We postpone the full proof until Section~\ref{expressivity:section:safe-belief}, which is where we provide the translation of conditional belief formulas into safe belief formulas (as part of a systematic investigation of the relations between the different languages and their relative expressivity). Here we only state the result:
\begin{proposition}\label{theorem:modalequivimpliesbisimls}
	Modal equivalence for $L^S$ implies bisimilarity.
\end{proposition}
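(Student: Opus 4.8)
The plan is to follow the route sketched just above the statement: exhibit a truth-preserving translation $t\colon L^C \to L^S$ so that $L^S$-modal equivalence forces $L^C$-modal equivalence, and then invoke Proposition~\ref{theorem:modalequivimpliesbisimlc}. Concretely, I would define $t$ by induction on the structure of $\gamma \in L^C$, letting $t$ commute with the atoms, Boolean connectives and $K_a$, and translating conditional belief by
\[
 t(B_a^\psi\phi) \;=\; \widehat{K}_a\, t(\psi) \;\imp\; \widehat{K}_a\big(t(\psi) \et \Box_a(t(\psi) \imp t(\phi))\big).
\]
Unlike the $L^C \to L^D$ translation of Definition~\ref{def:lc-to-ld-translation}, this translation can be taken \emph{uniform}, i.e.\ independent of $\m$ and $w$: safe belief quantifies over all $\succeq_a$-more-plausible worlds and needs no numerical layer index, so no sphere number $k$ has to be computed. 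Each $t(\gamma)$ is then manifestly a formula of $L^S$.

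The substance is the correctness of the conditional-belief clause, namely that $\m,w \models B_a^\psi\phi$ iff $\m,w \models t(B_a^\psi\phi)$; I would prove $\m,w \models \gamma \Eq \m,w \models t(\gamma)$ by induction on $\gamma$, the atomic, Boolean and $K_a$ cases being immediate and the induction hypothesis yielding $\llbracket t(\psi)\rrbracket_\m = \llbracket\psi\rrbracket_\m$ and $\llbracket t(\phi)\rrbracket_\m = \llbracket\phi\rrbracket_\m$. The main obstacle is the mismatch between the two relations involved: conditional belief selects the $\geq_a$-minimal $\psi$-worlds via $\Min_a(\llbracket\psi\rrbracket_\m \cap [w]_a)$, whereas $\Box_a$ is interpreted over the normal relation $\succeq_a = \geq_a^R$ (for $R$ the largest autobisimulation), which collapses bisimilar worlds and may therefore declare strictly $\geq_a$-ranked worlds equiplausible. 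The reconciling observation is that $\llbracket\psi\rrbracket_\m$ is closed under $R$ (Proposition~\ref{theorem:bisimimpliesmodalequivlc}), so that every $\succeq_a$-minimal $\psi$-world of $[w]_a$ is $R$-related to some $\geq_a$-minimal one and conversely; hence, by bisimulation invariance of $L^C$, the $\geq_a$-minimal and $\succeq_a$-minimal $\psi$-worlds satisfy exactly the same formulas.

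Granting this, both directions go through. For the forward direction, if all worlds in $\Min_a(\llbracket\psi\rrbracket_\m \cap [w]_a)$ satisfy $\phi$, I would take any such $v$ as the $\widehat{K}_a$-witness: if $v \succeq_a u$ and $\m,u\models\psi$, then $u \sim_a v$ (Lemma~\ref{lemma:wellr}) and the definition of $\succeq_a$ together with the global $\geq_a$-minimality of $v$ inside $\llbracket\psi\rrbracket_\m\cap[w]_a$ force the $\geq_a$-least $R$-representative of $u$ to be equiplausible with $v$, hence $\geq_a$-minimal; thus $u \equiv^C (\text{a }\phi\text{-world})$ and $\m,u\models\phi$, so $v$ witnesses $\widehat{K}_a(t(\psi)\et\Box_a(t(\psi)\imp t(\phi)))$. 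For the converse, given a $\psi$-witness $v$ with $\m,v\models\Box_a(t(\psi)\imp t(\phi))$, every $m \in \Min_a(\llbracket\psi\rrbracket_\m\cap[w]_a)$ satisfies $v \succeq_a m$ (again by the definition of $\succeq_a$ and the global $\geq_a$-minimality of $m$), whence $\m,m\models t(\psi)\imp t(\phi)$ and therefore $\m,m\models\phi$. When $\llbracket\psi\rrbracket_\m\cap[w]_a=\emptyset$ both sides hold vacuously, matching the $\widehat{K}_a t(\psi)$ antecedent.

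With the translation in hand the proposition is immediate. Suppose $(\m,w) \equiv^S (\m',w')$ and let $\gamma \in L^C$. Since $t(\gamma) \in L^S$,
\[
 \m,w \models \gamma \;\Eq\; \m,w \models t(\gamma) \;\Eq\; \m',w' \models t(\gamma) \;\Eq\; \m',w' \models \gamma,
\]
using correctness of $t$ for the two outer equivalences and $\equiv^S$ for the middle one. Hence $(\m,w) \equiv^C (\m',w')$, and Proposition~\ref{theorem:modalequivimpliesbisimlc} yields $(\m,w) \bisim (\m',w')$, as required.
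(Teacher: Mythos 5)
Your proposal is correct and follows essentially the same route as the paper: the paper proves exactly the validity $B_a^\psi \phi \leftrightarrow (\widehat{K}_a \psi \rightarrow \widehat{K}_a(\psi \wedge \Box_a(\psi \rightarrow \phi)))$ (Proposition~\ref{expressivity:proposition:conditional-safe-identity}), using the same two ingredients you identify --- closure of $\llbracket\psi\rrbracket_\m$ under the largest autobisimulation via Proposition~\ref{theorem:bisimimpliesmodalequivlc}, and the interplay between $\geq_a$-minimality and $\succeq_a$ --- and then derives the proposition by translating $L^C$ into $L^S$ and invoking Proposition~\ref{theorem:modalequivimpliesbisimlc}. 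Your observation that the translation is uniform (model-independent), unlike the $L^C\to L^D$ translation, also matches the paper's treatment.
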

\begin{proof}
See Section~\ref{expressivity:section:safe-belief}.
\end{proof}
As for the two previous languages, $L^C$ and $L^D$, we now get the following bisimulation characterisation result.
\begin{theorem}[Bisimulation characterisation for $L^S$]\label{theorem:bisimcharls}
Let $(\m,w),(\m',w')$ be plausibility models. Then: \[ (\m,w) \bisim (\m',w') \text{ iff } (\m,w) \equiv^S (\m',w') \]
\end{theorem}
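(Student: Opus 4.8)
The plan is to obtain this biconditional simply by conjoining the two one-directional results that have already been isolated, exactly as Theorem~\ref{theorem:bisimcharlc} and Theorem~\ref{theorem:bisimcharld} were assembled from their respective pairs of propositions. The left-to-right implication, $(\m,w) \bisim (\m',w')$ implies $(\m,w) \equiv^S (\m',w')$, is precisely Proposition~\ref{theorem:bisimimpliesmodalequivls}; the right-to-left implication, $(\m,w) \equiv^S (\m',w')$ implies $(\m,w) \bisim (\m',w')$, is precisely Proposition~\ref{theorem:modalequivimpliesbisimls}. Thus the theorem requires no argument beyond citing these two propositions, and I would state the proof as ``From Proposition~\ref{theorem:bisimimpliesmodalequivls} and Proposition~\ref{theorem:modalequivimpliesbisimls}.''

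Because the theorem is a one-line corollary, the real content sits in the two propositions. For the forward direction I rely on the induction of Proposition~\ref{theorem:bisimimpliesmodalequivls}, whose only genuinely new case is $\phi = \Box_a \psi$. The delicate point there is that $\Box_a$ is interpreted through the normal plausibility relation $\succeq_a = \geq_a^R$ with $R$ the largest autobisimulation of the \emph{individual} model, whereas the back/forth conditions live on the disjoint union. The Claim of that proposition---that the largest autobisimulation on $\m_1 \sqcup \m_2$ restricts on each component $\m_i$ to the largest autobisimulation on $\m_i$, and that consequently $\geq_a^R$ and $\geq_a^{R_i}$ coincide there---is exactly what lets me translate a witness $w_2 \succeq_a v_2$ into $w_2 \geq_a^R v_2$ on the union, apply [back$_\geq$], and translate the resulting $w_1 \geq_a^R v_1$ back into $w_1 \succeq_a v_1$ before invoking the induction hypothesis.

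For the backward direction I would defer to Proposition~\ref{theorem:modalequivimpliesbisimls}, whose strategy is to translate every conditional belief formula into a logically equivalent safe belief formula; this yields that $L^S$-modal equivalence implies $L^C$-modal equivalence, whence Proposition~\ref{theorem:modalequivimpliesbisimlc} delivers bisimilarity. I expect the main obstacle to lie entirely here rather than in the theorem itself: unlike $\Box_a$, conditional belief $B_a^\gamma \phi$ speaks about the \emph{most plausible} $\gamma$-worlds, so expressing ``the minimal $\gamma$-worlds all satisfy $\phi$'' using only the universal $\succeq_a$-quantifier of safe belief (together with knowledge) is the genuinely non-trivial step. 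It is this translation---postponed to Section~\ref{expressivity:section:safe-belief}---on which the whole right-to-left direction, and hence the characterisation, ultimately rests.
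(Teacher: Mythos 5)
Your proposal is correct and coincides with the paper's own proof, which is exactly the one-line combination of Proposition~\ref{theorem:bisimimpliesmodalequivls} and Proposition~\ref{theorem:modalequivimpliesbisimls}. Your account of where the real work lives---the restriction claim for the largest autobisimulation in the forward direction, and the translation of conditional belief into safe belief (deferred to Section~\ref{expressivity:section:safe-belief}) in the backward direction---also matches the paper's structure.
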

\begin{proof}
From Proposition \ref{theorem:bisimimpliesmodalequivls} and Proposition \ref{theorem:modalequivimpliesbisimls}.
\end{proof}

\subsection{Combining characterisation results}
By combining Theorems \ref{theorem:bisimcharlc}, \ref{theorem:bisimcharld} and \ref{theorem:bisimcharls} from the previous subsections we immediately have the following result.
\begin{corollary} \label{expressivity:corollary:bisim-implies-modal-equiv-cds}
	Bisimilarity corresponds to modal equivalence in the logics of conditional belief, degrees of belief, and safe belief, and in any logic containing two or all three of these belief modalities; and modal equivalence in one of these logics corresponds to modal equivalence in any other.
	\end{corollary}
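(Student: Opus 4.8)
The plan is to establish the single equivalence $(\m,w) \bisim (\m',w')$ iff $(\m,w) \equiv^X (\m',w')$ for \emph{every} subsequence $X$ of $CDS$; both assertions of the corollary are then immediate. The crucial observation underlying everything is that the bisimilarity relation $\bisim$ is \emph{the same} relation in all of Theorems~\ref{theorem:bisimcharlc}, \ref{theorem:bisimcharld} and \ref{theorem:bisimcharls}: it is fixed once and for all by Definition~\ref{def.bisim} and does not depend on the language. For the three primitive logics ($X \in \{C,D,S\}$) the claim is exactly those three theorems, so only the combined languages require further argument, and there the two directions are handled separately.

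The direction from modal equivalence to bisimilarity is the easy one. Given a subsequence $X$ containing at least one symbol, pick any single symbol $Y \in X$; since $L^Y \subseteq L^X$, agreement on all $L^X$-formulas entails agreement on all $L^Y$-formulas, so $(\m,w) \equiv^X (\m',w')$ implies $(\m,w) \equiv^Y (\m',w')$. Applying whichever of Theorems~\ref{theorem:bisimcharlc}, \ref{theorem:bisimcharld}, \ref{theorem:bisimcharls} corresponds to $Y$ then yields $(\m,w) \bisim (\m',w')$.

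The direction from bisimilarity to modal equivalence for a combined language is where the actual work lies, and where one cannot simply quote the three theorems: a formula of $L^X$ may nest modalities of different kinds (for instance $B_a^{\Box_b p} q$), so it belongs to none of the primitive sublanguages. Instead I would redo the induction on formula complexity, exactly as in the proofs of Propositions~\ref{theorem:bisimimpliesmodalequivlc}, \ref{theorem:bisimimpliesmodalequivld} and \ref{theorem:bisimimpliesmodalequivls}. The point is that each of those proofs treats precisely one kind of modality in its inductive step ($K_a$ and $B_a^\gamma$ in the first, $B_a^n$ in the second, $\Box_a$ in the third), while relying only on the induction hypothesis that bisimilar worlds agree on immediate subformulas. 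Because $\bisim$ is literally the same relation in all three --- built from the largest autobisimulation and the derived plausibility relation $\geq_a^R$ via Proposition~\ref{prop:maxautos} --- these inductive cases share a common induction hypothesis and can be concatenated into a single induction covering every $L^X$-formula. The main obstacle is therefore not a new mathematical difficulty but this bookkeeping: verifying that the three inductive arguments, originally written for separate languages, genuinely compose over one shared bisimulation relation.

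Putting the two directions together gives $\bisim\, =\, \equiv^X$ for every subsequence $X$. The first assertion of the corollary is this statement for $X$ ranging over the primitive logics and their combinations. The second assertion follows because each $\equiv^X$ now coincides with the single language-independent relation $\bisim$: for any two of the logics $L^X$ and $L^{X'}$ we have $(\m,w) \equiv^X (\m',w')$ iff $(\m,w) \bisim (\m',w')$ iff $(\m,w) \equiv^{X'} (\m',w')$.
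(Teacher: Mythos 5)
Your proposal is correct and follows essentially the same route as the paper, which simply states that the corollary follows ``immediately'' by combining Theorems~\ref{theorem:bisimcharlc}, \ref{theorem:bisimcharld} and \ref{theorem:bisimcharls}. You are in fact more careful than the paper on the one point that is not literally immediate --- that bisimilarity implies modal equivalence for formulas nesting different kinds of modalities requires concatenating the three inductions over a shared bisimulation relation --- and your observation that this composition is legitimate (each inductive case relying only on the common induction hypothesis over the largest autobisimulation) is exactly the justification the paper leaves implicit.
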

For example, we also have that $(\m,w) \bisim (\m',w')$ iff $(\m,w) \equiv^{CDS} (\m',w')$, or that $(\m,w) \bisim (\m',w')$ iff $(\m,w) \equiv^{DS} (\m',w')$. Also, to be explicit, we now have that
\begin{itemize}
\item $(\m,w)\equiv^C(\m',w')$ iff $(\m,w)\equiv^D(\m',w')$
\item $(\m,w)\equiv^C(\m',w')$ iff $(\m,w)\equiv^S(\m',w')$
\item $(\m,w)\equiv^D(\m',w')$ iff $(\m,w)\equiv^S(\m',w')$
\end{itemize}
In other words, the information content of a pointed plausibility model is equally well described in any of these logics. This seems to suggest that it does not matter which logic you use to describe the information content of such a model, apart from the usual considerations of succinctness. Still, this is not the case: our logics are not equally expressive. This will now be addressed in the next section.


\section{Expressivity} \label{section:expressivity}
In this section we will determine the expressivity hierarchy of the logics under consideration. Abstractly speaking, expressivity is a yardstick for measuring whether two logics are able to capture the same properties of a class of models. More concretely in our case, we will for instance be interested in determining whether the conditional belief modality can be expressed using the degrees of belief modality (observe that the translation in Section \ref{section:cdegrees} depends on a particular model). With such results at hand we can for instance justify the inclusion or exclusion of a modality, and it also sheds light upon the strengths and weaknesses of our doxastic notions. To start things off we now formally introduce the notion of expressivity found in \cite{hvdetal.del:2007}. 
	\begin{definition} \label{expressivity:definition:expressivity}
		Let $\lang$ and $\lang'$ be two logical languages interpreted on the same class of models. 
		\begin{itemize}
			\item For $\phi \in \lang$ and $\phi' \in \lang'$, we say that $\phi$ and $\phi'$ are \emph{equivalent} ($\phi \equiv \phi'$) iff they are true in the same pointed models of said class.\footnote{With our usage of $\equiv$ it is clear from context whether we're referring to modal equivalence, formulas or languages.}
			\item $\lang'$ is \emph{at least as expressive} as $\lang$ ($\lang \leqq \lang'$) iff for every $\phi \in \lang$ there is a $\phi' \in \lang'$ s.t. $\phi \equiv \phi'$.
			\item $\lang$ and $\lang'$ are \emph{equally expressive} ($\lang \equiv \lang'$) iff $\lang \leqq \lang'$ and $\lang' \leqq \lang$.
			\item $\lang'$ is \emph{more expressive} than $\lang$ ($\lang < \lang'$) iff $\lang \leqq \lang'$ and $\lang' \not \leqq \lang$.
			\item $\lang$ and $\lang'$ are \emph{incomparable} ($\lang \bowtie \lang'$) iff $\lang \not \leqq \lang'$ and $\lang' \not \leqq \lang$.
		\end{itemize}
	\end{definition}
	Below we will show several cases where $\lang \not \leqq \lang'$; i.e. that $\lang'$ is \emph{not} at least as expressive as $\lang$. Our primary modus operandi (obtained by logically negating $\lang \leqq \lang'$) will be to show that there is a $\phi \in \lang$, where for any $\phi' \in \lang'$ we can find two pointed models $(\m,w), (\m',w')$ such that 
	\begin{align*}
		\m,w \models \phi, \ \ \m',w' \not \models \phi \quad \text{ and } \quad ( \m,w \models \phi' \Leftrightarrow \m', w' \models \phi' )
	\end{align*}
	In other words, for some $\phi \in \lang$, no matter the choice of $\phi' \in \lang'$, there will be models which $\phi$ distinguishes but $\phi'$ does not, meaning that $\phi \not \equiv \phi'$. 
	
	Our investigation will be concerned with the 7 distinct languages that are  obtained by considering each $\lang^X$ such that $X$ is a non-empty subsequence of $CDS$. In Section \ref{expressivity:section:safe-belief} our focus is on safe belief, and in Section \ref{expressivity:section:degrees-of-belief} on degrees of belief. Using these results, we provide in Section \ref{expressivity:section:summary} a full picture of the relative expressivity of each of these logics, for instance showing that we can formulate 5 distinct languages up to equal expressivity. We find this particularly remarkable in light of the fact that our notion of bisimulation is the right fit for all our logics.
	
\subsection{Expressivity of Safe Belief} \label{expressivity:section:safe-belief}
		Our first result, Proposition~\ref{expressivity:proposition:conditional-safe-identity}, shows that the conditional belief modality can be expressed in terms of the safe belief modality. Similar results can be found elsewhere in the literature, for instance in \cite[Fact 31]{demey:2011} and \cite{baltagetal.tlg3:2008}. In fact, the overall idea of reducing the binary conditional belief operator to a unary belief operator goes back to \cite{boutilier1990conditional} and \cite{lamarre1991s4}. 
		
		Below we prove that the identity found in \cite{demey:2011} is also a valid identity in our logics, which is not a given as our semantics differ in essential ways. In particular the semantics of safe belief in \cite{demey:2011} is a standard modality for $\geq_a$, whereas our semantics uses the derived relation $\succeq_a$. A more in-depth account of this matter is provided in Section \ref{section:comparison}. Returning to the matter at hand, we point out that our work in Section \ref{section:corr} actually serves our investigations here, as evident from the crucial role of Proposition~\ref{theorem:bisimimpliesmodalequivlc} in the following proof.
		\begin{proposition} \label{expressivity:proposition:conditional-safe-identity}
					Let $\phi, \psi \in L_C$. Then
			the formula $B_a^\psi \phi \leftrightarrow ( \widehat{K}_a \psi \rightarrow \widehat{K}_a( \psi \wedge \Box_a( \psi \rightarrow \phi ) ) )$ is valid.
				\end{proposition}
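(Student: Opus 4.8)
The plan is to prove validity pointwise: I fix an arbitrary pointed plausibility model $(\m,w)$ with normal plausibility relation $\succeq_a = \geq_a^R$ (where $R$ is the largest autobisimulation on $\m$) and show the biconditional holds there. Write $Z = \llbracket \psi \rrbracket_\m \cap [w]_a$. If $Z = \emptyset$ both sides are trivially true: $B_a^\psi \phi$ holds vacuously because $\Min_a Z = \emptyset$, and the right-hand side holds vacuously because its antecedent $\widehat{K}_a \psi$ is false. So the real work is the case $Z \neq \emptyset$, where $\widehat{K}_a\psi$ is true and the right-hand side collapses to $\widehat{K}_a(\psi \wedge \Box_a(\psi \rightarrow \phi))$; I must then show $\m,w \models B_a^\psi \phi$ iff there is some $v \in [w]_a$ with $\m,v \models \psi$ and $\m,v \models \Box_a(\psi \rightarrow \phi)$.

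The crux --- and the main obstacle --- is that $B_a^\psi\phi$ is defined through $\Min_a = \Min_{\geq_a}$ (the raw plausibility relation), whereas $\Box_a$ quantifies along the derived relation $\succeq_a$. I would bridge this gap with an auxiliary comparison of $\Min_{\geq_a} Z$ and $\Min_{\succeq_a} Z$. The key observation is that since $\psi \in L^C$, Proposition~\ref{theorem:bisimimpliesmodalequivlc} makes $\llbracket \psi \rrbracket_\m$ closed under $R$; hence $Z$ is closed under $R$ inside $[w]_a$. Writing $m(x) = \Min_a([x]_R \cap [w]_a)$, I would prove two facts: \textbf{(a)} $\Min_{\geq_a} Z \subseteq \Min_{\succeq_a} Z$, and \textbf{(b)} every element of $\Min_{\succeq_a} Z$ is $R$-related to some element of $\Min_{\geq_a} Z$. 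Both follow from the $R$-closure of $Z$ together with Lemma~\ref{lemma:wellr}: for (a), if $x$ is $\geq_a$-minimal in $Z$ then $m(x) \simeq_a x$ and $m(y) \geq_a x$ for every $y \in Z$, giving $m(y) \geq_a m(x)$, i.e.\ $y \succeq_a x$; for (b), if $x$ is $\succeq_a$-minimal I pick $x_0 \in m(x) \subseteq Z$ and check it is $\geq_a$-minimal in $Z$, using that $m(y) \geq_a m(x) \simeq_a x_0$ for all $y \in Z$. Combining (a) and (b) with Proposition~\ref{theorem:bisimimpliesmodalequivlc} (bisimilar worlds agree on $L^C$) yields the transfer principle I need: for every $\chi \in L^C$, all worlds of $\Min_{\geq_a} Z$ satisfy $\chi$ iff all worlds of $\Min_{\succeq_a} Z$ do. Applied to $\chi = \phi$ this says $\m,w \models B_a^\psi\phi$ iff every world of $\Min_{\succeq_a} Z$ satisfies $\phi$.

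With this reformulation both directions become routine. For the direction from the right-hand side, given a witness $v$ with $\m,v\models\psi$ and $\m,v\models\Box_a(\psi\rightarrow\phi)$, any $t \in \Min_{\succeq_a} Z$ satisfies $v \succeq_a t$ (as $t$ is $\succeq_a$-minimal and $v \in Z$), so $\Box_a(\psi\rightarrow\phi)$ and $t \models \psi$ give $t \models \phi$; hence all of $\Min_{\succeq_a} Z$ --- and therefore, already by inclusion (a), all of $\Min_{\geq_a} Z$ --- satisfies $\phi$, i.e.\ $\m,w \models B_a^\psi\phi$. For the converse, assuming $\m,w \models B_a^\psi\phi$ I get that every world of $\Min_{\succeq_a} Z$ satisfies $\phi$ (this is exactly where the transfer via Proposition~\ref{theorem:bisimimpliesmodalequivlc} is essential, since $B_a^\psi\phi$ only directly controls $\Min_{\geq_a} Z$); since $\succeq_a$ is a well-preorder (Lemma~\ref{lemma:wellr}) and $Z \neq \emptyset$, I may pick $v \in \Min_{\succeq_a} Z$, which satisfies $\psi$, and verify $\m,v \models \Box_a(\psi\rightarrow\phi)$: any $u$ with $v \succeq_a u$ and $u \models \psi$ lies in $Z$ and, by minimality of $v$, is $\simeq_a^R v$, hence also lies in $\Min_{\succeq_a} Z$ and satisfies $\phi$. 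This $v$ witnesses $\widehat{K}_a(\psi \wedge \Box_a(\psi\rightarrow\phi))$, completing the proof.
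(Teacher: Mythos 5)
Your proof is correct. It relies on the same essential ingredients as the paper's own argument---the fact that $\llbracket \psi \rrbracket_\m \cap [w]_a$ is closed under the largest autobisimulation $R$ (via Proposition~\ref{theorem:bisimimpliesmodalequivlc}) and the resulting interplay between $\Min_{\geq_a}$ and $\Min_{\succeq_a}$ on that set---but it packages them differently. The paper works throughout with $X = \Min_{\geq_a}(\llbracket \psi \rrbracket_\m \cap [w]_a)$ and, in each direction separately, unfolds the definition of $\succeq_a$ to chase a bisimilar copy of the relevant world back into $X$: its Claim~1 produces, for each $y$ with $x \succeq_a y$ and $y \models \psi$, a $y' \in X$ with $(y,y') \in R$, and its Claim~2 shows by contradiction that the witness $u$ of the right-hand side satisfies $u \succeq_a x$ for all $x \in X$. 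You instead isolate the comparison once and for all as a transfer principle---$\Min_{\geq_a} Z \subseteq \Min_{\succeq_a} Z$, and every $\succeq_a$-minimal element of $Z$ is $R$-related to a $\geq_a$-minimal one---so that the two minimal sets agree on all $L^C$-formulas and $B_a^\psi \phi$ can be reread as universal truth of $\phi$ on $\Min_{\succeq_a} Z$; after that both directions follow in a few lines from $\succeq_a$-minimality. Your decomposition buys a cleaner, reusable lemma and makes the converse direction constructive (the witness for $\widehat{K}_a(\psi \wedge \Box_a(\psi \rightarrow \phi))$ is exhibited as any $\succeq_a$-minimal element of $Z$ rather than obtained via a proof by contradiction), at the modest cost of having to verify the two auxiliary facts (a) and (b) up front; both verifications are sound as you sketch them.
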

			\begin{proof}
				We let $\m = (W, \geq, V)$ be any plausibility model with $w \in W$, and further let $\succeq_a$ denote the normal plausibility relation for an agent $a$ in $\m$. We will show that $\m, w \models B_a^\psi \phi \leftrightarrow ( \widehat{K}_a \psi \rightarrow \widehat{K}_a( \psi \wedge \Box_a( \psi \rightarrow \phi ) ) )$. To this end we let $X = \Min_a ( \llbracket \psi \rrbracket_\m \cap [w]_a)$. Immediately we have that if $X = \emptyset$ then no world in $[w]_a$ satisfies $\psi$, thus trivially yielding both $\m,w \models B_a^\psi \phi$ and $\m, w \models \widehat{K}_a \psi \rightarrow \widehat{K}_a( \psi \wedge \Box_a( \psi \rightarrow \phi ) )$. For the remainder we therefore assume $X$ is non-empty. We now work under the assumption that $\m,w \models B_a^\psi \phi$ and show that this implies $\m, w \models \widehat{K}_a \psi \rightarrow \widehat{K}_a( \psi \wedge \Box_a( \psi \rightarrow \phi ) )$.
				
				\medskip
				\noindent \textit{Claim 1}. 
				Let $x \in X$ be arbitrarily chosen, then $\m, x \models \psi \wedge \Box_a( \psi \rightarrow \phi )$.
				
				\medskip
				\noindent \textit{Proof of claim 1.} 
				From $x \in X$ we have first that $\m,x \models \psi \wedge \phi$ and $w \sim_a x$. Since $\m,x \models \psi$ this means we have proven Claim 1 if $\m,x \models \Box_a( \psi \rightarrow \phi )$ can be shown. To that effect, consider any $y \in W$ s.t. $x \succeq_a y$, for which we must prove $\m, y \models \psi \rightarrow \phi$. When $\m, y \not \models \psi$ this is immediate, and so we may assume $\m, y \models \psi$. Since $x \succeq_a y$ we have $\Min_a ([x]_{R^=} \cap [x]_a) \geq_a \Min_a ([y]_{R^=} \cap [y]_a)$ with $R$ being the largest autobisimulation on $\m$. As $R$ is an autobisimulation we have worlds $x', y'$ in $\m$ such that $(y,y') \in R$, $x \geq_a x'$ and $x' \geq_a y'$. Applying Proposition~\ref{theorem:bisimimpliesmodalequivlc} and $\m,y \models \psi$ it follows that $\m, y' \models \psi$. Using $\geq_a$-transitivity we have $x \geq_a y'$ and hence $w \sim_a x \sim_a y'$, allowing the conclusion that $y' \in X$. By assumption this means $\m, y' \models \psi \wedge \phi$, and so applying once more Proposition~\ref{theorem:bisimimpliesmodalequivlc} it follows that $\m, y \models \psi \rightarrow \phi$ thus completing the proof of this claim.
				
				\medskip \noindent
				To show $\m, w \models \widehat{K}_a \psi \rightarrow \widehat{K}_a( \psi \wedge \Box_a( \psi \rightarrow \phi ) )$ we take any $x \in X$, for which we have $w \sim_a x$ by definition of $X$. Combining this with Claim 1 it follows that $\m, w \models \widehat{K}_a( \psi \wedge \Box_a(\psi \rightarrow \phi ) )$. Consequently this also means that $\m, w \models \widehat{K}_a \psi \rightarrow \widehat{K}_a( \psi \wedge \Box_a( \psi \rightarrow \phi ) )$, thus completing the proof of this direction.
				
				For the converse assume now that $\m, w \models \widehat{K}_a \psi \rightarrow \widehat{K}_a( \psi \wedge \Box_a( \psi \rightarrow \phi ) )$. As $X \neq \emptyset$ there is a world $u \in W$ s.t. $w \sim_a u$ and $\m, u \models \psi \wedge \Box_a(\psi \rightarrow \phi)$. Therefore we have $\m, u' \models \psi \rightarrow \phi$ for all $u \succeq_a u'$. 
				
				\medskip
				\noindent \textit{Claim 2}. 
				Let $x \in X$ be arbitrarily chosen, then $\m, x \models \phi$.
				
				\medskip
				\noindent \textit{Proof of claim 2.} 
				Since $x \in X$ we have by definition that $\m, x \models \psi$. It is sufficient to prove that $u \succeq_a x$ because this implies $\m, x \models \psi \rightarrow \phi$ and hence $\m, x \models \phi$ as required. To show $u \succeq_a x$ we assume towards a contradiction that $u \not \succeq_a x$. Now let $R$ denote the largest bisimulation on $\m$ and consider any $x' \in \Min_a ([x]_{R^=} \cap [x]_a)$. As $R^=$ and $\sim_a$ are both reflexive, we have $x \geq_a x'$. From $u \not \succeq_a x$ we therefore have a $u' \in \Min_a ([u]_{R^=} \cap [u]_a)$ s.t. $u' \not \geq_a x'$, $u \sim_a u'$ and $(u,u') \in R$ (thus also $x' >_a u'$). Since $u' \sim_a u$ and $u \sim_a w$ we have also $u' \sim_a w$, and additionally from $x \geq_a x'$ and $x' >_a u'$ we can conclude that $x \geq_a u'$ and $u' \not \geq_a x$. Using $\m, u \models \psi$ and $(u,u') \in R$ we apply Proposition~\ref{theorem:bisimimpliesmodalequivlc} which implies $\m, u' \models \psi$. As $x \in X$, $u' \sim_a w$ and $x \geq_a u'$ it must be the case that $u' \in X$. From $u' \not \geq_a x$ we also have that $x \not \in X$, but this this contradicts our initial assumption that $x \in X$. We therefore have $u \succeq_a x$ and hence that $\m, x \models \phi$ which completes the proof of the claim.
				\medskip \noindent
				
				Recalling that $\m, w \models B_a^\psi \phi$ iff $\m, x \models \phi$ for all $x \in X$, Claim 2 readily shows this direction, and thereby completes the proof. 
			\end{proof}
		This result shows there is an equivalence-preserving translation from formulas in $\lang^C$ to formulas in $\lang^S$, and so we have the following results.
		\begin{corollary} \label{expressivity:corollary:conditional-translates-to-safe}
			For any $\phi \in \lang^{C}$ there is a formula $\phi' \in \lang^{S}$ s.t. $\phi \equiv \phi'$.
		\end{corollary}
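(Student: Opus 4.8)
The plan is to lift the single-formula identity of Proposition~\ref{expressivity:proposition:conditional-safe-identity} to a recursively defined translation $t \colon \lang^C \to \lang^S$ and then prove $\phi \equiv t(\phi)$ for every $\phi \in \lang^C$ by induction on formula structure. On the propositional and knowledge connectives $t$ acts homomorphically ($t(p) = p$, $t(\neg\phi) = \neg t(\phi)$, $t(\phi_1 \wedge \phi_2) = t(\phi_1) \wedge t(\phi_2)$, $t(K_a\phi) = K_a t(\phi)$), while on a conditional belief modality it inserts the identity applied to the \emph{already translated} arguments:
\[
  t(B_a^\psi \phi) = \widehat{K}_a t(\psi) \rightarrow \widehat{K}_a\bigl( t(\psi) \wedge \Box_a( t(\psi) \rightarrow t(\phi) ) \bigr).
\]
Since $\widehat{K}_a$ is the dual of $K_a$ and both $K_a$ and $\Box_a$ are primitives of $\lang^S$, once $t(\psi)$ and $t(\phi)$ lie in $\lang^S$ the right-hand side contains no conditional belief modality. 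Thus a preliminary induction shows that $t$ really does map $\lang^C$ into $\lang^S$.

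For the equivalence $\phi \equiv t(\phi)$ the only non-routine case is $\phi = B_a^\psi \phi_0$. Here the induction hypothesis gives $\psi \equiv t(\psi)$ and $\phi_0 \equiv t(\phi_0)$, while Proposition~\ref{expressivity:proposition:conditional-safe-identity} gives
\[
  B_a^\psi \phi_0 \;\equiv\; \widehat{K}_a \psi \rightarrow \widehat{K}_a\bigl( \psi \wedge \Box_a( \psi \rightarrow \phi_0 ) \bigr).
\]
It then remains to pass from this right-hand side to $t(B_a^\psi \phi_0)$ by substituting $t(\psi)$ for $\psi$ and $t(\phi_0)$ for $\phi_0$. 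This is precisely an application of the \emph{congruence} (replacement) property of semantic equivalence: the satisfaction clauses of Definition~\ref{defi:sat-rel} are compositional, so $\llbracket \theta \rrbracket_\m$ depends on an immediate subformula only through its extension. In particular $\psi \equiv t(\psi)$ forces $\llbracket \psi \rrbracket_\m = \llbracket t(\psi)\rrbracket_\m$ in every model, so the $\widehat{K}_a$-, $\wedge$- and $\Box_a$-contexts in which $\psi$ and $\phi_0$ occur produce identical truth values after the replacement (and the normal plausibility relation $\succeq_a$ defining the $\Box_a$-semantics is model-determined, hence unaffected by the substitution). Chaining the two equivalences yields $B_a^\psi \phi_0 \equiv t(B_a^\psi \phi_0)$, completing the induction.

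The one genuinely load-bearing point is this congruence step, and the cleanest way to secure it is a small auxiliary lemma, proved by induction on the structure of the host formula $\theta$, stating that replacing a subformula $\chi$ by an equivalent $\chi'$ inside $\theta$ yields a formula equivalent to $\theta$. The propositional and knowledge cases are immediate; the only checks requiring a word are the conditional belief and safe belief cases, where one observes that $\Min_a(\llbracket \cdot \rrbracket_\m \cap [w]_a)$ and the $\succeq_a$-downsets depend on the replaced subformula only via its extension. Everything else is bookkeeping. Taking $\phi' = t(\phi)$ then gives the corollary, since $t(\phi) \in \lang^S$ and $\phi \equiv t(\phi)$.
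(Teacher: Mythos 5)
Your proposal is correct and is essentially the paper's own argument: the paper derives this corollary directly from Proposition~\ref{expressivity:proposition:conditional-safe-identity} by observing that the identity yields an equivalence-preserving translation, leaving the recursive definition of $t$ and the induction implicit. You have merely spelled out that induction, including the replacement-of-equivalents step (and correctly noted that the $\Box_a$-clause is unproblematic because $\succeq_a$ is determined by the model, not by the formula), so no further comparison is needed.
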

		\begin{corollary} \label{expressivity:corollary:conditional-loses-safe}
			$\lang^C \leqq \lang^S$, $\lang^S \equiv \lang^{CS}$ and $\lang^{DS} \equiv \lang^{CDS}$.
		\end{corollary}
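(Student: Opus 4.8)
The first claim, $\lang^C \leqq \lang^S$, is precisely the content of Corollary~\ref{expressivity:corollary:conditional-translates-to-safe}, so nothing remains to be done for it. For each of the two equivalences one inclusion is immediate: since $\lang^S$ is syntactically a fragment of $\lang^{CS}$, and $\lang^{DS}$ a fragment of $\lang^{CDS}$, every formula of the smaller language is equivalent to itself in the larger one, giving $\lang^S \leqq \lang^{CS}$ and $\lang^{DS} \leqq \lang^{CDS}$. The real work lies in the converse inclusions $\lang^{CS} \leqq \lang^S$ and $\lang^{CDS} \leqq \lang^{DS}$, that is, in eliminating the conditional belief modality in favour of modalities already present in the target language.

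The plan is to build, by recursion on formula structure, a truth-preserving translation $t$ (with target $\lang^S$ in the first case and $\lang^{DS}$ in the second) that leaves atoms and the boolean, $K_a$, $\Box_a$ (and, in the second case, $B^n_a$) cases untouched, and that treats conditional belief by the identity of Proposition~\ref{expressivity:proposition:conditional-safe-identity}:
\[
  t(B_a^\psi \phi) \;=\; \widehat{K}_a\, t(\psi) \rightarrow \widehat{K}_a\bigl( t(\psi) \wedge \Box_a( t(\psi) \rightarrow t(\phi) ) \bigr).
\]
Because the subformulas are translated first, $t(B_a^\psi \phi)$ contains no conditional belief modality and so lies in the target language, and the recursion plainly terminates. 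Correctness, $t(\chi) \equiv \chi$, then follows by induction on $\chi$. All the non-conditional cases are routine: each primitive modality is a congruence, since its truth depends only on the extensions of its subformulas, so the induction hypothesis carries through. The only interesting step is $\chi = B_a^\psi \phi$, where the induction hypothesis gives $t(\psi) \equiv \psi$ and $t(\phi) \equiv \phi$, hence $\llbracket t(\psi)\rrbracket_\m = \llbracket \psi\rrbracket_\m$ and $\llbracket t(\phi)\rrbracket_\m = \llbracket \phi\rrbracket_\m$ in every $\m$; this yields $B_a^{t(\psi)} t(\phi) \equiv B_a^\psi \phi$, after which the identity applied to $t(\psi),t(\phi)$ finishes the case.

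The one point requiring care---and the main obstacle---is that Proposition~\ref{expressivity:proposition:conditional-safe-identity} is stated only for $\psi,\phi \in \lang^C$, whereas the translated subformulas $t(\psi),t(\phi)$ live in $\lang^S$ (resp.\ $\lang^{DS}$). I claim the identity in fact holds for arbitrary $\psi,\phi$ drawn from any language for which bisimilarity implies modal equivalence. Inspecting the proof of Proposition~\ref{expressivity:proposition:conditional-safe-identity}, the hypothesis $\psi,\phi \in \lang^C$ is used only in its three appeals to Proposition~\ref{theorem:bisimimpliesmodalequivlc}, each of which merely asserts that two $R$-related (hence bisimilar) worlds agree on $\psi$ or on $\phi$, where $R$ is the largest autobisimulation. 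Replacing those appeals by the corresponding bisimilarity-implies-modal-equivalence result for the wider language makes the argument go through verbatim. For $\lang^{CS} \leqq \lang^S$ we need this for $\lang^S$, which is exactly Proposition~\ref{theorem:bisimimpliesmodalequivls}; for $\lang^{CDS} \leqq \lang^{DS}$ we need it for $\lang^{DS}$, which follows by a single induction combining Propositions~\ref{theorem:bisimimpliesmodalequivld} and~\ref{theorem:bisimimpliesmodalequivls} with the already-covered propositional and $K_a$ cases.

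Crucially, all of these are instances of the \emph{easy} direction of bisimulation characterisation, and none of them depends on the present corollary; in particular we never invoke the postponed Proposition~\ref{theorem:modalequivimpliesbisimls}, so no circularity arises. With the generalised identity established, the two translations are correct, the converse inclusions $\lang^{CS} \leqq \lang^S$ and $\lang^{CDS} \leqq \lang^{DS}$ hold, and together with the trivial inclusions and the first claim this gives $\lang^C \leqq \lang^S$, $\lang^S \equiv \lang^{CS}$ and $\lang^{DS} \equiv \lang^{CDS}$, as required.
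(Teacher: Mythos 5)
Your proposal is correct and follows the same route the paper intends: the corollary is meant to follow from the translation of conditional belief via the identity of Proposition~\ref{expressivity:proposition:conditional-safe-identity}, with the sublanguage inclusions supplying the easy directions. In fact your write-up is more careful than the paper, which states the corollary with no proof at all and thereby glosses over exactly the point you isolate: the identity is only stated for $\psi,\phi\in\lang^C$, whereas eliminating conditional belief from $\lang^{CS}$ or $\lang^{CDS}$ requires applying it to already-translated subformulas containing $\Box_a$ (and $B^n_a$). Your observation that the hypothesis $\psi,\phi\in\lang^C$ enters the proof of Proposition~\ref{expressivity:proposition:conditional-safe-identity} only through its appeals to Proposition~\ref{theorem:bisimimpliesmodalequivlc}, and that these can be replaced by Proposition~\ref{theorem:bisimimpliesmodalequivls} (respectively a combined induction with Proposition~\ref{theorem:bisimimpliesmodalequivld}) without touching the postponed Proposition~\ref{theorem:modalequivimpliesbisimls}, is accurate and resolves the potential circularity correctly. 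This is a genuine gap in the paper's exposition that your argument closes.
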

		From Corollary \ref{expressivity:corollary:conditional-loses-safe} we have that any expressivity result for $\lang^S$ also holds for $\lang^{CS}$, and similarly for $\lang^{DS}$ and $\lang^{CDS}$. In other words, the conditional belief modality is superfluous in terms of expressivity when the safe belief modality is at our disposal. What is more, we can now finally give a full proof of Proposition~\ref{theorem:modalequivimpliesbisimls}.
			\begin{proof}[Proof of Proposition~\ref{theorem:modalequivimpliesbisimls}]
				Let $(\m, w)$ and $(\m', w')$ be plausibility models which are modally equivalent in $\lang^S$. For any $\phi_C \in \lang^C$ it follows from Corollary \ref{expressivity:corollary:conditional-translates-to-safe} that there is a $\phi_S \in \lang^S$ s.t. $\phi_C \equiv \phi_S$. Therefore
				\begin{align*}
					\m,w \models \phi_C \Leftrightarrow \m,w \models \phi_S \xLeftrightarrow[]{\ \equiv^S } \m',w' \models \phi_S \Leftrightarrow  \m',w' \models \phi_C
				\end{align*}
				and hence $(\m,w) \equiv^C (\m',w')$. Using Proposition~\ref{theorem:modalequivimpliesbisimlc} we can conclude $(\m,w) \bisim (\m',w')$ as required.
			\end{proof}		
		We now proceed to show that $\lang^{CD}$ is not at least as expressive as $\lang^S$. In doing so we need only work with $\agents = \{a\}$, meaning that the result holds even in the single-agent case. This is also true for our results in Section \ref{expressivity:section:degrees-of-belief}.
		\begin{lemma} \label{expressivity:lemma:conditional-not-distinguish}
			Let $p,q$ be distinct symbols in $\props$, and let $\m = (W, \geq, V)$ and $\m' = (W', \geq', V')$ denote the two plausibility models presented in Figure \ref{expressivity:fig:modal-equiv-for-lc}. Then for $\props' = \props \setminus \{q\}$ we have that $(\m,w_3) \equiv^{CD}_{\props'} (\m', w_3')$.
				\end{lemma}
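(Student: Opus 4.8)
The plan is to reduce the claim to a bisimilarity statement over the restricted alphabet and then invoke the characterisation results of Section~\ref{section:corr}. Since $\props' = \props \setminus \{q\}$ is again countably infinite, every result of that section applies verbatim with $\props'$ in place of $\props$; in particular, Theorem~\ref{theorem:bisimcharlc}, Theorem~\ref{theorem:bisimcharld} and Corollary~\ref{expressivity:corollary:bisim-implies-modal-equiv-cds}, read over the alphabet $\props'$, tell us that bisimilarity of two $\props'$-models coincides with $\equiv^C_{\props'}$, with $\equiv^D_{\props'}$, and hence with $\equiv^{CD}_{\props'}$. So it suffices to exhibit a single bisimulation, over $\props'$, relating the two distinguished worlds. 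Concretely I would pass to the $\props'$-reducts $\m_0$ and $\m_0'$ of $\m$ and $\m'$, obtained by deleting $q$ from every valuation while keeping the plausibility relations untouched, and then write down an explicit relation $Z$ on $\m_0 \sqcup \m_0'$ with $(w_3,w_3') \in Z$ that pairs worlds with identical $\props'$-valuation sitting in corresponding positions of the two orders.

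The concrete work is to verify that $Z$ (strictly, its equivalence closure) is an autobisimulation in the sense of Definition~\ref{def:autobisim}: the condition [atoms] now over $\props'$ only — which is exactly why the distinction carried by $q$ is permitted to vanish — together with [forth$_\geq$], [back$_\geq$], [forth$_\leq$] and [back$_\leq$]. The only mildly delicate point is that these four conditions are phrased through the derived relation $\geq_a^Z$ rather than $\geq_a$, so for each related pair I would compute $\Min_a([\cdot]_{Z^=} \cap [\cdot]_a)$ on both sides and check that the induced orders agree; since the models in Figure~\ref{expressivity:fig:modal-equiv-for-lc} are small and finite, this is a finite case check. Concluding $(\m_0,w_3) \bisim (\m_0',w_3')$ then yields $(\m_0,w_3) \equiv^{CD}_{\props'} (\m_0',w_3')$ by the characterisation theorems over $\props'$.

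The main obstacle — and the reason the statement concerns $\lang^{CD}$ rather than merely $\lang^C$ — is the degree modality $B^n_a$, whose semantics runs through the \emph{normal} plausibility relation $\succeq_a$, i.e. through the largest autobisimulation, and not through $\geq_a$ directly. The $\props'$-characterisation just used produces $\equiv^{CD}_{\props'}$ for the \emph{reducts} $\m_0,\m_0'$, whereas the lemma asserts it for the original $\props$-models $\m,\m'$; so I must also argue that $\m$ and $\m_0$ satisfy exactly the same formulas of $\lang^{CD}_{\props'}$ (and likewise $\m'$ and $\m_0'$). For the propositional, knowledge and conditional-belief cases this transfer is immediate, since those clauses read only $\geq_a$ and the $\props'$-part of the valuation. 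For $B^n_a$ it requires that deleting $q$ does not enlarge the largest autobisimulation: forgetting an atom can only weaken [atoms] and hence can only grow that relation, so what I must check from the figure is that \emph{no new} pairs become autobisimilar, i.e. that the worlds on which $q$ differs are already separated by the plausibility structure (and the remaining atoms). Once this is confirmed, $\succeq_a$ is unchanged at $w_3$ and on its $\sim_a$-class, the spheres $\Min^n_a[w_3]_a$ agree in $\m$ and $\m_0$, and the chain $(\m,w_3) \equiv^{CD}_{\props'} (\m_0,w_3) \equiv^{CD}_{\props'} (\m_0',w_3') \equiv^{CD}_{\props'} (\m',w_3')$ composes to the desired modal equivalence.
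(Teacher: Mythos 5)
Your overall architecture has a genuine gap, and it sits exactly where you flagged the ``only mildly delicate point''. The transfer step $(\m,w_3)\equiv^{CD}_{\props'}(\m_0,w_3)$, where $\m_0$ is the $\props'$-reduct of $\m$, is made to rest on the check that ``no new pairs become autobisimilar'' when $q$ is deleted, so that $\succeq_a$ and the spheres $\Min^n_a[w_3]_a$ are unchanged. For the model $\m$ of Figure~\ref{expressivity:fig:modal-equiv-for-lc} this check is not merely unverified but false: $w_1$ and $w_3$ are separated \emph{only} by $q$ (their valuations become equal in the reduct, and the reduct of $\m$ is isomorphic to $\m'$, where the paper computes that $(w_1',w_3')$ lies in the largest autobisimulation). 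Hence deleting $q$ does enlarge the largest autobisimulation, $\succeq_a$ changes from the strict chain $w_3\succ_a w_2\succ_a w_1$ to one in which $w_1\simeq_a w_3$ are jointly most plausible, and the spheres genuinely differ: $\Min^0_a[w_3]_a=\{w_1\}$ in $\m$ versus $\{w_1,w_3\}$ in the reduct, and $\Min^1_a[w_3]_a=\{w_1,w_2\}$ versus the whole class. So the chain of equivalences you propose breaks at its first link, precisely in the $B^n_a$ case. (The rest of your plan is fine but does little work here: since the reduct of $\m$ is isomorphic to $\m'$, the bisimulation $Z$ between the reducts is just that isomorphism, and the real content of the lemma is entirely the transfer step you have not established.)

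Closing this gap is essentially the whole lemma, and it is what the paper's direct induction does. The paper proves, by induction on $\phi\in\lang^{CD}_{\props'}$ and simultaneously for all three pairs $(w_i,w_i')$, that truth values agree; the $B^\gamma_a$ case goes through because conditional belief is evaluated with $\Min_{\geq_a}$ (the raw relation, the same strict chain in both models), and the $B^n_a$ case is handled by computing the spheres in \emph{both} models, accepting that they differ ($\{w_1\}$ versus $\{w_1',w_3'\}$ at degree $0$, etc.), and bridging the mismatch via the observation that $w_1'$ and $w_3'$ are bisimilar in $\m'$ and hence satisfy the same formulas, so quantifying over $\{w_1',w_3'\}$ is equivalent to quantifying over $\{w_1'\}$. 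If you want to keep your reduction-to-bisimilarity strategy, you would still have to reproduce essentially that argument to relate $\m$ to its own reduct, at which point the detour through $\props'$-characterisation buys nothing.
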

			\begin{proof}
				We prove the stronger result that for any $\phi \in \lang^{CD}_{\props'}$:
				\begin{align*}
					\text{for each } i \in \{1,2,3\} : (\m,w_i \models \phi) \Leftrightarrow ( \m', w_i' \models \phi )
				\end{align*}
				We proceed by induction on $\phi$ and let $i \in \{1,2,3\}$. When $\phi$ is a propositional symbol $r$ in $\props'$, we have that $r \neq q$ and so $r \in V(w_i)$ iff $r \in V'(w_i')$, thus completing the base case. Negation and conjunction are readily shown using the induction hypothesis. 
				
				For $\phi = K_a \psi$ we have that $\m,w_i \models K_a \psi$ iff $\m, v \models \psi$ for all $v \in \{w_1,w_2,w_3\}$, since $[w_i]_a = \{w_1,w_2,w_3\}$. Applying the induction hypothesis to each element this is equivalent to $\m',v' \models \psi$ for all $v' \in \{w_1',w_2',w_3'\}$ iff $\m',w_i' \models K_a \psi$ (as $[w_i']_a = \{w_1',w_2',w_3'\}$), which completes this case. Continuing to consider $\phi = B_a^\gamma \psi$ we can simplify notation slightly, namely $\Min_a ( \llbracket \gamma \rrbracket_\m \cap [w_i]_a) = \Min_a \llbracket \gamma \rrbracket_\m$ since $[w_i]_a = W$. The same holds for each world $w_i'$ of $\m'$. 
				
				\medskip
				\noindent \textit{Claim 1}. For $\m$ and $\m'$ we have that $w_i \in \Min_a \llbracket \gamma \rrbracket_\m$ iff $w_i' \in \Min_a \llbracket \gamma \rrbracket_{\m'}$.
				\medskip
				
				\noindent \textit{Proof of Claim 1.}
					For $\m$ we have that $w_3 >_a w_2$ and $w_2 >_a w_1$, and similarly $w_3' >_a' w_2'$ and $w_2' >_a' w_1'$ for $\m'$. Thus the claim follows from the argument below.
					\begin{align*}
						w_i \in \Min_a \llbracket \gamma \rrbracket_\m &\Leftrightarrow  \\
						\m,w_i \models \gamma \text{ and there is no } j <_a i \text{ s.t. } \m,w_j \models \gamma &\xLeftrightarrow[]{\text{(IH)}} \\
						\m',w_i' \models \gamma \text{ and there is no } j <_a i \text{ s.t. } \m',w_j' \models \gamma &\Leftrightarrow \\
						w_i' \in \Min_a \llbracket \gamma \rrbracket_{\m'} &
					\end{align*}
				
				\medskip \noindent
				We now have that $\m,w_i \models B_a^\gamma \psi$ iff $\m, v \models \psi$ for all $v \in \Min_a \llbracket \gamma \rrbracket_\m$. Applying both the induction hypothesis and Claim 1, we have that this is equivalent to $\m, v' \models \psi$ for all $v' \in \Min_a \llbracket \gamma \rrbracket_{\m'}$ iff $\m',w_i' \models B_a^\gamma \psi$. 
				
				Finally we consider the case of $\phi = B_a^n \psi$. To this end we note that the union of $\{ (w_1', w_3') \}$ and the identity relation on $W$ is the largest bisimulation on $\m'$ (this relation cannot be extended and still satify [atoms]). As $w_1'$ and $w_3'$ are bisimilar, it follows from Corollary \ref{expressivity:corollary:bisim-implies-modal-equiv-cds} that $\m', w_1' \models \psi$ iff $\m',w_3' \models \psi$ $(\ast)$. 
				
				\medskip
				\noindent \textit{Claim 2}. For $n \in \Naturals$ we have that $\m, w \models \psi$ for all $w \in \Min_a^n[w_i]$ iff $\m', w' \models \psi$ for all $w' \in \Min_a^n[w_i']$.
				\medskip
				
				\noindent \textit{Proof of Claim 2.}
					We treat three exhaustive cases for $n$.
					\begin{itemize}
						\item $n = 0$: $\m, w \models \psi$ for all $w \in \Min_a^0[w_i]$ $\Leftrightarrow$ $\m, w_1 \models \psi$ $\xLeftrightarrow[]{\text{(IH)}}$ $\m', w_1' \models \psi$ $\xLeftrightarrow[]{(\ast)}$ $\m', w_3' \models \psi$. Therefore $\m, w \models \psi$ for all $w \in \Min_a^0[w_i]$ is equivalent to $\m',w' \models \psi$ for all $w' \in \{w_1', w_3'\}$, and as $\Min_a^0[w_i'] = \{w_1', w_3'\}$ this concludes this case.
						\item $n = 1$: Since $\Min_a^1[w_i] = \{w_1, w_2\}$ we have that $\m, w \models \psi$ for all $w \in \{w_1, w_2\}$ $\xLeftrightarrow[]{\text{(IH)}}$ $\m', w' \models \psi$ for all $w' \in \{w_1'
						, w_2'\}$. Using $(\ast)$ this is equivalent to $\m',w' \models \psi$ for all $w' \in \{w_1', w_2', w_3'\}$. By this argument and the fact that $\Min_a^1[w_i'] = \{w_1', w_2', w_3'\}$, we can conclude $\m, w \models \psi$ for all $w \in \Min_a^1[w_i]$ $\Leftrightarrow$ $\m', w' \models \psi$ for all $w' \in \Min_a^1[w_i']$ as required.
						\item $n \geq 2$: We have that $\Min_a^m[w_i] = \{w_1, w_2, w_3\}$ and $\Min_a^m[w_i'] = \{w_1', w_2', w_3'\}$, hence this is exactly as the case of $\phi = K \psi$.
					\end{itemize}
				
				\medskip \noindent
				We have that $\m, w_i \models B_a^n \psi$ iff $\m, w \models \psi$ for all $w \in \Min_a^n[w_i]$. Applying Claim 2 this is equivalent to $\m', w' \models \psi$ for all $w' \in \Min_a^n[w_i']$ iff $\m', w_i' \models B_a^n \psi$, thereby completing the final case of the induction step. It follows that $(\m,w_3) \equiv^{CD}_{\props'} (\m', w_3')$ as required.
			\end{proof}
		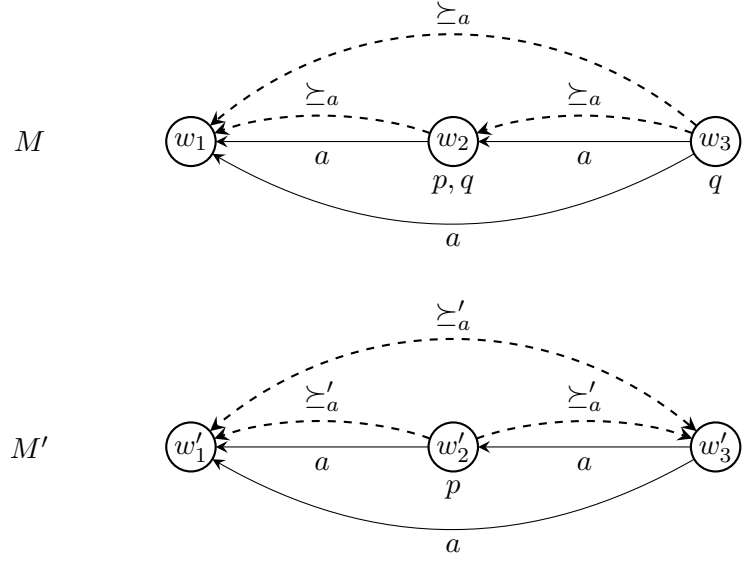
\begin{figure}[tb]
			\centering
			\def\worldX{2.8cm}
			\def\worldY{3.4cm}
			\begin{tikzpicture}
				\node[world] (w1) at (0,0) {$w_1$}
					node[lbl, below = of w1] {};
				\node[world, right = \worldX of w1] (w2) {$w_2$}
					node[lbl, below = of w2] {$p,q$};
				\node[world, right = \worldX of w2] (w3) {$w_3$}
					node[lbl, below = of w3] {$q$};
			
				\node[left of = w1, xshift=-2cm]{$\m$};
				
				\node[world, below = \worldY of w1] (v1)  {$w_1'$}
					node[lbl, below = of v1] {};
				\node[world, right = \worldX of v1] (v2) {$w_2'$}
					node[lbl, below = of v2] {$p$};
				\node[world, right = \worldX of v2] (v3) {$w_3'$}
					node[lbl, below = of v3] {};
					
				\node[left of = v1, xshift=-2cm]{$\m'$};
				
				\path[link] (w3) -- node[below]{$a$} (w2);
				\path[link] (w2) -- node[below]{$a$} (w1);
				\draw[link] (w3) to [out = 210, in = -30, looseness = 1.0] node[below]{$a$} (w1);
				
				\path[link] (v3) -- node[below]{$a$} (v2);
				\path[link] (v2) -- node[below]{$a$} (v1);
				\draw[link] (v3) to [out = 210, in = -30, looseness = 1.0] node[below]{$a$} (v1);
				
				\draw[thick, dashed, link] (w2) to [out = 160, in = 20, looseness = 0.8] node[above]{$\succeq_a$} (w1);
				\draw[thick, dashed, link] (w3) to [out = 160, in = 20, looseness = 0.8] node[above]{$\succeq_a$} (w2);
				\draw[thick, dashed, link] (w3) to [out = 140, in = 40, looseness = 1.0] node[above]{$\succeq_a$} (w1);
				
				\draw[thick, dashed, link] (v2) to [out = 160, in = 20, looseness = 0.8] node[above]{$\succeq_a'$} (v1);
				\draw[thick, dashed, link] (v2) to [out = 20, in = 160, looseness = 0.8] node[above]{$\succeq_a'$} (v3);
				\draw[thick, dashed, ke-ke] (v3) to [out = 140, in = 40, looseness = 1.0] node[above]{$\succeq_a'$} (v1);

			\end{tikzpicture}
			\caption{Two single-agent plausibility models and their normal plausibility relations (dashed arrows). As usual reflexive arrows are omitted. }
			\label{expressivity:fig:modal-equiv-for-lc}
		\end{figure}
		\begin{proposition} \label{expressivity:proposition:safe-beat-conditional}
			$\lang^{S} \not \leqq \lang^{CD}$.
			\end{proposition}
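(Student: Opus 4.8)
The plan is to exploit Lemma~\ref{expressivity:lemma:conditional-not-distinguish} directly: it already supplies a pair of pointed models, namely $(\m, w_3)$ and $(\m', w_3')$ of Figure~\ref{expressivity:fig:modal-equiv-for-lc}, which agree on every $\lang^{CD}_{\props'}$-formula. Following the modus operandi laid out after Definition~\ref{expressivity:definition:expressivity}, it therefore suffices to produce a single $\lang^S_{\props'}$-formula $\phi$ that \emph{does} separate these two models. Since they are $\lang^{CD}_{\props'}$-equivalent, every $\phi' \in \lang^{CD}_{\props'}$ takes the same truth value at both points, so no such $\phi'$ can be equivalent to $\phi$; this yields $\lang^S \not\leqq \lang^{CD}$.

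The natural candidate is $\phi = \Box_a \neg p$. To verify that it separates the models I would first read off the normal plausibility relations from the dashed arrows in Figure~\ref{expressivity:fig:modal-equiv-for-lc} (these are exactly the $\succeq_a$ used in the semantics of $\Box_a$). In $\m$ the largest autobisimulation is the identity, since $w_1$, $w_2$, $w_3$ carry pairwise distinct valuations over $\props$; hence $\succeq_a$ coincides with $\geq_a$ and forms the strict chain $w_3 \succ_a w_2 \succ_a w_1$, so that $w_3 \succeq_a v$ for every $v \in \{w_1, w_2, w_3\}$. As $p \in V(w_2)$, we have $\m, w_2 \not\models \neg p$ and therefore $\m, w_3 \not\models \Box_a \neg p$. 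In $\m'$, by contrast, $w_1'$ and $w_3'$ are bisimilar, so normalisation makes them equiplausible, $w_1' \simeq_a' w_3'$, while $w_2'$ drops to the least-plausible layer, giving $w_2' \succ_a' w_1'$ and $w_2' \succ_a' w_3'$. Consequently the only worlds $v$ with $w_3' \succeq_a' v$ are $w_1'$ and $w_3'$, and both satisfy $\neg p$; thus $\m', w_3' \models \Box_a \neg p$.

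Combining the two computations, $\phi = \Box_a \neg p$ holds at $(\m', w_3')$ but fails at $(\m, w_3)$, whereas Lemma~\ref{expressivity:lemma:conditional-not-distinguish} guarantees $(\m, w_3) \equiv^{CD}_{\props'} (\m', w_3')$, i.e.\ these points satisfy exactly the same $\lang^{CD}_{\props'}$-formulas. Hence for every $\phi' \in \lang^{CD}_{\props'}$ we have $\m, w_3 \models \phi' \Leftrightarrow \m', w_3' \models \phi'$, so $\phi \not\equiv \phi'$ for all such $\phi'$, establishing $\lang^S \not\leqq \lang^{CD}$.

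I expect the only real subtlety---rather than an obstacle---to be the bookkeeping around the two distinct plausibility relations. The separating behaviour of $\Box_a \neg p$ hinges entirely on using the \emph{normal} relation $\succeq_a$ rather than the raw $\geq_a$: it is precisely the collapse of the bisimilar pair $w_1', w_3'$ in $\m'$ that removes $w_2'$ from the safe-belief reach of $w_3'$, while no such collapse happens in $\m$. One must also keep in mind that the formula lives over $\props' = \props \setminus \{q\}$, which is harmless here since $\neg p$ does not mention $q$ and the valuations of $p$ at the relevant worlds are unaffected by deleting $q$.
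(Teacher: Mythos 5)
Your proposal is correct and is essentially the paper's own proof: the paper uses the same models from Figure~\ref{expressivity:fig:modal-equiv-for-lc}, the same Lemma~\ref{expressivity:lemma:conditional-not-distinguish}, and the dual separating formula $\Diamond_a p$ in place of your $\Box_a \neg p$, with identical computations of $\succeq_a$ and $\succeq_a'$. The only step to make fully explicit is the quantifier order: given an \emph{arbitrary} $\phi' \in \lang^{CD}_{\props}$, one first chooses $q \neq p$ not occurring in $\phi'$ (possible since $\phi'$ is finite and $\props$ is countably infinite), so that $\phi' \in \lang^{CD}_{\props'}$ and the lemma applies to the models built from that $q$ --- your write-up fixes $q$ and $\props'$ up front and only gestures at this in the closing remark.
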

			\begin{proof}
				Consider the formula $\Diamond_a p$ of $\lang^{S}$ with $p \in \props$, and take some arbitrary formula $\phi_{CD} \in \lang_\props^{CD}$. As $\phi_{CD}$ is finite and $\props$ is countably infinite, there will be \emph{some} $q \neq p$ not occurring in $\phi_{CD}$. Letting $\props' = \props \setminus \{q\}$ this means that $\phi_{CD} \in \lang_{\props'}^{CD}$. This choice of $p$ and $q$ can always be made, and consequently there also exists models $\m$ and $\m'$ as given in Figure \ref{expressivity:fig:modal-equiv-for-lc}. The largest bisimulation on $\m$ is the identity as no two worlds have the same valuation. At the same time $\{(w_1', w_1'), (w_1', w_3'), (w_2', w_2'), (w_3’,w_1’), (w_3', w_3')\}$ is the largest bisimulation on $\m'$. This gives rise to the normal plausibility relations $\succeq_a$ (for $\m$) and $\succeq_a'$ (for $\m'$) depicted in Figure \ref{expressivity:fig:modal-equiv-for-lc} using dashed edges. 
				
				Since $w_3 \succeq_a w_2$ and $\m,w_2 \models p$ it follows that $\m,w_3 \models \Diamond_a p$. Furthermore we have that the image of $w_3'$ under $\succeq_a'$ is $\{w_1', w_3'\}$. This means that there is no $v' \in W'$ s.t. $w_3' \succeq_a' v'$ and $\m', v' \models p$, and consequently $\m', w_3' \not \models \Diamond_a p$. At the same time we have by Lemma \ref{expressivity:lemma:conditional-not-distinguish} that $\m, w_3 \models \phi_{CD}$ iff $\m', w_3' \models \phi_{CD}$. Therefore using the formula $\Diamond_a p$ of $\lang^{S}$, for any formula of $\phi_{CD} \in \lang^{CD}$ there are models which $\Diamond_a p$ distinguishes but $\phi_{CD}$ does not, and so $\Diamond_a p \not \equiv \phi_{CD}$. Consequently we have $\lang^{S} \not \leqq \lang^{CD}$ as required.
			\end{proof}
		To further elaborate on this result, what is really being put to use here is the ability of the safe belief modality to (at least in part) talk about propositional symbols that do not occur in a formula. This is an effect of the derived relation $\succeq_a$ depending on the largest bisimulation.

	\subsection{Expressivity of Degrees of Belief} \label{expressivity:section:degrees-of-belief}
		We have now settled that safe belief is at least as expressive as conditional belief, and further that the combination of the conditional belief modality and the degrees of belief modality does not allow us to express the safe belief modality. A hasty conclusion would be that the safe belief modality is the one modality to rule them all, but this is not so. In fact $\lang^{S}$ (equivalent to $\lang^{CS}$ cf. Corollary \ref{expressivity:corollary:conditional-loses-safe}) falls short when it comes to expressing degrees of belief, which we now continue to prove.
		\begin{lemma} \label{expressivity:lemma:conditionalsafe-not-distinguish}
			Let $p,q$ be distinct symbols in $\props$, and let $\m = (W, \geq, V)$ and $\m' = (W', \geq', V')$ denote the two plausibility models presented in Figure \ref{expressivity:fig:modal-equiv-for-lcs}. Then for $\props' = \props \setminus \{q\}$ we have that $(\m,x_1) \equiv^{S}_{\props'} (\m', x')$.
				\end{lemma}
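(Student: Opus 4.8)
The plan is to mirror the structure of the proof of Lemma~\ref{expressivity:lemma:conditional-not-distinguish}, proving a stronger statement by induction on $\phi$. First I would fix an explicit correspondence between the worlds of $\m$ and those of $\m'$ (pairing $x_1$ with $x'$ and matching the remaining worlds of the two figures according to their valuations and their positions in the plausibility orders), and then show that any two corresponding worlds satisfy exactly the same formulas of $\lang^S_{\props'}$. Since $q \notin \props'$, the only propositional atoms available are ones on which corresponding worlds agree by construction, so the base case is immediate, while the negation and conjunction cases follow directly from the induction hypothesis.

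Before treating the modal cases I would compute the two normal plausibility relations $\succeq_a$ and $\succeq_a'$ explicitly. This is the step that actually drives the argument: because $\Box_a$ is interpreted using the derived relation $\succeq_a$ rather than $\geq_a$, I must first determine the largest autobisimulations on $\m$ and on $\m'$, identifying which worlds are modally equivalent and hence collapsed into equiplausible worlds by the normal relation, exactly as the earlier lemma relied on $w_1'$ and $w_3'$ being bisimilar. With these relations in hand the $K_a$-case is routine: each epistemic class $[x]_a$ is the whole domain, the corresponding classes in $\m$ and $\m'$ match under the correspondence, and applying the induction hypothesis to each element gives agreement on $K_a\psi$.

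The crux is the $\Box_a$-case. Here $\m, x \models \Box_a \psi$ holds iff $\m, v \models \psi$ for every $v$ with $x \succeq_a v$, so I need a claim---the analogue of Claim~1 of Lemma~\ref{expressivity:lemma:conditional-not-distinguish}---asserting that the $\succeq_a$-downset of a world $x$ and the $\succeq_a'$-downset of its partner realise exactly the same modal-equivalence types. Concretely I would prove, in both directions, that for every $v$ with $x \succeq_a v$ there is a $v'$ with $x' \succeq_a' v'$ related to $v$ under the correspondence, and vice versa, freely passing between bisimilarity and modal equivalence via Corollary~\ref{expressivity:corollary:bisim-implies-modal-equiv-cds}. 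The reason safe belief fails to see the degree structure is precisely that the two downsets agree \emph{as sets of modal-equivalence classes} even though they are layered differently by $\Min^n_a$; since $\Box_a$ quantifies over the whole downset, it is blind to that layering. Once this downset-correspondence claim is established, applying the induction hypothesis to each matched pair $v,v'$ yields $\m,x \models \Box_a\psi$ iff $\m',x' \models \Box_a\psi$, completing the induction and hence the lemma.

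The main obstacle I anticipate is exactly the explicit computation of $\succeq_a$ and $\succeq_a'$ from the largest autobisimulations, together with the verification that the two normal-relation downsets match modal-type for modal-type. Bisimilar worlds must be located first, and care is needed because a single world in one model may correspond to several equiplausible worlds in the other, so the downset claim has to be proved in both directions to accommodate this many-to-one matching; this is the subtle point that distinguishes the safe-belief argument from the simpler index-by-index correspondence used in Lemma~\ref{expressivity:lemma:conditional-not-distinguish}.
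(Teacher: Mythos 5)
Your proposal is correct and follows essentially the same route as the paper: strengthen the statement to cover the pairs $(x_1,x')$, $(x_2,x')$, $(y,y')$, compute the largest autobisimulations (which here are the identities, so $\succeq_a = \geq_a$ and $\succeq_a' = \geq_a'$), and handle $\Box_a$ by matching the $\succeq_a$-downsets world-for-world under the correspondence before applying the induction hypothesis. The paper simply writes out the three resulting chains of equivalences directly rather than isolating your ``downset correspondence'' as a named claim, but the content is the same, and your observation that the downsets agree up to the correspondence while being layered differently by $\Min^n_a$ is exactly the point of the construction.
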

			\begin{proof}
				We will show the following stronger version of this lemma: For $i \in \{1,2\}: $ $(\m,x_i) \equiv^{S}_{\props'} (\m', x')$ and $(\m,y) \equiv^{S}_{\props'} (\m', y')$. We proceed by induction on $\phi \in \lang^{S}_{\props'}$, showing that:
				\begin{align}
					\text{ for } i \in \{1,2\}: \m,x_i \models \phi \text{ iff } \m', x' \models \phi \qquad \text{ and }  \qquad \m,y \models \phi \text{ iff } \m', y' \models \phi \label{expressivity:lemma:conditionalsafe-not-distinguish:property}
				\end{align}
				For the base case we have $\phi = r$ for some $r \in \props \setminus \{q\}$. Because $r \neq q$ it is clear that $r \in V(x_1)$ iff $r \in V'(x')$. Since we also have $V(x_2) = V'(x')$ and $V'(y) = V(y')$ this completes the base case. The cases of negation and conjunction are readily established using the induction hypothesis, and $\phi = K_a \psi$ is shown just as we did in the proof of Lemma \ref{expressivity:lemma:conditional-not-distinguish}. Before proceeding we recall that $\agents = \{a\}$ and note that for any $w \in W$ we have $[w]_a = \{x_1,x_2,y\}$, as well as $[w']_a = \{x',y'\}$ for any $w' \in W'$. Moreover, the largest bisimulation on $\m$ and $\m'$ respectively is the identity relation, meaning that $\geq_a = \succeq_a$ and $\geq_a' = \succeq_a'$. For the case of $\phi = \Box_a \psi$ we can therefore argue as follows.
				\begin{align*}
					&\m,x_1 \models \Box_a \psi \Leftrightarrow \m, x_1 \models \psi \xLeftrightarrow[]{\text{(IH)}} \m', x' \models \psi \Leftrightarrow \m', x' \models \Box_a \psi \\
					&\m, x_2 \models \Box_a \psi \Leftrightarrow ( \forall i \in \{1,2\}: \m, x_i \models \psi ) \xLeftrightarrow[]{\text{(IH)}} \m', x' \models \psi \Leftrightarrow \m', x' \models \Box_a \psi \\
					&\m, y \models \Box_a \psi \Leftrightarrow ( \forall w \in W: \m, w \models \psi ) \xLeftrightarrow[]{\text{(IH)}} ( \forall w' \in W': \m', w' \models \psi ) \Leftrightarrow \m', y' \models \Box_a \psi
				\end{align*}
				In fact the last line is essentially the case of $K_a \psi$, as the image of $y$ under $\succeq_a$ is $W$ (and $W'$ is the image of $y'$ under $\succeq_a'$). This concludes our proof by induction, shows (\ref{expressivity:lemma:conditionalsafe-not-distinguish:property}) and allows us to conclude that $(\m,x_1) \equiv^{S}_{\props'} (\m', x')$.
			\end{proof}
		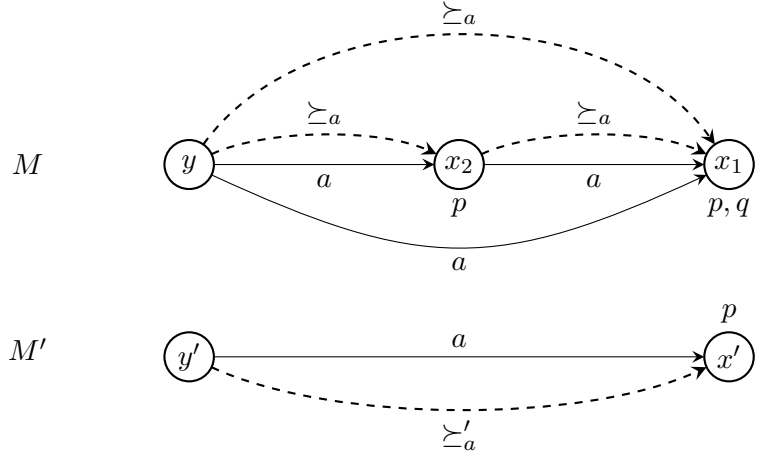
\begin{figure}[tb]
			\def\worldX{2.9cm}
			\def\sepY{1.9cm}
			\centering
			\begin{tikzpicture}
				\node[world] (w1) {$x_1$}
					node[lbl, below = of w1] {$p, q$};
				\node[world, left = \worldX of w1] (w2) {$x_2$}
					node[lbl, below = of w2] {$p$};
				\node[world, left = \worldX of w2] (w3) {$y$}
					node[lbl, below = of w3] {};
				\node[left of = w3, xshift=-2cm] (mlbl) {$\m$};
				
				\path[link] (w3) -- node[below] {$a$} (w2);
				\path[link] (w2) -- node[below] {$a$} (w1);
				\path[link] (w3) to [out = -25, in = 205, looseness = 1.2] node[below]{$a$} (w1);
				
				\draw[thick, dashed, link] (w2) to [out = 25, in = 155, looseness = 0.7] node[above]{$\succeq_a$} (w1);
				\draw[thick, dashed, link] (w3) to [out = 25, in = 155, looseness = 0.7] node[above]{$\succeq_a$} (w2);
				\draw[thick, dashed, link] (w3) to [out = 55, in = 125, looseness = 0.9] node[above]{$\succeq_a$} (w1);
				
				\node[world, below = \sepY of w1] (v1) {$x'$}
					node[lbl, above = of v1] {$p$};
				\node[world, below = \sepY of w3] (v2) {$y'$}
					node[lbl, above = of v2] {};
				\node[below = \sepY of mlbl]{$\m'$};
				
				\path[link] (v2) -- node[above] {$a$} (v1);
				\draw[thick, dashed, link] (v2) to [out = -25, in = 205, looseness = 0.7] node[below]{$\succeq_a'$} (v1);
				
			\end{tikzpicture}
			\caption{Two single-agent plausibility models and their normal plausibility relations (dashed arrows). As usual reflexive arrows are omitted. }
			\label{expressivity:fig:modal-equiv-for-lcs}
		\end{figure}
		\begin{proposition} \label{expressivity:proposition:degrees-beat-safe}
			$\lang^{D} \not \leqq \lang^{S}$.
			\end{proposition}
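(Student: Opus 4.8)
The plan is to follow the same modus operandi as in the proof of Proposition~\ref{expressivity:proposition:safe-beat-conditional}: exhibit a single formula of $\lang^D$ that cannot be matched by any formula of $\lang^S$, exploiting the $\lang^S$-indistinguishability already established in Lemma~\ref{expressivity:lemma:conditionalsafe-not-distinguish}. The witnessing formula will be the degree-of-belief formula $B_a^1 p$, whose role is to detect the number of plausibility layers: the model $\m$ of Figure~\ref{expressivity:fig:modal-equiv-for-lcs} has three distinct layers $x_1 <_a x_2 <_a y$, whereas $\m'$ has only two, $x' <_a' y'$, and $B_a^1 p$ is precisely sensitive to this difference.

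Concretely, I would take $B_a^1 p \in \lang^D$ with $p \in \props$, and let $\phi_S \in \lang^S = \lang^S_\props$ be arbitrary. Since $\phi_S$ is finite while $\props$ is countably infinite, there is some $q \neq p$ not occurring in $\phi_S$; setting $\props' = \props \setminus \{q\}$ gives $\phi_S \in \lang^S_{\props'}$, and the models $\m, \m'$ of Figure~\ref{expressivity:fig:modal-equiv-for-lcs} are then available exactly as in Lemma~\ref{expressivity:lemma:conditionalsafe-not-distinguish}.

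The computational core is to verify that $B_a^1 p$ separates $(\m, x_1)$ from $(\m', x')$. Because the largest autobisimulation on each model is the identity, we have $\succeq_a = {\geq_a}$ and $\succeq_a' = {\geq_a'}$, so the belief spheres can be read directly off the depicted orders. In $\m$ one computes $\Min^0_a [x_1]_a = \{x_1\}$ and hence $\Min^1_a [x_1]_a = \{x_1, x_2\}$; since $p \in V(x_1) \cap V(x_2)$, this yields $\m, x_1 \models B_a^1 p$. In $\m'$ one has $\Min^0_a [x']_a = \{x'\}$ and hence $\Min^1_a [x']_a = \{x', y'\}$, and since $p \notin V'(y')$ we get $\m', x' \not\models B_a^1 p$. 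On the other hand, Lemma~\ref{expressivity:lemma:conditionalsafe-not-distinguish} gives $(\m, x_1) \equiv^S_{\props'} (\m', x')$, so $\m, x_1 \models \phi_S$ iff $\m', x' \models \phi_S$. Therefore $B_a^1 p \not\equiv \phi_S$, and since $\phi_S$ was arbitrary, $\lang^D \not\leqq \lang^S$.

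I do not expect any genuine obstacle here: all of the conceptual heavy lifting---the $\lang^S$-modal equivalence of the two pointed models over $\props'$---is already packaged in Lemma~\ref{expressivity:lemma:conditionalsafe-not-distinguish}. The only point requiring care is to unwind the definition of $\Min^n_a$ with respect to the normal relation $\succeq_a$ rather than $\geq_a$, and to record explicitly that the two coincide because each model is its own bisimulation contraction; this is precisely what guarantees that the extra plausibility layer present in $\m$ but absent in $\m'$ remains visible to $B_a^1$, while being invisible to the $\succeq_a$-based safe belief modality over $\props'$.
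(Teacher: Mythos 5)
Your proposal is correct and coincides with the paper's own proof: the same witnessing formula $B_a^1 p$, the same models from Figure~\ref{expressivity:fig:modal-equiv-for-lcs}, the same appeal to Lemma~\ref{expressivity:lemma:conditionalsafe-not-distinguish} after choosing $q$ outside $\phi_S$, and the same sphere computations $\Min_a^1[x_1]_a = \{x_1,x_2\}$ versus $\Min_a^1[x']_a = \{x',y'\}$. No gaps.
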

			\begin{proof}
				Consider the formula $B_a^1 p \in \lang^{D}$ with $p \in \props$, and additionally take any formula $\phi_{S} \in \lang_\props^{S}$. As $\phi_{S}$ is finite and $\props$ is countably infinite, there will be \emph{some} $q \neq p$ which does not occur in $\phi_{S}$. With $\props' = \props \setminus \{q\}$ we therefore have $\phi_{S} \in \lang_{\props'}^{S}$. As we can always make such a choice of $p$ and $q$, this means that there always exists models $(\m, x_1)$, $(\m',x')$ of the form given in Figure \ref{expressivity:fig:modal-equiv-for-lcs}. 
				
				As in the proof of Lemma \ref{expressivity:lemma:conditionalsafe-not-distinguish} the largest bisimulation on $\m$ and $\m'$ is the identity and so $\Min_a^1[x_1]_a = \{x_1, x_2\}$ and $\Min_a^1[x']_a = \{x', y'\}$. Consequently $\m,x_1 \models B_a^1 p$ whereas $\m',x' \not \models B_a^1 p$. Since $\phi_{S} \in \lang_{\props'}^{S}$ it follows from Lemma \ref{expressivity:lemma:conditionalsafe-not-distinguish} that $\m, x \models \phi_{S}$ iff $\m', x' \models \phi_{S}$. What this proves is that using the formula $B_a^1 p$ of $\lang^{D}$, no matter the choice of formula $\phi_{S}$ of $\lang^{S}$ there will be models which $B_a^1 p$ distinguishes but $\phi_{S}$ does not, hence $B_a^1 p \not \equiv \phi_{S}$. From this follows $\lang^{D} \not \leqq \lang^{S}$ as required. 
			\end{proof}
		We find that this result is quite surprising. Again it is a consequence of our use of the largest bisimulation when defining our semantics. The purpose of $x_1$ in model $\m$ (which is otherwise identical to $\m'$) is to inject an additional belief sphere without adding any factual content from $\props'$, as that could allow the safe belief formula $\phi_S$ to distinguish $x_1$ from $x_2$. 

		At this point it might seem as if all hope was lost for the conditional belief modality, however our final direct result somewhat rebuilds the reputation of this hard-pressed modality. To this end we define for any $k \in \Naturals$ the language $\lang^{Dk}$, which contains every formula of $\lang^D$ for which if $B^n_a \phi$ occurs then $n \leq k$. In other words formulas of $\lang^{Dk}$ talk about belief to at most degree $k$, which comes in handy as we investigate the relative expressive power of $\lang^D$ and $\lang^C$.		
		
		\def\nnn{N^k}
		
		\begin{lemma}\label{expressivity:lemma:degrees-not-distinguish}
			Let $k \in \Naturals$ be given, and let $(\m^k, w_0)$ and $(\nnn, w_0')$ denote the two plausibility models presented in Figure \ref{expressivity:fig:conditional-beat-degrees}. Then we have that $(\m^k, w_0)$ and $(\nnn, w_0')$ are modally equivalent in $\lang^{Dk}$.
			\end{lemma}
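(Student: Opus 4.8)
The plan is to mirror the inductive strategy used for Lemmas~\ref{expressivity:lemma:conditional-not-distinguish} and~\ref{expressivity:lemma:conditionalsafe-not-distinguish}, but with an induction hypothesis strong enough to survive the degree-of-belief modality. Rather than relating only the points $w_0$ and $w_0'$, I would first fix the largest autobisimulations on $\m^k$ and on $\nnn$; these pin down the normal plausibility relations $\succeq_a$ and hence \emph{all} the belief spheres $\Min^n_a[\cdot]_a$. I would then set up an explicit matching of the worlds of $\m^k$ with those of $\nnn$, grouping each model's worlds by their belief layer $E^n_a$, and prove the strengthened claim by induction on $\phi \in \lang^{Dk}$: for every matched pair $(u,u')$, $\m^k, u \models \phi$ iff $\nnn, u' \models \phi$. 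The lemma is then the instance of this claim for $(w_0, w_0')$.

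The base case for propositional symbols is immediate, since the valuations of the two models agree on every symbol relevant to them. The Boolean cases follow directly from the induction hypothesis, and the case $\phi = K_a\psi$ is handled exactly as in the earlier lemmas: the single agent's equivalence class is the whole domain in each model, so $K_a\psi$ reduces to ``$\psi$ holds at every world'', which the matching plus the induction hypothesis closes.

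The crux is $\phi = B^n_a\psi$ with $n \le k$. Here I would compute the spheres $\Min^n_a[w_0]_a$ in $\m^k$ and $\Min^n_a[w_0']_a$ in $\nnn$ from the relations $\succeq_a$ fixed above, and show that these two spheres correspond under the world-matching, i.e.\ every world of the one sphere is paired with a world of the other (and hence, by the induction hypothesis, $\psi$-equivalent to it). Because $n \le k$, both spheres lie entirely inside the initial segment of the two chains on which the layer structure of $\m^k$ and $\nnn$ coincides layer-by-layer; this is exactly where the bound $k$ is used. As in the earlier lemmas I expect to need a $(\ast)$-style auxiliary fact — that worlds identified by the largest autobisimulation are, by Corollary~\ref{expressivity:corollary:bisim-implies-modal-equiv-cds}, modally equivalent and so agree on $\psi$ — to reconcile corresponding layers that may be built from different numbers of concrete worlds in the two models. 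Given the sphere correspondence, $\m^k, w_0 \models B^n_a\psi$ iff $\psi$ holds throughout $\Min^n_a[w_0]_a$ iff (by the induction hypothesis and the matching) $\psi$ holds throughout $\Min^n_a[w_0']_a$ iff $\nnn, w_0' \models B^n_a\psi$.

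The main obstacle will be the bookkeeping in this last case: correctly reading off $\succeq_a$, and hence each $\Min^n_a$, from the largest autobisimulation in each model, and verifying that for \emph{every} $n \le k$ the two $n$-spheres genuinely correspond under the matching. The whole construction is engineered so that $\m^k$ and $\nnn$ differ only in their behaviour at layer $k+1$ and beyond — invisible to $\lang^{Dk}$ but, as the subsequent expressivity result will exploit, detectable by a conditional-belief formula. The induction must therefore be arranged to invoke $n \le k$ at precisely the step where the two chains would otherwise diverge.
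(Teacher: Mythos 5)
Your proposal is correct and follows essentially the same route as the paper: a strengthened induction hypothesis over an explicit matching of worlds ($w_i \leftrightarrow w_i'$, $x \leftrightarrow x'$, $y \leftrightarrow y'$), with the $B^n_a$ case closed by observing that for $n \le k$ both $n$-spheres equal the initial segments $\{w_0,\dots,w_n\}$ and $\{w_0',\dots,w_n'\}$, so the swapped tail worlds never enter. The only superfluous element is the anticipated $(\ast)$-style reconciliation of layers: since no two worlds in either model share a valuation, the largest autobisimulation is the identity, every layer is a singleton, and no such auxiliary fact is needed.
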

			\begin{proof}
				We prove a stronger version of this lemma, namely that 
				$(\m^k, w_i)\equiv^{Dk} (\nnn, w_i')$ for $0 \leq i \leq k$, 
				$(\m^k, x) \equiv^{Dk} (\nnn, x')$ and 
				$(\m^k, y) \equiv^{Dk} (\nnn, y')$.
				
				Key to this proof is the fact that $x$ (resp. $y$) has the same valuation as $x'$ (resp. $y'$), and that $x$ is more plausible than $y$ whereas $y'$ is more plausible than $x'$. We proceed by induction on $\phi \in \lang^{Dk}$. In the base case $\phi$ is a propositional symbol, and so as the valuation of each $w_i$ matches that of $w_i'$ ($0 \leq i \leq k$), $x$ matches $x'$ and $y$ matches $y'$ this completes the base case. The cases of negation and conjunction readily follow using the induction hypothesis, and for $\phi = K_a \psi$ the argument is essentially that used in the proof of Lemma \ref{expressivity:lemma:conditional-not-distinguish}.

				Lastly we consider $\phi = B_a^j \psi$ for any $0 \leq j \leq k$, and recall that this is sufficient as $\phi \in \lang_\props^{Dk}$. As neither model contains two worlds with the same valuation, the largest autobisimulation on either model is the identity, and so both models are normal. With the epistemic relation of agent $a$ being total, we have for all $w \in W$ that $\Min_a^{j} [w]_a = \{w_0, \ldots, w_j\}$ and similarly for all $w' \in W'$ that $\Min_a^{j} [w']_a = \{w_0', \ldots, w_j'\}$. We therefore have
				\begin{align*}
					\forall w \in W: \m^k, w \models B_a^j \psi \Leftrightarrow \forall v \in \{w_0, \ldots, w_j\} &: \m^k, v \models \psi \xLeftrightarrow[]{\text{(IH)}} \\
					\forall v' \in \{w_0', \ldots, w_j'\}&: \nnn, v' \models \psi \Leftrightarrow \ \forall w' \in W' : \nnn,w' \models B_a^j \psi
				\end{align*}
				as required. Observe that we can apply the induction hypothesis since $j \leq k$, and that importantly $x$, $y$ are not in $\Min_a^{j}[w]_a$, and $x'$, $y'$ are not in $\Min_a^{j}[w']_a$. Thus we have shown that $(\m^k, w_0) \equiv^{Dk} (\nnn, w_0')$ thereby completing the proof.
			\end{proof}		
		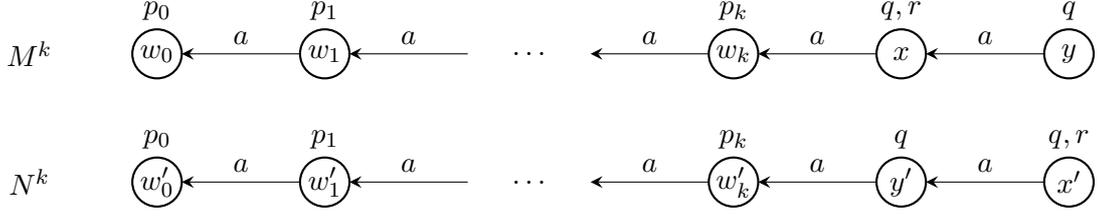
\begin{figure}[tb]
			\centering
			\def\worldX{\textwidth/10}
			\begin{tikzpicture}
				\begin{scope}[xshift = 0cm] 
					\node[world] (w1) {$w_0$}
						node[lbl, above = of w1] {$p_0$};
					\node[world, right = \worldX of w1] (w2) {$w_1$}
						node[lbl, above = of w2] {$p_1$};
					\node[right = \worldX of w2, minimum width=1.6cm] (wdots) {$\ldots$}
						node[lbl, above = of wdots] {};
					\node[world, right = \worldX of wdots] (wk) {$w_k$}
						node[lbl, above = of wk] {$p_k$};
					\node[world, right = \worldX of wk] (x) {$x$}
						node[lbl, above = of x] {$q,r$};
					\node[world, right = \worldX of x] (y) {$y$}
						node[lbl, above = of y] {$q$};
						
					\path[link] (x)  -- node[above] {$a$} (wk);
					\path[link] (y)  -- node[above] {$a$} (x);
					\path[link] (wk) -- node[above] {$a$} (wdots);
					\path[link] (wdots) -- node[above] {$a$} (w2);
					\path[link] (w2) -- node[above] {$a$} (w1);
					\node[left of = w1, xshift=-\worldX]{$\m^k$};
				\end{scope}
				
				\begin{scope}[yshift = -1.7cm] 
					\node[world] (w1) {$w_0'$}
						node[lbl, above = of w1] {$p_0$};
					\node[world, right = \worldX of w1] (w2) {$w_1'$}
						node[lbl, above = of w2] {$p_1$};
					\node[right = \worldX of w2, minimum width=1.6cm] (wdots) {$\ldots$}
						node[lbl, above = of wdots] {};
					\node[world, right = \worldX of wdots] (wk) {$w_k'$}
						node[lbl, above = of wk] {$p_k$};
						
					\node[world, right = \worldX of wk] (x) {$y'$}
						node[lbl, above = of x] {$q$};
					\node[world, right = \worldX of x] (y) {$x'$}
						node[lbl, above = of y] {$q,r$};
						
					\path[link] (x) -- node[above] {$a$} (wk);
					\path[link] (y) -- node[above] {$a$} (x);
					\path[link] (wk) -- node[above] {$a$} (wdots);
					\path[link] (wdots) -- node[above] {$a$} (w2);
					\path[link] (w2) -- node[above] {$a$} (w1);
					\node[left of = w1, xshift=-\worldX]{$\nnn$};
				\end{scope}
			\end{tikzpicture}
			\caption{Two single-agent plausibility models. We've omitted reflexive arrows and for the sake of readability also some transitive arrows. }
			\label{expressivity:fig:conditional-beat-degrees}
		\end{figure}
		\begin{proposition} \label{expressivity:proposition:conditional-beat-degrees}
			$\lang^{C} \not \leqq \lang^{D}$.
				\end{proposition}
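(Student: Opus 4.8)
The plan is to follow the paper's standard modus operandi for separations: I will exhibit one fixed formula of $\lang^C$ that no formula of $\lang^D$ can match, using the model families $\m^k$ and $N^k$ of Figure~\ref{expressivity:fig:conditional-beat-degrees} together with Lemma~\ref{expressivity:lemma:degrees-not-distinguish}. The witness I would take is the conditional belief $B_a^q r \in \lang^C$ (``believe $r$ on condition $q$''). The guiding intuition is that conditioning on $q$ jumps directly to the most plausible $q$-worlds, no matter how many $q$-free belief spheres lie beneath them, whereas any single degree modality $B_a^j$ can only inspect spheres up to the fixed index $j$ and is therefore blind to the $q$-worlds once the chain of $p_i$-worlds is made long enough.

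First I would check that $B_a^q r$ separates $(\m^k, w_0)$ from $(N^k, w_0')$ for every $k$. Since $a$'s epistemic relation is total, $[w_0]_a$ is the whole domain, and the only $q$-worlds in it are $x$ and $y$. In $\m^k$ we have $y \geq_a x$ but $x \not\geq_a y$, so $x$ is strictly the more plausible $q$-world and $\Min_a(\llbracket q \rrbracket_{\m^k} \cap [w_0]_a) = \{x\}$; as $x \models r$, this gives $\m^k, w_0 \models B_a^q r$. In $N^k$ the plausibility of the two $q$-worlds is reversed, so $\Min_a(\llbracket q \rrbracket_{N^k} \cap [w_0']_a) = \{y'\}$, and since $y' \not\models r$ we get $N^k, w_0' \not\models B_a^q r$.

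Next I would dispatch the universal quantifier over $\lang^D$. Any $\phi' \in \lang^D$ is a finite formula, so only finitely many degree modalities $B_a^n$ occur in it; taking $k$ to be the largest such index gives $\phi' \in \lang^{Dk}$. Lemma~\ref{expressivity:lemma:degrees-not-distinguish} then yields $(\m^k, w_0) \equiv^{Dk} (N^k, w_0')$, so $\m^k, w_0 \models \phi'$ iff $N^k, w_0' \models \phi'$; that is, $\phi'$ fails to distinguish exactly the two pointed models that $B_a^q r$ does distinguish. Hence $B_a^q r \not\equiv \phi'$ for every $\phi' \in \lang^D$, which is precisely $\lang^C \not\leqq \lang^D$.

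The one genuinely delicate point --- and the reason the argument is organised this way rather than around a single fixed pair of models --- is that the witnessing models are allowed to depend on $\phi'$: any fixed finite chain would only defeat the modalities $B_a^j$ with $j$ below its length. The uniformity that still lets me use a single $\lang^C$ formula comes from the fact that $B_a^q r$ evaluates the same way across the entire family $\{(\m^k, w_0)\}_k$ --- which is exactly the property Lemma~\ref{expressivity:lemma:degrees-not-distinguish} was set up to supply. Everything else is bookkeeping, so I expect no further obstacle.
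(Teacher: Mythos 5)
Your proposal is correct and follows essentially the same route as the paper's own proof: the same witness formula $B_a^q r$, the same model family $(\m^k, w_0)$ and $(N^k, w_0')$ chosen after fixing $k$ to bound the degree indices in $\phi'$, and the same appeal to Lemma~\ref{expressivity:lemma:degrees-not-distinguish}. The only detail the paper adds that you leave implicit is the remark that $\props$ being countably infinite guarantees the symbols $p_0,\dots,p_k,q,r$ are available for every $k$, which is routine.
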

			\begin{proof}
				Consider now $B_a^q r$ belonging to $\lang^{C}$ and any formula $\phi_D \in \lang^{D}$. Since $\phi_D$ is finite we can choose some $k \in \Naturals$ such that $\phi_D \in \lang^{Dk}$. Because $p_0, \ldots, p_k, q, r$ are taken from the countably infinite set $\props$, no matter the choice of $k$ there exists pointed plausibility models $(\m^k,w_0)$ and $(\nnn,w_0')$ as presented in Figure \ref{expressivity:fig:conditional-beat-degrees}. 
								
				To determine the truth of $B_a^q r$ in $(\m^k,w_0)$ and $(\nnn,w_0')$ respectively we point out that $\II{q}_{\m^k} = \{x,y\}$ and $\II{q}_{\nnn} = \{y',x'\}$. Therefore we have that $\Min_a(\II{q}_{\m^k} \cap [w_0]_a) = \{x\}$ and $\Min_a(\II{q}_{\nnn} \cap [w_0']_a) = \{y'\}$. Since $\m^k, x \models r$ and $\nnn, y' \not \models r$, it follows $\m^k,w_0 \models B_a^q r$ whereas $\nnn,w_0' \not \models B_a^q r$. By Lemma \ref{expressivity:lemma:degrees-not-distinguish} we have that $\m^k, w_0 \models \phi_D$ iff $\nnn, w_0' \models \phi_D$. With this we have shown that taking the formula $B_a^q r$ of $\lang^{C}$, there are for any $\phi_D \in \lang^D$ pointed plausibility models which $B_a^q r$ distinguishes but $\phi_D$ does not, thus $B_a^q r \not \equiv \phi_D$. It follows that $\lang^{C} \not \leqq \lang^{D}$ as required.
			\end{proof}
		We have now shown that the degrees of belief modality cannot capture the conditional belief modality. What this really showcases is that for $B_a^\psi \phi$, $\psi$ potentially enables us to talk about worlds of arbitrarily large degree. This sets it apart from the degrees of belief modality, and causes for instance a difference in expressivity.
		
	\subsection{Mapping Out the Relative Expressive Power}\label{expressivity:section:summary}

		\pgfkeysdef{/lbl/sublang}{(a)}		
		\pgfkeysdef{/lbl/trans}{(b)}
		\pgfkeysdef{/lbl/con1}{(c)}
		\pgfkeysdef{/lbl/con2}{(d)}
		\pgfkeysdef{/lbl/con3}{(e)}
		
\edef\weirdnecessity{(a), (b), (d) and (e) }
		\pgfkeys{/exp prop/.code={\pgfkeys{/lbl/#1}}
		}
		
		With the results we have now shown, we are in fact able to determine the relative expressivity of all our languages. To this end we make use of the following facts related to expressivity, where we let $\lang$, $\lang'$ and $\lang''$ denote logical languages interpreted on the same class of models: 
			\begin{enumerate}
				\item[\pgfkeys{/lbl/sublang}] If $\lang$ is a sublanguage of $\lang'$ then $\lang \leqq \lang'$.
				\item[\pgfkeys{/lbl/trans}] If $\lang \leqq \lang'$ and $\lang' \leqq \lang''$ then $\lang \leqq \lang''$ (transitivity).
				\item[\pgfkeys{/lbl/con1}] If $\lang \equiv \lang'$ then $\lang \leqq \lang''$ iff $\lang' \leqq \lang''$ (transitivity consequence 1).
				\item[\pgfkeys{/lbl/con2}] If $\lang \leqq \lang'$ and $\lang'' \not \leqq \lang'$ then $\lang'' \not \leqq \lang$ (transitivity consequence 2).
				\item[\pgfkeys{/lbl/con3}] If $\lang \leqq \lang'$ and $\lang \not \leqq \lang''$ then $\lang' \not \leqq \lang''$ (transitivity consequence 3).
			\end{enumerate}
		
\newcounter{ctEqn}
\setcounter{ctEqn}{0}
\newcommand{\lblkey}[3][\empty]{%
	\stepcounter{ctEqn}%
	\pgfkeysedef{/lbl/#2/no}{\arabic{ctEqn}}%
	\pgfkeysdef{/lbl/#2/eqn}{#3}%
	\IfCharInString{Y}{#1}{\pgfkeyssetvalue{/lbl/#2/highlight}{def}}{}
}
\pgfkeys{/no/.code={%
	\pgfkeysifdefined{/lbl/#1/highlight}{
		\textbf{(\pgfkeys{/lbl/#1/no})}
	}{
		(\pgfkeys{/lbl/#1/no})
	}
}}
\pgfkeys{/eqn/.code={$\pgfkeys{/lbl/#1/eqn}$}}
\pgfkeys{/arg/.code={$\pgfkeys{/lbl/#1/eqn}$ from (\pgfkeys{/lbl/#1/no}})}

\lblkey{C-leqq-S}{\lang^C \leqq \lang^S}
\lblkey{S-not-leqq-CD}{\lang^S \not \leqq \lang^{CD}}
\lblkey{S-not-leqq-C}{\lang^S \not \leqq \lang^C}
\lblkey[Y]{C-<-S}{\lang^C < \lang^S}

\lblkey{D-not-leqq-S}{\lang^D \not \leqq \lang^S}
\lblkey{S-not-leqq-D}{\lang^S \not \leqq \lang^D}
\lblkey[Y]{D-bowtie-S}{\lang^D \bowtie \lang^S}

\lblkey{C-not-leqq-D}{\lang^{C} \not \leqq \lang^{D}}
\lblkey{D-not-leqq-C}{\lang^D \not \leqq \lang^C}
\lblkey[Y]{C-bowtie-D}{\lang^C \bowtie \lang^D}

\lblkey{CD-not-leqq-C}{\lang^{CD} \not \leqq \lang^{C}}
\lblkey[Y]{C-<-CD}{\lang^C < \lang^{CD}}
\lblkey{CD-not-leqq-D}{\lang^{CD} \not \leqq \lang^{D}}
\lblkey[Y]{D-<-CD}{\lang^D < \lang^{CD}}
\lblkey{CD-not-leqq-S}{\lang^{CD} \not \leqq \lang^S}
\lblkey[Y]{S-bowtie-CD}{\lang^S \bowtie \lang^{CD}}

\lblkey{CDS-leqq-DS}{\lang^{CDS} \leqq \lang^{DS}}
\lblkey{C-leqq-DS}{\lang^{C} \leqq \lang^{DS}}
\lblkey{DS-not-leqq-C}{\lang^{DS} \not \leqq \lang^{C}}
\lblkey[Y]{C-<-DS}{\lang^{C} < \lang^{DS}}

\lblkey{DS-not-leqq-D}{\lang^{DS} \not \leqq \lang^{D}}
\lblkey[Y]{D-<-DS}{\lang^{D} < \lang^{DS}}

\lblkey{CD-leqq-DS}{\lang^{CD} \leqq \lang^{DS}}
\lblkey{DS-not-leqq-S}{\lang^{DS} \not \leqq \lang^S}
\lblkey[Y]{S-<-DS}{\lang^{S} < \lang^{DS}}

\lblkey{CD-not-leqq-DS}{\lang^{CD} \not \leqq \lang^{DS}}
\lblkey[Y]{CD-<-DS}{\lang^{CD} < \lang^{DS}}

		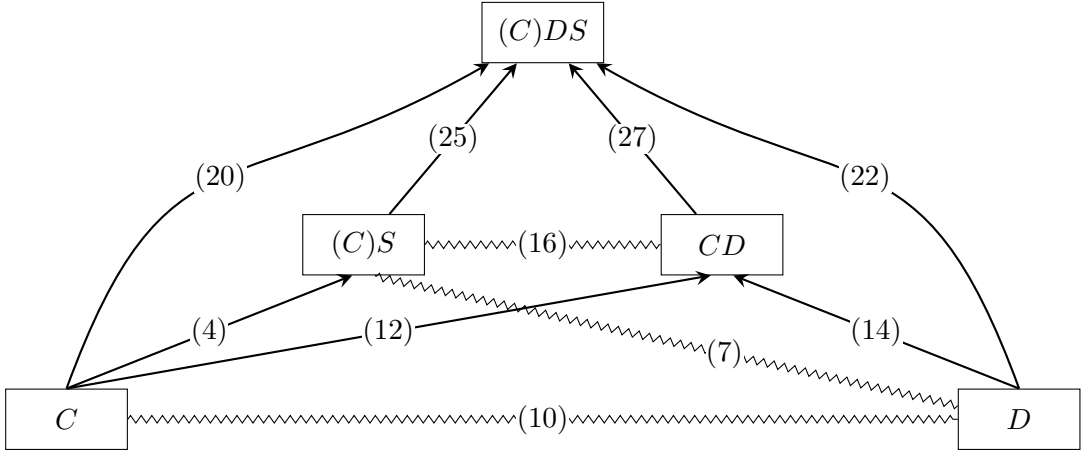
\begin{figure}[tb]
			\centering
			\def\langSepY{1.5cm}
			\begin{tikzpicture}
				\tikzset{lang/.style={draw, black, rectangle, inner sep = 0.19cm, minimum height=.8cm, minimum width=1.6cm}}
				\tikzset{derivstep/.style={fill=white, inner sep = 0, outer sep = 0}}
				\tikzset{more expressive/.style={draw,-ke, line width = .8}}
				\tikzset{incomparable/.style={draw, decorate, decoration={zigzag, amplitude=.05cm, segment length=0.14cm}}}
				
				\node[lang] (s) {$(C)S$};
				\node[lang, right = 3.1cm of s] (cd) {$CD$};
				\node[lang, above = 2cm of cd] at ($(cd.north)!0.5!(s.north)$) (ds) {$(C)DS$};
				\node[lang, below left = 1.5cm and 3.1cm of s.south] (c) {$C$};
				\node[lang, below right = 1.5cm and 3.1cm of cd.south] (d) {$D$};
				
				\draw[more expressive] (c.north) -- node[derivstep, pos=.5] {(\pgfkeys{/lbl/C-<-S/no})} (s.250);
				\draw[more expressive] (c.north) -- node[derivstep, pos=.5] {(\pgfkeys{/lbl/C-<-CD/no})} (cd.250);
				\draw[more expressive] (c.north) to [out = 70, in = 210, looseness = 1.4] node[derivstep, pos=.5] {(\pgfkeys{/lbl/C-<-DS/no})} (ds);
				\draw[more expressive] (d.north) -- node[derivstep, pos=.5] {(\pgfkeys{/lbl/D-<-CD/no})} (cd.290);
				\draw[more expressive] (d.north) to [out = 110, in = 330, looseness = 1.4] node[derivstep, pos=.5] {(\pgfkeys{/lbl/D-<-DS/no})} (ds);
				\draw[more expressive] (s) -- node[derivstep, pos=.5] {(\pgfkeys{/lbl/S-<-DS/no})} (ds);
				\draw[more expressive] (cd) -- node[derivstep, pos=.5] {(\pgfkeys{/lbl/CD-<-DS/no})} (ds);
				
				\draw[incomparable] (c) -- node[derivstep, pos=.5] {(\pgfkeys{/lbl/C-bowtie-D/no})} (d);
				\draw[incomparable] (d) -- node[derivstep, pos=.4] {(\pgfkeys{/lbl/D-bowtie-S/no})} (s.290);
				\draw[incomparable] (s) -- node[derivstep, pos=.5] {(\pgfkeys{/lbl/S-bowtie-CD/no})} (cd);
			\end{tikzpicture}
			\caption{Summary of expressivity results for our logics. An arrow $X \xrightarrow{\quad\ } X'$ indicates that $\lang^{X'}$ is more expressive than $\lang^{X}$. A zig-zag line between $X$ and $X'$ means that $\lang^X$ and $\lang^{X'}$ are incomparable. The abbreviation $(C)DS$ means both $CDS$ and $DS$, and similarly for $C(S)$ indicating both $CS$ and $S$. Labels on arrows and zig-zag lines signify from where the result is taken in Table \ref{expressivity:table:derivation-of-full-map}.
			}
			\label{expressivity:fig:summary}
		\end{figure}

\begin{table}[h!]
\centering
\renewcommand{\arraystretch}{1.4}
{\footnotesize
\begin{tabularx}{0.95\textwidth}{|c|l|X|}
  \hline
  \textbf{\#} & \textbf{Result} & \textbf{Inferred from} \\
  \hline
  \hline
  \pgfkeys{/no = C-leqq-S}  		& \pgfkeys{/eqn = C-leqq-S} 		& Corollary \ref{expressivity:corollary:conditional-loses-safe}.  \\
  \pgfkeys{/no = S-not-leqq-CD}  	& \pgfkeys{/eqn = S-not-leqq-CD}	& Proposition \ref{expressivity:proposition:safe-beat-conditional}. \\
  \pgfkeys{/no = S-not-leqq-C}  	& \pgfkeys{/eqn = S-not-leqq-C} 	& $\lang^C \leqq \lang^{CD}$ from \pgfkeys{/exp prop = sublang}, \pgfkeys{/arg = S-not-leqq-CD} and applying \pgfkeys{/exp prop = con2}. \\
  \pgfkeys{/no = C-<-S}  			& \pgfkeys{/eqn = C-<-S} 			& \pgfkeys{/arg = C-leqq-S}, \pgfkeys{/arg = S-not-leqq-C}. \\
  \hline
  \pgfkeys{/no = D-not-leqq-S}  	& \pgfkeys{/eqn = D-not-leqq-S} 	& Proposition \ref{expressivity:proposition:degrees-beat-safe}. \\
  \pgfkeys{/no = S-not-leqq-D}  	& \pgfkeys{/eqn = S-not-leqq-D} 	& $\lang^D \leqq \lang^{CD}$ from \pgfkeys{/exp prop = sublang}, \pgfkeys{/arg = S-not-leqq-CD} and applying \pgfkeys{/exp prop = con2}. \\
  \pgfkeys{/no = D-bowtie-S}  		& \pgfkeys{/eqn = D-bowtie-S} 		& \pgfkeys{/arg = D-not-leqq-S}, \pgfkeys{/arg = S-not-leqq-D}. \\
  \hline
  \pgfkeys{/no = C-not-leqq-D}  	& \pgfkeys{/eqn = C-not-leqq-D} 	& Proposition \ref{expressivity:proposition:conditional-beat-degrees}.  \\
  \pgfkeys{/no = D-not-leqq-C}  	& \pgfkeys{/eqn = D-not-leqq-C} 	& \pgfkeys{/arg = C-leqq-S}, \pgfkeys{/arg = D-not-leqq-S} and applying \pgfkeys{/exp prop = con2}. \\
  \pgfkeys{/no = C-bowtie-D}  		& \pgfkeys{/eqn = C-bowtie-D} 		& \pgfkeys{/arg = C-not-leqq-D}, \pgfkeys{/arg = D-not-leqq-C}. \\
  \hline
  \pgfkeys{/no = CD-not-leqq-C}  	& \pgfkeys{/eqn = CD-not-leqq-C} 	& $\lang^{D} \leqq \lang^{CD}$ from \pgfkeys{/exp prop = sublang}, \pgfkeys{/arg = D-not-leqq-C} and applying \pgfkeys{/exp prop = con3}. \\ 
  \pgfkeys{/no = C-<-CD}  			& \pgfkeys{/eqn = C-<-CD} 			& $\lang^{C} \leqq \lang^{CD}$ from \pgfkeys{/exp prop = sublang}, \pgfkeys{/arg = CD-not-leqq-D}. \\  
  \hline
  \pgfkeys{/no = CD-not-leqq-D}  	& \pgfkeys{/eqn = CD-not-leqq-D} 	& $\lang^{C} \leqq \lang^{CD}$ from \pgfkeys{/exp prop = sublang}, \pgfkeys{/arg = C-not-leqq-D} and applying \pgfkeys{/exp prop = con3}. \\ 
  \pgfkeys{/no = D-<-CD}  			& \pgfkeys{/eqn = D-<-CD} 			& $\lang^{D} \leqq \lang^{CD}$ from \pgfkeys{/exp prop = sublang}, \pgfkeys{/arg = CD-not-leqq-D}. \\  
  \hline
  \pgfkeys{/no = CD-not-leqq-S}  	& \pgfkeys{/eqn = CD-not-leqq-S} 	& $\lang^{D} \leqq \lang^{CD}$ from \pgfkeys{/exp prop = sublang}, \pgfkeys{/arg = D-not-leqq-S} and applying \pgfkeys{/exp prop = con3}. \\
  \pgfkeys{/no = S-bowtie-CD}  		& \pgfkeys{/eqn = S-bowtie-CD} 		& \pgfkeys{/arg = S-not-leqq-CD}, \pgfkeys{/arg = CD-not-leqq-S}. \\
  \hline
  \pgfkeys{/no = CDS-leqq-DS}  		& \pgfkeys{/eqn = CDS-leqq-DS} 		& $\lang^{CDS} \equiv \lang^{DS}$ from Corollary \ref{expressivity:corollary:conditional-loses-safe} and Definition \ref{expressivity:definition:expressivity}.\\
 \pgfkeys{/no = C-leqq-DS}  		& \pgfkeys{/eqn = C-leqq-DS} 		& $\lang^{C} \leqq \lang^{CDS}$ from \pgfkeys{/exp prop = sublang}, \pgfkeys{/arg = CDS-leqq-DS} and applying \pgfkeys{/exp prop = trans}.\\
  \pgfkeys{/no = DS-not-leqq-C}  	& \pgfkeys{/eqn = DS-not-leqq-C} 	& $\lang^{S} \leqq \lang^{DS}$ from \pgfkeys{/exp prop = sublang}, \pgfkeys{/arg = S-not-leqq-C} and applying \pgfkeys{/exp prop = con3}. \\
  \pgfkeys{/no = C-<-DS}  			& \pgfkeys{/eqn = C-<-DS} 			& \pgfkeys{/arg = C-leqq-DS}, \pgfkeys{/arg = DS-not-leqq-C}. \\  
  \hline
  \pgfkeys{/no = DS-not-leqq-D}  	& \pgfkeys{/eqn = DS-not-leqq-D} 	& $\lang^{S} \leqq \lang^{DS}$ from \pgfkeys{/exp prop = sublang}, \pgfkeys{/arg = S-not-leqq-D} and applying \pgfkeys{/exp prop = con3}. \\
  \pgfkeys{/no = D-<-DS}  			& \pgfkeys{/eqn = D-<-DS} 			& $\lang^{D} \leqq \lang^{DS}$ from \pgfkeys{/exp prop = sublang}, \pgfkeys{/arg = DS-not-leqq-D}. \\  
  \hline
  \pgfkeys{/no = CD-leqq-DS}  		& \pgfkeys{/eqn = CD-leqq-DS} 		& $\lang^{CD} \leqq \lang^{CDS}$ from \pgfkeys{/exp prop = sublang}, \pgfkeys{/arg = CDS-leqq-DS} and applying \pgfkeys{/exp prop = trans}.\\
  \pgfkeys{/no = DS-not-leqq-S}  	& \pgfkeys{/eqn = DS-not-leqq-S} 	& \pgfkeys{/arg = CD-leqq-DS}, \pgfkeys{/arg = CD-not-leqq-S} and applying \pgfkeys{/exp prop = con3}. \\
  \pgfkeys{/no = S-<-DS}  			& \pgfkeys{/eqn = S-<-DS} 			& $\lang^S \leqq \lang^{DS}$ from \pgfkeys{/exp prop = sublang}, \pgfkeys{/arg = DS-not-leqq-S}. \\
  \hline
  \pgfkeys{/no = CD-not-leqq-DS}  	& \pgfkeys{/eqn = CD-not-leqq-DS} 	& $\lang^{S} \leqq \lang^{DS}$ from \pgfkeys{/exp prop = sublang}, \pgfkeys{/arg = S-not-leqq-CD} and applying \pgfkeys{/exp prop = con3}. \\
  \pgfkeys{/no = CD-<-DS}  		& \pgfkeys{/eqn = CD-<-DS} 				& \pgfkeys{/arg = CD-leqq-DS}, \pgfkeys{/arg = CD-not-leqq-DS}. \\
  \hline
\end{tabularx} 
} 
\caption{Derivation of the relative expressivity of our logics. Each of the references \weirdnecessity refer to properties stated at the start of Section \ref{expressivity:section:summary}. Bold faced numbers are illustrated in Figure \ref{expressivity:fig:summary}.}
\label{expressivity:table:derivation-of-full-map}
\end{table}
	Now comes our main result, which shows the relative expressivity between the logic of conditional belief, the logic of degrees of belief and the logic of safe belief.
	\begin{theorem}\label{expressivity:theorem:lc-ld-ls}
		$\lang^C < \lang^S$, $\lang^C \bowtie \lang^D$, $\lang^D \bowtie \lang^S$.
	\end{theorem}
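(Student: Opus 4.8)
The plan is to derive the theorem purely by combining three separation results already established with the elementary closure properties of $\leqq$ listed at the start of this section. The genuine content is all in hand: $\lang^C \leqq \lang^S$ (Corollary~\ref{expressivity:corollary:conditional-loses-safe}), $\lang^S \not\leqq \lang^{CD}$ (Proposition~\ref{expressivity:proposition:safe-beat-conditional}), $\lang^D \not\leqq \lang^S$ (Proposition~\ref{expressivity:proposition:degrees-beat-safe}), and $\lang^C \not\leqq \lang^D$ (Proposition~\ref{expressivity:proposition:conditional-beat-degrees}). The only additional ingredient is that $\lang^C$ and $\lang^D$ are each sublanguages of $\lang^{CD}$, so property (a) gives $\lang^C \leqq \lang^{CD}$ and $\lang^D \leqq \lang^{CD}$. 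Since the hard work of exhibiting distinguishing formulas and indistinguishable model pairs has already been carried out in those three propositions, the remaining argument is pure bookkeeping.

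First I would treat $\lang^C < \lang^S$. The inequality $\lang^C \leqq \lang^S$ is immediate from Corollary~\ref{expressivity:corollary:conditional-loses-safe}, so it remains to establish strictness, i.e.\ $\lang^S \not\leqq \lang^C$. I would apply property (d) (``if $\lang \leqq \lang'$ and $\lang'' \not\leqq \lang'$ then $\lang'' \not\leqq \lang$'') with $\lang = \lang^C$, $\lang' = \lang^{CD}$ and $\lang'' = \lang^S$: from $\lang^C \leqq \lang^{CD}$ and $\lang^S \not\leqq \lang^{CD}$ we obtain $\lang^S \not\leqq \lang^C$. Together with $\lang^C \leqq \lang^S$ this yields $\lang^C < \lang^S$.

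Next I would handle the two incomparabilities, each of which needs two non-inequalities. For $\lang^D \bowtie \lang^S$: one direction, $\lang^D \not\leqq \lang^S$, is exactly Proposition~\ref{expressivity:proposition:degrees-beat-safe}; for the other, $\lang^S \not\leqq \lang^D$, apply property (d) with $\lang = \lang^D$, $\lang' = \lang^{CD}$, $\lang'' = \lang^S$, using $\lang^D \leqq \lang^{CD}$ and $\lang^S \not\leqq \lang^{CD}$. For $\lang^C \bowtie \lang^D$: one direction, $\lang^C \not\leqq \lang^D$, is Proposition~\ref{expressivity:proposition:conditional-beat-degrees}; for the other, $\lang^D \not\leqq \lang^C$, apply property (d) with $\lang = \lang^C$, $\lang' = \lang^S$, $\lang'' = \lang^D$, using $\lang^C \leqq \lang^S$ (Corollary~\ref{expressivity:corollary:conditional-loses-safe}) and $\lang^D \not\leqq \lang^S$ (Proposition~\ref{expressivity:proposition:degrees-beat-safe}). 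Assembling the three statements completes the proof. There is no real obstacle at this stage: the argument is a routine application of the transitivity-style consequences, and any difficulty lives entirely in the three underlying propositions (and, behind them, in the bisimulation-characterisation machinery of Section~\ref{section:corr} that lets the semantics of $\succeq_a$ depend on the largest bisimulation, which is precisely what makes those separations possible).
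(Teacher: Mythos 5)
Your proposal is correct and matches the paper's own derivation: the paper proves this theorem by pointing to entries (4), (7) and (10) of Table \ref{expressivity:table:derivation-of-full-map}, which combine Corollary \ref{expressivity:corollary:conditional-loses-safe} with Propositions \ref{expressivity:proposition:safe-beat-conditional}, \ref{expressivity:proposition:degrees-beat-safe} and \ref{expressivity:proposition:conditional-beat-degrees} via exactly the same applications of properties (a) and (d) that you use. The only (inessential) cosmetic difference is that the paper routes $\lang^S \not\leqq \lang^C$ and $\lang^S \not\leqq \lang^D$ through the same intermediate steps you chose, so your bookkeeping is identical to theirs.
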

		\begin{proof}
			See the derivation of (\pgfkeys{/lbl/C-<-S/no}), (\pgfkeys{/lbl/D-bowtie-S/no}) and (\pgfkeys{/lbl/C-bowtie-D/no}) in Table \ref{expressivity:table:derivation-of-full-map}. 
		\end{proof}
	Beyond showing the above theorem, Table \ref{expressivity:table:derivation-of-full-map} fully accounts for the relative expressivity between $\lang^C$, $\lang^D$, $\lang^S$, $\lang^{CD}$ and $\lang^{DS}$. Finally, using Corollary \ref{expressivity:corollary:conditional-loses-safe} and property \pgfkeys{/lbl/con1} we have that any expressivity result for $\lang^S$ holds for $\lang^{CS}$ and similarly for $\lang^{DS}$ and $\lang^{CDS}$. A more pleasing presentation of these results is found in Figure \ref{expressivity:fig:summary}.

\subsection{Reflection on bisimulation characterisation and expressivity} \label{expressivity:section:discussion}
Our bisimulation characterisation results are:
\[\begin{array}{llll} 
(\m,w) \bisim (\m',w') & \text{ iff } & (\m,w) \equiv^C (\m',w') \hspace{2cm} \ & \text{Theorem \ref{theorem:bisimcharlc}} \\
(\m,w) \bisim (\m',w') & \text{ iff } & (\m,w) \equiv^D (\m',w') & \text{Theorem \ref{theorem:bisimcharld}} \\
(\m,w) \bisim (\m',w') & \text{ iff } & (\m,w) \equiv^S (\m',w') & \text{Theorem \ref{theorem:bisimcharls}}
\end{array} \] In other words, bisimulation corresponds to modal equivalence in all three logics. Our expressivity results can be summarised as (Theorem~\ref{expressivity:theorem:lc-ld-ls}) 
\[\begin{array}{lcl} \lang^C &<& \lang^S \\ \lang^C &\bowtie& \lang^D \\ \lang^D &\bowtie& \lang^S \end{array} \]
The logic of conditional belief is less expressive than the logic of safe belief, the logic of conditional belief and the logic of degrees of belief are incomparable, as are the logic of degrees of belief and the logic of safe belief. 

Our results on bisimulation characterisation suggest that, in some sense, the three logics are the same, whereas our results on expressive power suggest that, in another sense, the three logics are different. It is therefore a good moment to explain how to interpret our results.

The bisimulation characterisation result in Corollary~\ref{expressivity:corollary:bisim-implies-modal-equiv-cds}
says that the information content of a given plausibility model is equally well described in the three logics. 
Now consider an even more specific case: a finite model; and consider a characteristic formula of that model (these can be shown to exist for plausibility models along the lines of \cite{jfak.lonely:2006,hvdetal.simbisim:2012}---where we note that we take models, not pointed models). For a model $M$ this gives us, respectively, formulas $\phi^C_M$, $\phi^D_M$, and $\phi^S_M$. Then the bisimulation characterisation results say that $\phi^C_M$, $\phi^D_M$, and $\phi^S_M$ are all equivalent. Now a characteristic formula is a very special formula with a unique model (modulo bisimilarity). For other formulas that do not have a singleton denotation (again, modulo bisimilarity) in the class of plausibility models, this equivalence cannot be achieved. That is the expressivity result. For example, given that $\lang^C < \lang^S$, there is a safe belief formula that is not equivalent to any conditional belief formula. This formula should then describe a property that has several non-bisimilar models. It is indeed the case that the formula $\Dia_a p$ used in the proof of Proposition \ref{expressivity:proposition:safe-beat-conditional} demonstrating $\lang^C < \lang^S$ has many models! It is tempting to allow ourselves a simplication and to say that the expressivity hierarchy breaks down if we restrict ourselves to formulas with unique models.\footnote{If we consider infinitary versions of the modalities in our logical languages, in other words, common knowledge and common belief modalities, we preserve the bisimulation characterisation results (for a more refined notion of bisimulation) but it is then to be expected that all three logics become equally expressive (oral communication by Tim French).} 

Finally, we must point out that in the publication on single-agent bisimulation \cite[p. 285]{DBLP:conf/ausai/AndersenBDJ13}, we posed the following conjecture: \begin{quote} {\em In an extended version of the paper we are confident that we will prove that the logics of conditional belief and knowledge, of degrees of belief and knowledge, and both with the addition of safe belief are all expressively equivalent. } \end{quote} It therefore seems appropriate to note that we have proved our own confident selves resoundingly wrong!

\section{Comparison and applications} \label{section:comparison}
We compare our bisimulation results to those in Demey's work \cite{demey:2011}, our expressivity results to those obtained in Baltag and Smets' \cite{baltagetal.tlg3:2008}, and finally discuss the relevance of our results for epistemic planning \cite{bolanderetal:2011}.
	
\paragraph*{Bisimulation}
Prior to our work Demey discussed the model theory of plausibility models in great detail in \cite{demey:2011}. Our results add to the valuable original results he obtained. Demey does not consider degrees of belief; he considers knowledge, conditional belief and safe belief. Our plausibility models are what \cite{demey:2011} refers to as uniform and locally connected epistemic plausibility models; he also considers models with fewer restrictions on the plausibility function. But given \cite[Theorem 35]{demey:2011}, these types of models are for all intents and purposes equivalent to ours. The semantics for conditional belief and knowledge are as ours, but his semantics for safe belief is different (namely as in \cite{baltagetal.tlg3:2008}). The difference is that in his case an agent safely believes $\phi$ if $\phi$ is true in all worlds as least as plausible as the current world, whereas in our case it is like that but \emph{in the normalised model}. This choice of semantics has several highly significant implications as we will return to shortly. 

In line with his interpretation of safe belief as a standard modality, Demey's notion of bisimulation for plausibility models is also standard. For example, whereas we require that \begin{quote} [forth$_\geq$] If $v \in W$ and $w \geq_a^\bisrel v$, $\exists v' \in W$ such that $w' \geq_a^\bisrel v'$ and $(v,v') \in \bisrel$, \end{quote} where we recall that $w \geq_a^\bisrel v$ means $\Min_a ([w]_R \inter [w]_a) \geq_a \Min_a([v]_R \inter [v]_a)$, he requires that \begin{quote} [forth$_\geq$] If $v \in W$ and $w \geq_a v$, $\exists v' \in W$ such that $w' \geq_a v'$ and $(v,v') \in \bisrel$. \end{quote} He obtains correspondence for bisimulation and modal equivalence in the logic of safe belief in \cite[Footnote 12 and Theorem 32]{demey:2011}. Our notion of bisimulation is less restrictive, as we will now illustrate by way of the examples in Figure~\ref{comparision:fig:uncontractable}.

	\begin{figure}[tb]
		\def\worldX{1.2cm}
		\centering
		\begin{tikzpicture}
			\node[world, inner sep =.13cm] (w1) {$w_1$}
				node[lbl, above = of w1] {};
			\node[world, inner sep =.13cm, right = \worldX of w1] (w2) {$w_2$}
				node[lbl, above = of w2] {$p$};
			\node[world, inner sep =.13cm, right = \worldX of w2] (w3) {$w_3$}
				node[lbl, above = of w3] {};
			\node[draw=white, inner sep =.1cm, world, right = 1.7*\worldX of w3] (w-mid) {$\cdots$};
			
			\node[world, inner sep =.03cm, right = 1.7*\worldX of w-mid] (wi-1) {$w_{i-1}$}
				node[lbl, above = of wi-1] {$p$};
			\node[world, inner sep =.13cm, right = \worldX of wi-1] (wi) {$w_{i}$}
				node[lbl, above = of wi] {};
			\node[left of = w1, xshift=-1.4cm] (mlbl) {$\m_i$};
			
			\path[link] (w2) -- node[below] {$a$} (w1);
			\path[link] (w3) -- node[below] {$a$} (w2);
			\path[link] ([xshift=-.7*\worldX]w-mid.west) -- node[below] {$a$} (w3);
			\path[link] (wi-1) -- node[below] {$a$} ([xshift=.7*\worldX]w-mid.east);
			\path[link] (wi) -- node[below] {$a$} (wi-1);		
		\end{tikzpicture}
\ \\ \ \\
		\begin{tikzpicture}
			\node[world, inner sep =.13cm] (w1) {$v_1$}
				node[lbl, above = of w1] {};
			\node[world, inner sep =.13cm, right = \worldX of w1] (w2) {$v_2$}
				node[lbl, above = of w2] {$p$};
			\node[left of = w1, xshift=-1.4cm] (mlbl) {$\m_2$};
			\path[link] (w2) -- node[below] {$a$} (w1);
		\end{tikzpicture}
		\caption{According to Demey's notion of bisimulation, model $\m_i$ (above) with alternating $\neg p$ and $p$ worlds is a bisimulation contraction. In this particular case $i$ is odd as $p$ does not hold at $w_i$. According to our notion of bisimulation, all $p$ worlds in model $\m_i$ are bisimilar and also all $\neg p$ worlds. Model $\m_2$ (below) is the contraction.}
		\label{comparision:fig:uncontractable}
	\end{figure}
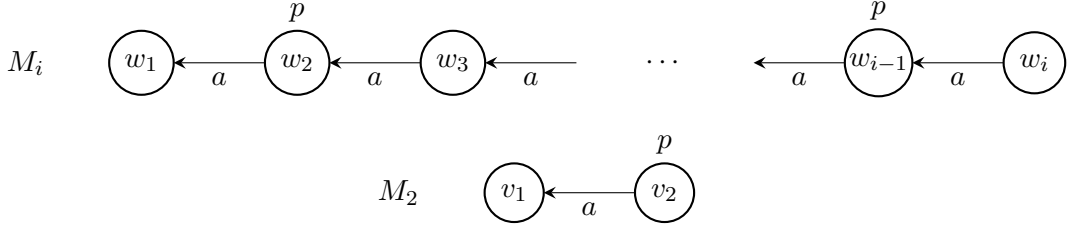
	
	Consider model $\m_i$ in Figure \ref{comparision:fig:uncontractable}. This is a single-agent model on a single proposition $p$ containing $i$ worlds, where the image of a world $w_j$ under $\geq_a$ is $\{w_1, \ldots, w_j\}$. The valuation is such that if the index of a world is even then $p$ holds, and otherwise $p$ does not hold. Now, using Demey's notion of bisimulation entails that the largest autobisimulation on $\m_i$ is the identity, and thus $\m_i$ is a bisimulation contraction. For example, we can find a formula that distinguishes $(\m_i, w_i)$ from $(\m_{i+2}, w_{i+2})$. For safe belief $\Box$ we now have Demey's semantics (see above) $\m,w \models \Box_a \phi$ iff $\m,v \models \phi$ for all $v$ with $w \geq_a v$. We now define $\phi_0 = \top$ and for any natural number $n \geq 1$ we let:
	$$
		\phi_n = \begin{cases}
					\Dia_a( \phi_{n-1} \wedge p ) & \mbox{if } n \mbox{ is even;}\\
					\Dia_a( \phi_{n-1} \wedge \neg p ) & \mbox{if } n \mbox{ is odd;}
				\end{cases}
	$$
	
for example	$$
		\phi_4 = \Dia_a ( \Dia_a ( \Dia_a ( \Dia_a ( \top \wedge \neg p ) \wedge p ) \wedge \neg p) \wedge p ).
	$$
We now have that for any $i \geq 1$, $\m_i, w_i \models \phi_i \wedge \neg \phi_{i+1}$, which makes this a distinguishing formula between $(\m_i, w_i)$ from $(\m_{i+2}, w_{i+2})$. In fact, the semantics of $\Box_a$ allow us to count the number of worlds in $\m_i$. In this sense Demey's logic is immensely expressive. 
	
Again referring to Figure \ref{comparision:fig:uncontractable}, consider $M_3$, the model with a most plausible $\neg p$ world, a less plausible $p$ world and an even less plausible $\neg p$ world. In the logic $L^C$ of conditional belief $w_1$ and $w_3$ of $M_3$ are modally equivalent. Hence they also ought to be bisimilar. But in Demey's notion of bisimilarity they are not. Hence we have a mismatch between modal equivalence and bisimilarity, which is not supposed to happen: it is possible for two worlds to be modally equivalent but not bisimilar. Demey also was aware of this, of course. To remedy the problem one can either strengthen the notion of modal equivalence or weaken the notion of bisimilarity. Demey chose the former (namely by adding the safe belief modality to the conditional belief modality), we chose the latter. Thus we regain the correspondence between bisimilarity and modal equivalence. Baltag and Smets~\cite{baltagetal.tlg3:2008} achieve the same via a different route: they include in the language special propositional symbols, so-called $S$-propositions. The denotation
of an $S$-proposition can be any subset of the domain. This therefore also makes the language much more expressive.

We believe that in particular for application purposes, weakening the notion of bisimulation, as we have done, is preferable over strengthening the logic, as in \cite{baltagetal.tlg3:2008,demey:2011}. This come at the price of a more complex bisimulation definition (and, although we did not investigate this, surely a higher complexity of determining whether two worlds are bisimilar), but, we venture to observe, also a very elegant bisimulation definition given the ingenious use of the bisimulation relation itself in the definition of the forth and back conditions of bisimulation. We consider this one of the highlights of our work.

\paragraph*{Expressivity}

In \cite{baltagetal.tlg3:2008} one finds many original expressivity results. Our results copy those, but also go beyond. We recall Table \ref{expressivity:table:derivation-of-full-map} for the full picture of our results, and the main results of those namely $\lang^C < \lang^S$, $\lang^C \bowtie \lang^D$, and $\lang^D \bowtie \lang^S$. The first, $\lang^C < \lang^S$, is originally found in \cite[page 34, Equation 1.7]{baltagetal.tlg3:2008}, and we obtained it using the same embedding translation. However, it may be worth to point out that in our case this translation still holds for the (in our opinion) more proper bisimulation preserving notion of safe belief. Baltag and Smets' $S$-propositions are arbitrary subsets of the domain, the (unnecessarily) far more expressive notion of safe belief. Baltag and Smets also discuss degrees of belief but do not obtain expressivity results for that, so $\lang^C \bowtie \lang^D$ may be considered novel and interesting. In artificial intelligence, the degrees of belief notion seems more widely in use than the conditional belief notion, so an informed reader had better be aware of the incomparability of both logics and may choose the logic to suit his or her needs. The result that $\lang^D \bowtie \lang^S$ could possibly also be considered unexpected, and therefore valuable. 

\paragraph*{Planning}
An application area of plausibility models is epistemic planning. A consequence of Demey's notion of bisimulation is that even for single-agent models on a finite set of propositions, the set of distinct, contraction-minimal pointed plausibility models is infinite. For example, we recall that in Figure \ref{comparision:fig:uncontractable} any two pointed plausibility models in $\{ (\m_i,w_i) \mid i \in \Naturals \}$ are non-bisimilar. With our notion of bisimulation, there are in the single-agent case only finitely many distinct pointed plausibility models up to bisimulation. This was already reported in \cite{DBLP:conf/ausai/AndersenBDJ13}. Our motivation for this bisimulation investigation was indeed prompted by the application of doxastic logics in planning. 
	
In planning, an agent attempt to find a sequence of action, a plan, that achieves a given goal. A planning problem implicitly represents a state-transition system, where transitions are induced by actions. By exploring this state-space we can reason about actions and synthesise plans. A growing community investigates planning by applying dynamic epistemic logics \cite{bolanderetal:2011,loweetal:2011,andersenetal:2012}, where actions are epistemic actions. Planning with doxastic modalities has also been considered \cite{dontplanfortheunexpected}. This is done by identifying states with (pointed) plausibility models, and the goal with a formula of the doxastic language. Epistemic actions can be public actions, like hard and soft announcements \cite{jfak.jancl:2007}, but also non-public actions, such as event models \cite{baltagetal.tlg3:2008}. 
			
	With the state-space consisting of plausibility models, model theoretic results become pivotal when deciding the plan existence problem. Unlike Demey's approach, our framework leads to a finite state-space in the single-agent case and therefore the single-agent plan existence problem is decidable \cite{bolanderetal:2011}. At the same time we know that even in a purely epistemic setting the multi-agent plan existence problem is undecidable \cite{bolanderetal:2011}. But by placing certain restrictions on the planning problem it is possible to find decidable fragments even in the multi-agent case, for example, event models with propositional preconditions \cite{yu13:maee}. 


\section*{Acknowledgements}
We thank Giovanni Cina and Johannes Marti for productive exchanges of ideas following an ILLC seminar in 2015. We are also in dept to the anonymous reviewers for their thorough reading of the manuscript leading to many helpful comments and suggestions for revisions. Hans van Ditmarsch is also affiliated to IMSc (Institute of Mathematical Sciences), Chennai, as research associate. He acknowledges support from European Research Council grant EPS 313360. Preliminary versions of the results in this paper can be found in the PhD theses of Mikkel Birkegaard Andersen~\cite[Chapter 4]{andersen2015towards} and Martin Holm Jensen~\cite[Chapter 5]{jensenepistemic}.

	\bibliographystyle{plain}
	\bibliography{biblio2013MAB}

\begin{thebibliography}{10}

\bibitem{andersenetal:2012}
M.B. Andersen, T.~Bolander, and M.H. Jensen.
\newblock Conditional epistemic planning.
\newblock In {\em Proc.\ of 13th JELIA}, LNCS 7519, pages 94--106. Springer,
  2012.

\bibitem{dontplanfortheunexpected}
M.B. Andersen, T.~Bolander, and M.H. Jensen.
\newblock Don't plan for the unexpected: Planning based on plausibility models.
\newblock {\em Logique et Analyse}, 2015, To Appear.

\bibitem{andersen2015towards}
Mikkel~Birkegaard Andersen.
\newblock {\em Towards Theory-of-Mind agents using Automated Planning and
  Dynamic Epistemic Logic}.
\newblock PhD thesis, Technical University of Denmark, 2015.

\bibitem{DBLP:conf/ausai/AndersenBDJ13}
Mikkel~Birkegaard Andersen, Thomas Bolander, Hans~P. van Ditmarsch, and
  Martin~Holm Jensen.
\newblock Bisimulation for single-agent plausibility models.
\newblock In Stephen Cranefield and Abhaya Nayak, editors, {\em Australasian
  Conference on Artificial Intelligence}, volume 8272 of {\em Lecture Notes in
  Computer Science}, pages 277--288. Springer, 2013.

\bibitem{aucher:2005a}
G.~Aucher.
\newblock A combined system for update logic and belief revision.
\newblock In {\em Proc.\ of 7th PRIMA}, pages 1--17. Springer, 2005.
\newblock LNAI 3371.

\bibitem{baltagetal.tlg:2008}
A.~Baltag and S.~Smets.
\newblock The logic of conditional doxastic actions.
\newblock In {\em New Perspectives on Games and Interaction}, Texts in Logic
  and Games 4, pages 9--31. Amsterdam University Press, 2008.

\bibitem{baltagetal.tlg3:2008}
A.~Baltag and S.~Smets.
\newblock A qualitative theory of dynamic interactive belief revision.
\newblock In {\em Proc.\ of 7th LOFT}, Texts in Logic and Games 3, pages
  13--60. Amsterdam University Press, 2008.

\bibitem{balt.ea:qual}
Alexandru Baltag and Sonja Smets.
\newblock A qualitative theory of dynamic interactive belief revision.
\newblock In Giacomo Bonanno, Wiebe van~der Hoek, and Michael Wooldridge,
  editors, {\em Logic and the Foundations of Game and Decision Theory (LOFT7)},
  volume~3 of {\em Texts in Logic and Games}, pages 13--60. Amsterdam
  University Press, 2008.

\bibitem{blac.ea:moda}
P.~Blackburn, M.~de~Rijke, and Y.~Venema.
\newblock {\em Modal Logic}, volume~53 of {\em Cambridge Tracts in Theoretical
  Computer Science}.
\newblock Cambridge University Press, Cambridge, UK, 2001.

\bibitem{bolanderetal:2011}
T.~Bolander and M.B. Andersen.
\newblock Epistemic planning for single and multi-agent systems.
\newblock {\em Journal of Applied Non-classical Logics}, 21(1):9--34, 2011.

\bibitem{boutilier1990conditional}
Craig Boutilier.
\newblock Conditional logics of normality as modal systems.
\newblock In {\em AAAI}, volume~90, pages 594--599, 1990.

\bibitem{BritzVarzinczak2013}
K.~Britz and I.~Varzinczak.
\newblock Defeasible modalities.
\newblock In {\em Proc.\ of the 14th TARK}, 2013.

\bibitem{demey:2011}
L.~Demey.
\newblock Some remarks on the model theory of epistemic plausibility models.
\newblock {\em Journal of Applied Non-Classical Logics}, 21(3-4):375--395,
  2011.

\bibitem{fine:1972}
Kit Fine.
\newblock In so many possible worlds.
\newblock {\em Notre Dame Journal of Formal Logic}, 13(4):516--520, 1972.

\bibitem{friedmanetal:1994}
N.~Friedman and J.Y. Halpern.
\newblock A knowledge-based framework for belief change - part i: Foundations.
\newblock In {\em Proc.\ of 5th TARK}, pages 44--64. Morgan Kaufmann, 1994.

\bibitem{grove:1988}
A.~Grove.
\newblock Two modellings for theory change.
\newblock {\em Journal of Philosophical Logic}, 17:157--170, 1988.

\bibitem{halpern:2003}
J.Y. Halpern.
\newblock {\em Reasoning about Uncertainty}.
\newblock MIT Press, Cambridge MA, 2003.

\bibitem{jensenepistemic}
Martin~Holm Jensen.
\newblock {\em Epistemic and Doxastic Planning}.
\newblock PhD thesis, Technical University of Denmark, 2014.

\bibitem{krausetal:1988}
S.~Kraus and D.~Lehmann.
\newblock Knowledge, belief and time.
\newblock {\em Theoretical Computer Science}, 58:155--174, 1988.

\bibitem{klm:1990}
S.~Kraus, D.~Lehmann, and M.~Magidor.
\newblock Nonmonotonic reasoning, preferential models and cumulative logics.
\newblock {\em Artificial Intelligence}, 44:167--207, 1990.

\bibitem{lamarre1991s4}
Philippe Lamarre.
\newblock S4 as the conditional logic of nonmonotonicity.
\newblock {\em KR}, 91:357--367, 1991.

\bibitem{laverny:2006}
N.~Laverny.
\newblock {\em R\'evision, mises \`a jour et planification en logique
  doxastique graduelle}.
\newblock PhD thesis, Institut de Recherche en Informatique de Toulouse (IRIT),
  Toulouse, France, 2006.

\bibitem{lenzen:1978}
W.~Lenzen.
\newblock Recent work in epistemic logic.
\newblock {\em Acta Philosophica Fennica}, 30:1--219, 1978.

\bibitem{lenzen:2003}
W.~Lenzen.
\newblock Knowledge, belief, and subjective probability: outlines of a unified
  system of epistemic/doxastic logic.
\newblock In V.F. Hendricks, K.F. Jorgensen, and S.A. Pedersen, editors, {\em
  Knowledge Contributors}, pages 17--31, Dordrecht, 2003. Kluwer Academic
  Publishers.
\newblock Synthese Library Volume 322.

\bibitem{lewis:1973}
D.K. Lewis.
\newblock {\em Counterfactuals}.
\newblock Harvard University Press, Cambridge (MA), 1973.

\bibitem{loweetal:2011}
B.~L\"{o}we, E.~Pacuit, and A.~Witzel.
\newblock {DEL} planning and some tractable cases.
\newblock In {\em Proc.\ of LORI 3}, pages 179--192. Springer, 2011.

\bibitem{meyeretal:2000}
T.A. Meyer, W.A. Labuschagne, and J.~Heidema.
\newblock Refined epistemic entrenchment.
\newblock {\em Journal of Logic, Language, and Information}, 9:237--259, 2000.

\bibitem{segerberg:1998}
K.~Segerberg.
\newblock Irrevocable belief revision in dynamic doxastic logic.
\newblock {\em Notre Dame Journal of Formal Logic}, 39(3):287--306, 1998.

\bibitem{spohn:1988}
W.~Spohn.
\newblock Ordinal conditional functions: a dynamic theory of epistemic states.
\newblock In W.L. Harper and B.~Skyrms, editors, {\em Causation in Decision,
  Belief Change, and Statistics}, volume~II, pages 105--134, 1988.

\bibitem{stalnaker:1996}
R.~Stalnaker.
\newblock Knowledge, belief and counterfactual reasoning in games.
\newblock {\em Economics and Philosophy}, 12:133--163, 1996.

\bibitem{jfak.lonely:2006}
J.~van Benthem.
\newblock One is a lonely number: on the logic of communication.
\newblock In {\em Logic colloquium 2002. Lecture Notes in Logic, Vol. 27},
  pages 96--129. A.K. Peters, 2006.

\bibitem{jfak.jancl:2007}
J.~van Benthem.
\newblock Dynamic logic of belief revision.
\newblock {\em Journal of Applied Non-Classical Logics}, 17(2):129--155, 2007.

\bibitem{jfak.book:2011}
J.~van Benthem.
\newblock {\em Logical Dynamics of Information and Interaction}.
\newblock Cambridge University Press, 2011.

\bibitem{hoek:1992}
W.~van~der Hoek.
\newblock On the semantics of graded modalities.
\newblock {\em Journal of Applied Non-Classical Logics}, 2(1), 1992.

\bibitem{hoek:1993}
W.~van~der Hoek.
\newblock Systems for knowledge and beliefs.
\newblock {\em Journal of Logic and Computation}, 3(2):173--195, 1993.

\bibitem{hvd.prolegomena:2005}
H.~van Ditmarsch.
\newblock Prolegomena to dynamic logic for belief revision.
\newblock {\em Synthese (Knowledge, Rationality \& Action)}, 147:229--275,
  2005.

\bibitem{hvd.comments:2008}
H.~van Ditmarsch.
\newblock Comments on `{T}he logic of conditional doxastic actions'.
\newblock In {\em New Perspectives on Games and Interaction}, Texts in Logic
  and Games 4, pages 33--44. Amsterdam University Press, 2008.

\bibitem{hvdetal.simbisim:2012}
H.~van Ditmarsch, D.~Fern{\'a}ndez-Duque, and W.~van~der Hoek.
\newblock On the definability of simulation and bisimulation in epistemic
  logic.
\newblock {\em Journal of Logic and Computation}, 2012.
\newblock doi:10.1093/logcom/exs058.

\bibitem{hvdetal.hw:2007}
H.~van Ditmarsch and W.A. Labuschagne.
\newblock My beliefs about your beliefs -- a case study in theory of mind and
  epistemic logic.
\newblock {\em Synthese}, 155:191--209, 2007.

\bibitem{hvdetal.del:2007}
H.~van Ditmarsch, W.~van~der Hoek, and B.~Kooi.
\newblock {\em Dynamic Epistemic Logic}, volume 337 of {\em Synthese Library}.
\newblock Springer, 2007.

\bibitem{yu13:maee}
Quan Yu, Ximing Wen, and Yongmei Liu.
\newblock Multi-agent epistemic explanatory diagnosis via reasoning about
  actions.
\newblock In Francesca Rossi, editor, {\em IJCAI}. IJCAI/AAAI, 2013.

\end{thebibliography}

\end{document}